
\documentclass[journal]{IEEEtran}
%
% If IEEEtran.cls has not been installed into the LaTeX system files,
% manually specify the path to it like:
% \documentclass[journal]{../sty/IEEEtran}

\usepackage{amsmath}
\usepackage{times}
\usepackage{graphicx}
\usepackage{color}
\usepackage{multirow}
\usepackage{subfigure}
\usepackage{stfloats}
\usepackage{graphics}
\usepackage{subfigure}
\usepackage{epstopdf}
\usepackage[figuresright]{rotating}
\usepackage{multicol}
\usepackage{color}
\usepackage{amsmath, amssymb}
\usepackage{amsmath}
\usepackage{amssymb}
\usepackage{amsthm}
\usepackage[ruled, linesnumbered]{algorithm2e}
\usepackage{algorithmic}
\usepackage{setspace}
\usepackage{multirow}
\usepackage{lineno}
\usepackage{array}
\usepackage{rotating}
\usepackage{booktabs}
\usepackage{amsmath}
\newtheorem{theorem}{Theorem}
\newtheorem{lemma}{Lemma}
\newtheorem{definition}{Definition}
\usepackage{algorithm2e,setspace}
 %Use Input in the format of Algorithm
 %UseOutput in the format of Algorithm
\graphicspath{{kbsGraphs/}}
\newtheoremstyle{noparens}%
  {}{}%
  {\itshape}{}%
  {\bfseries}{.}%
  { }%
  {\thmname{#1}\thmnumber{ #2}\mdseries\thmnote{ #3}}
\theoremstyle{noparens}
\newtheorem{theoremNoParens}[theorem]{Theorem}

% Some very useful LaTeX packages include:
% (uncomment the ones you want to load)

% *** MISC UTILITY PACKAGES ***
%
%\usepackage{ifpdf}
% Heiko Oberdiek's ifpdf.sty is very useful if you need conditional
% compilation based on whether the output is pdf or dvi.
% usage:
% \ifpdf
%   % pdf code
% \else
%   % dvi code
% \fi
% The latest version of ifpdf.sty can be obtained from:
% http://www.ctan.org/pkg/ifpdf
% Also, note that IEEEtran.cls V1.7 and later provides a builtin
% \ifCLASSINFOpdf conditional that works the same way.
% When switching from latex to pdflatex and vice-versa, the compiler may
% have to be run twice to clear warning/error messages.

% *** CITATION PACKAGES ***
%
%\usepackage{cite}
% cite.sty was written by Donald Arseneau
% V1.6 and later of IEEEtran pre-defines the format of the cite.sty package
% \cite{} output to follow that of the IEEE. Loading the cite package will
% result in citation numbers being automatically sorted and properly
% "compressed/ranged". e.g., [1], [9], [2], [7], [5], [6] without using
% cite.sty will become [1], [2], [5]--[7], [9] using cite.sty. cite.sty's
% \cite will automatically add leading space, if needed. Use cite.sty's
% noadjust option (cite.sty V3.8 and later) if you want to turn this off
% such as if a citation ever needs to be enclosed in parenthesis.
% cite.sty is already installed on most LaTeX systems. Be sure and use
% version 5.0 (2009-03-20) and later if using hyperref.sty.
% The latest version can be obtained at:
% http://www.ctan.org/pkg/cite
% The documentation is contained in the cite.sty file itself.

% *** GRAPHICS RELATED PACKAGES ***
%
\ifCLASSINFOpdf
  % \usepackage[pdftex]{graphicx}
  % declare the path(s) where your graphic files are
  % \graphicspath{{../pdf/}{../jpeg/}}
  % and their extensions so you won't have to specify these with
  % every instance of \includegraphics
  % \DeclareGraphicsExtensions{.pdf,.jpeg,.png}
\else
  % or other class option (dvipsone, dvipdf, if not using dvips). graphicx
  % will default to the driver specified in the system graphics.cfg if no
  % driver is specified.
  % \usepackage[dvips]{graphicx}
  % declare the path(s) where your graphic files are
  % \graphicspath{{../eps/}}
  % and their extensions so you won't have to specify these with
  % every instance of \includegraphics
  % \DeclareGraphicsExtensions{.eps}
\fi
\hyphenation{op-tical net-works semi-conduc-tor}

\begin{document}
\begin{sloppypar}
%
% paper title
% Titles are generally capitalized except for words such as a, an, and, as,
% at, but, by, for, in, nor, of, on, or, the, to and up, which are usually
% not capitalized unless they are the first or last word of the title.
% Linebreaks \\ can be used within to get better formatting as desired.
% Do not put math or special symbols in the title.
\title{Towards Efficient  Local Causal Structure Learning}
%\title{Dual-representation adaptation autoencoder for  domain adaptation}
%
%
% author names and IEEE memberships
% note positions of commas and nonbreaking spaces ( ~ ) LaTeX will not break
% a structure at a ~ so this keeps an author's name from being broken across
% two lines.
% use \thanks{} to gain access to the first footnote area
% a separate \thanks must be used for each paragraph as LaTeX2e's \thanks
% was not built to handle multiple paragraphs
%

\author{Shuai~Yang,~\IEEEmembership{}
        Hao~Wang,~\IEEEmembership{}
        Kui~Yu,~\IEEEmembership{}
        Fuyuan~Cao,~\IEEEmembership{}
        and~Xindong~Wu~\IEEEmembership{}% <-this % stops a space
    %    and~Xindong~Wu,~\IEEEmembership{Fellow,~IEEE}% <-this % stops a space
\thanks{Manuscript received month day, year; revised month day, year. This work is supported by the National Key Research and Development Program of China (under grant 2020AAA0106100), National Natural Science Foundation of China (under Grant 61876206), and the Open Project Foundation of Intelligent Information Processing Key Laboratory of Shanxi Province (under grant CICIP2020003). }
\thanks{ S. Yang, H. Wang and  K. Yu are with the Key Laboratory of Knowledge Engineering With Big Data of Ministry of Education, Hefei University of Technology, Hefei 230601,
China, also with the School of Computer Science and Information Engineering, Hefei University of Technology, Hefei 230601, China (e-mail: yangs@mail.hfut.edu.cn; \{jsjxwangh,yukui\}@hfut.edu.cn).}% <-this % stops a space
\thanks{ F. Cao is with the School of Computer and Information Technology, Shanxi University, Taiyuan 030006, China (e-mail:  cfy@sxu.edu.cn).}
\thanks{ X. Wu is with the Key Laboratory of Knowledge Engineering With Big Data of Ministry of Education, Hefei University of Technology, Hefei 230601,
China, also with Mininglamp Academy of Sciences, Mininglamp Technology, Beijing, 100084, China (e-mail:  wuxindong@mininglamp.com).}
%\thanks{J. Doe and J. Doe are with Anonymous University.}% <-this % stops a space
}
%This article was recommended by Associate Editor xxx.
%\emph{(Corresponding author: Kui Yu.)}

% note the % following the last \IEEEmembership and also \thanks -
% these prevent an unwanted space from occurring between the last author name
% and the end of the author line. i.e., if you had this:
%
% \author{....lastname \thanks{...} \thanks{...} }
%                     ^------------^------------^----Do not want these spaces!
%
% a space would be appended to the last name and could cause every name on that
% line to be shifted left slightly. This is one of those "LaTeX things". For
% instance, "\textbf{A} \textbf{B}" will typeset as "A B" not "AB". To get
% "AB" then you have to do: "\textbf{A}\textbf{B}"
% \thanks is no different in this regard, so shield the last } of each \thanks
% that ends a line with a % and do not let a space in before the next \thanks.
% Spaces after \IEEEmembership other than the last one are OK (and needed) as
% you are supposed to have spaces between the names. For what it is worth,
% this is a minor point as most people would not even notice if the said evil
% space somehow managed to creep in.

% The paper headers
\markboth{Journal of \LaTeX\ Class Files,~Vol.~*, No.~*, August~year}%
{     \MakeLowercase{\textit{et al.}}: }
% The only time the second header will appear is for the odd numbered pages
% after the title page when using the twoside option.
%
% *** Note that you probably will NOT want to include the author's ***
% *** name in the headers of peer review papers.                   ***
% You can use \ifCLASSOPTIONpeerreview for conditional compilation here if
% you desire.

% If you want to put a publisher's ID mark on the page you can do it like
% this:
%\IEEEpubid{0000--0000/00\$00.00~\copyright~2015 IEEE}
% Remember, if you use this you must call \IEEEpubidadjcol in the second
% column for its text to clear the IEEEpubid mark.

% use for special paper notices
%\IEEEspecialpapernotice{(Invited Paper)}

% make the title area
\maketitle

% As a general rule, do not put math, special symbols or citations
% in the abstract or keywords.
\begin{abstract}
Local causal structure learning aims to discover and  distinguish direct causes (parents) and direct effects (children) of a variable of interest from data. While emerging successes have been made, existing methods need to search a large  space to distinguish direct causes from direct effects of a target variable \emph{T}. To tackle this issue, we propose a novel Efficient Local Causal Structure learning algorithm, named ELCS. Specifically, we first propose the concept of N-structures, then design an efficient Markov Blanket (MB) discovery subroutine to integrate MB learning with N-structures to learn the MB of \emph{T} and simultaneously distinguish  direct causes from direct effects of \emph{T}. With the proposed MB subroutine, ELCS starts from the target variable, sequentially finds MBs of variables connected to the target variable and simultaneously constructs local causal structures over MBs until the direct causes and direct effects of the target variable have been distinguished. Using eight Bayesian networks the extensive experiments have validated that ELCS achieves better accuracy and efficiency than the state-of-the-art algorithms.
\end{abstract}

\begin{IEEEkeywords}
 Bayesian network, Markov Blanket, Local causal structure learning.
\end{IEEEkeywords}
\IEEEpeerreviewmaketitle
\section{Introduction}
\label{sec:introduction}
\IEEEPARstart{C}AUSAL discovery has always been an important goal  in many scientific fields, such as medicine, computer science and bioinformatics \cite{YuTKDD20,CaiQZZH19,Huang0GG19,MarxV19a}. There has been a great deal of recent interest in discovering causal relationships between variables, since it is not only helpful  to reveal the underlying data generating mechanism, but also to improve classification  and prediction  performance in both static and non-static environments  \cite{yu2019multi}. However, in many real-world scenarios, it is difficult to  discover causal relationships between variables since true causality can only be identified using  controlled experimentation \cite{GourevitchBF06}.

\par Statistical approaches are useful in generating testable causal hypotheses which can accelerate the causal discovery process \cite{ChoiCN20}.  Learning a Bayesian network (BN) from observational data is the popular method for causal structure learning and causal inference. The structure of a BN takes the form of a directed acyclic graph (DAG) in which nodes of the DAG represent the variables and edges represent dependence between variables. A DAG implies causal concepts, since they code potential causal relationships between variables: the existence of a directed edge \emph{X}$\rightarrow$\emph{Y} means that \emph{X} is a direct cause of \emph{Y}, and the absence of a directed edge \emph{X}$\rightarrow$\emph{Y} means that \emph{X} cannot be a direct cause of \emph{Y} \cite{maathuis2009}. When a directed edge \emph{X}$\rightarrow$\emph{Y} in a BN indicates that \emph{X} is a direct cause of \emph{Y}, in this case, the BN is known as a causal Bayesian network. Given a set of conditional dependencies from observational data and a corresponding DAG model, we can infer a causal Bayesian network using intervention calculus \cite{pearl2009causality}. Then learning BN structures (i.e. DAGs) from observational data is the most important step for causal structure learning.

\par In recent years, many  causal structure learning  (i.e. DAG learning) methods have been designed \cite{MarxV19b}, which can be roughly divided into  global causal  structure learning and local causal structure learning. The first type of methods aims to learn the casual structure of all variables, such as MMHC \cite{TsamardinosBA06}, NOTEARS \cite{ZhengARX18} and DAG-GNN \cite{YuCGY19}. However, in many practical scenarios, it is not necessary to waste time  to learn a global structure when we are only interested in the causal relationships around a given  variable.  To tackle this issue, the second type of methods is proposed, with the aim to discover and distinguish the direct causes (parents) and direct effects (children) of a target variable, such as  PCD-by-PCD \cite{YinZWHZG08} and CMB \cite{GaoJ15}.

PCD-by-PCD (PCD means Parents, Children and some Descendants) \cite{YinZWHZG08} and  CMB (Causal Markov Blanket) \cite{GaoJ15} first learn the PCD  or MB (Markov Blanket) of a target variable and construct a local structure among the target variable and the variables in the PCD or MB, then sequentially learn PCDs or MBs of the variables connected to the target variable and simultaneously construct local structures among variables in  PCDs or MBs until the parents  and children  of the target variable have been distinguished.

\par While emerging successes have been made, existing local causal structure learning methods suffer from the following limitations. They need to search a large space to distinguish parents from children of a target variable. That is to say, existing local causal structure learning methods not only need to learn the PCD or MB of the target variable, but also may need to learn  PCDs or MBs of the variables connected to the target variable. In the worst case (e.g. the target variable has all single ancestors) all existing methods may be required to learn PCDs or MBs of all variables in a dataset. This leads to that existing local causal structure learning methods are often computationally expensive or even infeasible especially with a large-sized BN. For instance, as shown in Fig. \ref{Example1}, there is an N-structure (see Definition \ref{def6}) formed by four variables \emph{T}, \emph{A}, \emph{B}, and \emph{C}. Given the target variable \emph{T}, in order to determine the causal relationship between \emph{T} and \emph{B}: PCD-by-PCD is required to learn the PCD of \emph{T} and PCDs of \emph{A} and \emph{B}. For CMB, if only the MB of \emph{T} is learnt, the edge direction between \emph{T} and \emph{B} cannot be determined since there is no V-structures around B.  CMB needs to further learn the MBs of \emph{B} and \emph{A} to orient the edge between \emph{T} and \emph{B}.  In a word, both PCD-by-PCD and  CMB need to search a large space to determine the edge direction between \emph{T} and \emph{B}. A larger search space will result in performing more conditionally independence (CI) tests for discovering the causal relationships around a given variable. More CI tests not only increase computational time, but also lead to more unreliable tests. It will be beneficial to  local causal structure learning if  the  edge direction between \emph{B} and \emph{T} can be determined in learning the PCD or MB of the target variable \emph{T} without learning PCDs or MBs of the other variables.

\begin{figure}
  \centering
  % Requires \usepackage{graphicx}
  \includegraphics[width=4.5cm]{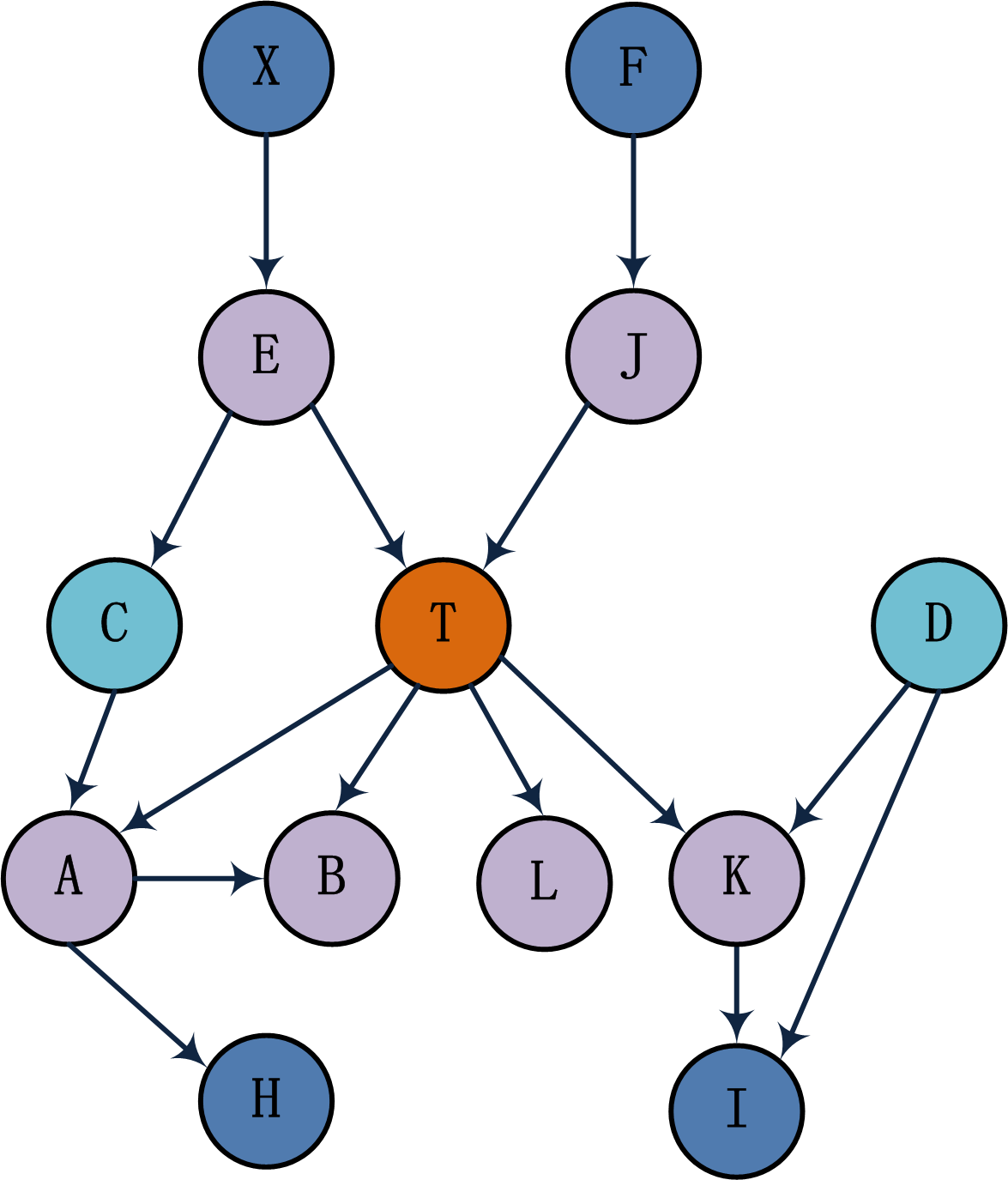}
  \caption{A sample Bayesian network. The MB  of \emph{T} includes \emph{E} and \emph{J} (parents), \emph{A}, \emph{B}, \emph{L} and \emph{K} (children), \emph{C} and \emph{D} (spouses).}
  \label{Example1}
\end{figure}

\par Then a question naturally arises: can we reduce the search space in determining the edge directions between a given variable and its children to speed up the local causal structure learning? To address this problem, our main contributions of the paper are summarized as follows.
\begin{itemize}
\item We propose the concept of N-structures, a special local structure for edge directions in local causal structure learning. Then we propose a new local causal structure learning, called ELCS. Through leveraging the N-structures, ELCS learns the MBs of the variables as few as possible to distinguish parents from children of a given variable as many as possible, which improves the efficiency of local causal structure learning and simultaneously reduces the impact of unreliable CI tests.
\item  To integrate MB learning with N-structures to infer edge directions as many as possible during the MB learning procedure, we design an efficient MB discovery subroutine (EMB) and its efficient version EMB-II. EMB not only is able to learn the MB of a variable, but also has an ability to distinguish parents from children of the variable.
\item  We have conducted extensive experiments on eight benchmark BNs, and have compared ELCS with five existing causal structure learning algorithms, including three state-of-the-art global structure learning and two local structure learning algorithms, to demonstrate the effectiveness and efficiency of the ELCS algorithm.
\end{itemize}
\par The remainder of this paper is organized as follows. Section II reviews the related work, and  Section III gives the notations and definitions. Section IV  describes the proposed ELCS algorithm in detail. Section V reports and discusses the experimental results. Section VI summarizes the paper.

\section{Related Work}
\label{relatedwork}
Our work focuses on local causal structure learning and is also related to MB learning and global causal structure learning. So this section briefly introduces the related work in the three areas.

\par \textbf{MB learning}. Learning Markov Blanket (MB) plays an essential part  in the skeleton learning during BN structure learning. Existing MB learning methods can be categorized into two types: constraint-based methods and score-based methods. The former employs  independence tests  to find the MB of  a given variable \cite{CCMB,BAMB}, whereas the latter learns the MB using score-based BN structure learning  algorithms \cite{NiinimakiP12,S2TMB}.

\par Constraint-based methods can be roughly grouped into simultaneous MB learning and divide-and-conquer MB learning. Given the target variable \emph{T}, the simultaneous MB learning algorithm aims to learn  parents, children, and spouses of \emph{T} simultaneously, and does not distinguish  spouses of \emph{T} from its PC, such as  GSMB \cite{GSMB}, IAMB \cite{IAMB}, Inter-IAMB \cite{tsamardinos2003algorithms} and Fast-IAMB \cite{FastIAMB}.  To reduce the  sample requirement of the simultaneous MB learning algorithm, the divide-and-conquer MB learning algorithm is proposed, with the aim to find PC and spouses of the target variable separately. The representative divide-and-conquer MB learning algorithms include MMMB \cite{TsamardinosAS03}, HITON-MB \cite{AliferisTS03}, PCMB \cite{PCMB}, STMB \cite{GaoJ17}, CCMB \cite{CCMB} and BAMB \cite{BAMB}. Recently, a comprehensive review of the state-of-the-art MB learning algorithms are discussed in \cite{yu2020causality}.

\par However, existing MB learning methods only learn a local skeleton around a target variable and do not distinguish parents from children in the learnt MB of a target variable.
\par \textbf{Global causal structure learning}. A large amount of methods have been designed for global causal structure learning. Recent methods can be roughly categorized into two types:  local-to-global structure learning methods and continuous optimization based learning methods. The local-to-global structure learning approach, such as GSBN \cite{MargaritisT99}, MMHC \cite{TsamardinosBA06} and SSL$\pm$C/G \cite{NiinimakiP12}, first learns the MB or PC  of each variable, then constructs a skeleton of a DAG using  the  learnt MBs or PCs, and finally  orients edges of  the learnt skeleton using score-based or constraint-based causal learning algorithms.  Instead of learning the MB of each variable first, GGSL \cite{GaoFC17} starts with a randomly selected variable, and then  uses a score-based MB learning algorithm to gradually expand the learnt structure through a series of local structure learning steps. Based on GGSL, a parallel BN structure learning algorithm (PSL) is designed to improve the efficiency \cite{GaoW18}.

\par Recently,  several continuous optimization based learning approaches  have been proposed for global causal structure learning \cite{YuCGY19,ZhengARX18,Zhu1906,DVAE}. Zheng et al. consider a BN structure learning  problem as a purely continuous optimization problem and propose the NOTEARS algorithm \cite{ZhengARX18}. DAG-GNN uses a  graph neural network based  deep generative model to capture the complex data distribution to learn BN structures \cite{YuCGY19}. RL-BIC uses reinforcement learning to search for a directed acyclic graph (DAG) with the best score \cite{Zhu1906}.  Zhang et al. propose a DAG variational autoencoder (D-VAE) for BN structure learning \cite{DVAE}.

\par However, global causal structure learning methods are time consuming or even infeasible when the number of variables of a BN is large. In fact, in many practical settings, we are only interested in distinguishing parents from children of a variable of interest. In this case, it is unnecessary and wasteful to find an entire BN structure.

\par \textbf{Local causal structure learning}. Local causal structure learning aims to learn and distinguish the parents and children of a target variable. Although many algorithms have been designed for learning a whole structure, only several algorithms have been proposed for local causal structure learning. PCD-by-PCD first discovers the PCD of a target variable, then sequentially discovers  PCDs of the variables connected  to the target variable and  simultaneously finds V-structures and orients the edges  connected to the target variable until  all the  parents and children of the target variable are identified \cite{YinZWHZG08}.  CMB first learns the MB of a target variable using HITON-MB and orients edges  by tracking the conditional independence changes in the MB of the target variable, then  sequentially learns MBs of the variables connected  to the target variable and  simultaneously  construct local structures along the paths starting from the target variable until  the parents and children of the target variable have been identified or  they cannot be identified further by continuing the process \cite{GaoJ15}.

\par As we discussed in Section 1, both PCD-by-PCD and CMB encounter the time inefficient problem since they need to learn a large number of PCDs or MBs of the variables for distinguishing parents from children of a target variable. To  tackle this issue, in this paper, we aim to develop a new method through learning the MBs of variables as few as possible while orienting edges as many as possible.

\section{Notations and Definitions}
In this section, we will briefly introduce some basic definitions and notations frequently used in this paper (see Table \ref{notation} for a summary of the notations).  Let $\emph{\textbf{U}}$ denote  a set of random variables. $\mathbb{P}$   represents a joint probability distribution over $\emph{\textbf{U}}$, and $\mathbb{G}$   is a  DAG over $\emph{\textbf{U}}$. In a  DAG, $\emph{X}$ is a parent of  $\emph{Y}$ and $\emph{Y}$ is a child of $\emph{X}$ if there exists a directed edge from  $\emph{X}$ to  $\emph{Y}$.  $\emph{X}$ is an ancestor of $\emph{Y}$ (i.e., non-descendant of $\emph{Y}$) and $\emph{Y}$ is a descendant of $\emph{X}$ if there exists a directed path from $\emph{X}$ to $\emph{Y}$.

\tabcolsep 0.05in
\begin{table}
\scriptsize
\caption{Summary of  Notations}\label{notation}
\centering
\begin{tabular}{|c|c|}
\hline
Notation &Meanings              \\
\hline
\emph{\textbf{U}}    &  a set of random variables\\\hline
\emph{\textbf{W}}    &  a subset of \emph{\textbf{U}}\\\hline
$\mathbb{P}$           &  a joint probability distribution over \emph{\textbf{U}}\\\hline
$\mathbb{G}$             &  a direct acyclic graph over \emph{\textbf{U}} \\\hline
DAG          &  direct acyclic graph \\\hline
\emph{X}, \emph{Y}, \emph{Z}, \emph{T }            &  a single variable in  \emph{\textbf{U}}\\\hline
\emph{\textbf{Z}},\emph{\textbf{S} }          &  a conditioning set within  \emph{\textbf{U}} \\\hline
\emph{X}$\!\perp\!\!\!\perp$\emph{Y}      &  \emph{X} and \emph{Y}  are  independent given \emph{\textbf{Z}} \\\hline
\emph{X}$\not\!\perp\!\!\!\perp$\emph{Y}      &  \emph{X} and \emph{Y}  are  dependent given \emph{\textbf{Z}} \\\hline
\emph{\textbf{MB}}$_{\emph{T}}$     &  Markov Blanket of \emph{T} \\\hline
\textbf{\emph{PC}}$_{\emph{T}}$     &  a set of parents and children of \emph{T} \\\hline
\textbf{\emph{P}}$_{\emph{T}}$      &  a set of parents  of \emph{T} \\\hline
\textbf{\emph{C}}$_{\emph{T}}$      &  a set of children of \emph{T}   \\\hline
\textbf{\emph{UN}}$_{\emph{T}}$      &  undistinguished variables in \textbf{\emph{PC}}$_{\emph{T}}$\\\hline
\textbf{\emph{SP}}$_{\emph{T}}$     &   a set of spouses of \emph{T} \\\hline
\textbf{\emph{SP}}$_{\emph{T}}$\{{\emph{X}}\}    &  a  spouses of \emph{T} with regard to \emph{T}'s child \emph{X} \\\hline
\textbf{\emph{Sep}}$_{\emph{T}}$\{{\emph{X}}\}    &  a set that \emph{d}-separates \emph{X} from \emph{T}\\\hline
\textbf{\emph{Sep}}$_{\emph{T}}$    &  a set that contains the  sets \textbf{\emph{Sep}}$_{\emph{T}}$\{{$\cdot$}\}  of all variables\\\hline
\textbf{\emph{CSP}}$_{\emph{T}}$    &  a set that contains the candidate spouse sets  of  all \emph{PC}$_{\emph{T}}$ variables\\\hline
\emph{Que}          &  a circular queue(first in fist out)\\\hline
$|\cdot|$         &  the size of a set\\\hline
\end{tabular}
\end{table}
\begin{definition}[Conditional Independence \cite{neufeld1993pearl}]\label{def1}
Given a conditioning set \textbf{Z},   X is conditionally independent of  Y  if and only if $P(X|Y,\textbf{Z}) = P(X|\textbf{Z})$.
\end{definition}

\begin{definition}[Bayesian Network \cite{neufeld1993pearl}]\label{BN}
 The triplet $<$\textbf{U},$\mathbb{G}$,$\mathbb{P}$$>$  is called a Bayesian network (BN)  if $<$\textbf{U},$\mathbb{G}$,$\mathbb{P}$$>$ satisfies the Markov condition: each variable is conditionally independent of  variables in its non-descendant given its parents in $\mathbb{G}$.
\end{definition}

\begin{definition}[Casual Bayesian Network \cite{pearl2009causality}]\label{BN}
A BN is called a causal Bayesian network (CBN) if a directed edge in $\mathbb{G}$ has causal interpretation, that is, X$\rightarrow$Y indicates that X is a direct cause of Y.
\end{definition}

\begin{definition}[Causal Structure Learning]\label{csl}
Global causal structure learning   aims to learn a DAG  over \textbf{U} from observational data, where  edges represent potential causal relationships between variables, that is, X is a direct cause of Y if there exists a directed edge from X to Y \emph{\cite{pearl2009causality}}. Local causal structure learning  aims to  discover and distinguish direct causes and direct effects  of a  variable of interest \emph{\cite{GaoJ15}}.
\end{definition}
% \emph{\cite{CPS}}
%and only if

\begin{definition}[V-structure \cite{neufeld1993pearl}]\label{defVS}
If there is no an edge between X and Y, and Z has two incoming edges from X and Y, respectively, then X, Z and Y form a V-structure ($X\rightarrow Z \leftarrow Y$).
\end{definition}

%, since it can be discovered  using CI tests

In a BN, \emph{Z} is  a collider if there are two directed edges from  \emph{X} to \emph{Z} and from  \emph{Y} to \emph{Z}, respectively. V-structures play an important role in determining  the edge directions between variables. For example, if there is a  V-structure ($X\rightarrow Z \leftarrow Y$) formed by \emph{X}, \emph{Y} and \emph{Z}, we can identify  \emph{X} and \emph{Y} as parents of \emph{Z} using  conditional independence (CI) tests.

% $\textbf{U}$ is a set of random variables,
% X and Y are two variables of $\textbf{U}$.
\begin{definition}[D-separation \cite{neufeld1993pearl}]\label{def3}
Given a set  \textbf{S} $\subseteq $  \textbf{U}$\setminus$\{X,Y\}, a path $\pi$ between  X and Y is blocked, if one of the following conditions  is satisfied: 1) there is a non-collider variable within \textbf{S} on $\pi$, or 2) there is a collier variable Z  on $\pi$, while Z and any  its descendants are not in \textbf{S}. Otherwise, $\pi$ between  X and Y is  unblocked. X and Y are d-separation given \textbf{S}  if and only if  each path between X and Y is blocked by \textbf{S}.
\end{definition}

In a DAG, given a conditioning set, we can determine whether two variables are conditionally independent using  Definition \ref{def3}.

\begin{definition}[Faithfulness \cite{CPS}]\label{def4}
Given a BN $<$\textbf{U},$\mathbb{G}$,$\mathbb{P}$$>$,  $\mathbb{G}$ is faithful to $\mathbb{P}$ if and only if all the conditional independencies appear in $\mathbb{P}$ are entailed by $\mathbb{G}$.  $\mathbb{P}$ is faithful if and only if there is a \emph{DAG} $\mathbb{G}$ such that $\mathbb{G}$ is faithful to $\mathbb{P}$.
\end{definition}

Definition \ref{def4} indicates that in a faithful BN, if \emph{X} and \emph{Y} are d-separated given the conditioning set \emph{\textbf{S}} in $\mathbb{G}$, then they will be conditionally independent  given \emph{\textbf{S}} in $\mathbb{P}$.

\begin{definition}[Markov Blanket \cite{neufeld1993pearl}]\label{def5}
In a faithful BN, the MB of a target variable T is denoted as \textbf{MB}$_{T}$, which is uniqueness and consists of parents, children and spouses (other parents of the target's children) of T. All other variables  in \textbf{U} $\setminus$\textbf{MB}$_{T}$$\setminus$\{T\} are conditionally independent of T given \textbf{MB}$_{T}$, $\forall$ X $\subseteq$  \textbf{U} $\setminus$\textbf{MB}$_{T}$$\setminus$\{T\}, X $\!\perp\!\!\!\perp$ T $|$ \textbf{MB}$_{T}$, where X $\!\perp\!\!\!\perp$ T $|$ \textbf{MB}$_{T}$ denotes X and T are conditionally independent  conditioning on \textbf{MB}$_{T}$.
\end{definition}

\begin{definition}[N-structure]\label{def6}
In a faithful BN, if there exists four variables T, A, B and C, and T is a parent of A and B, C is a parent of A, there is no an edge between C and T, A is an ancestor of B, the other parents of B are in  PC set of T. Then, the local structure formed by the four variables  is called an N-structure.
\end{definition}

\begin{figure}
  \centering
  % Requires \usepackage{graphicx}
  \includegraphics[width=6cm]{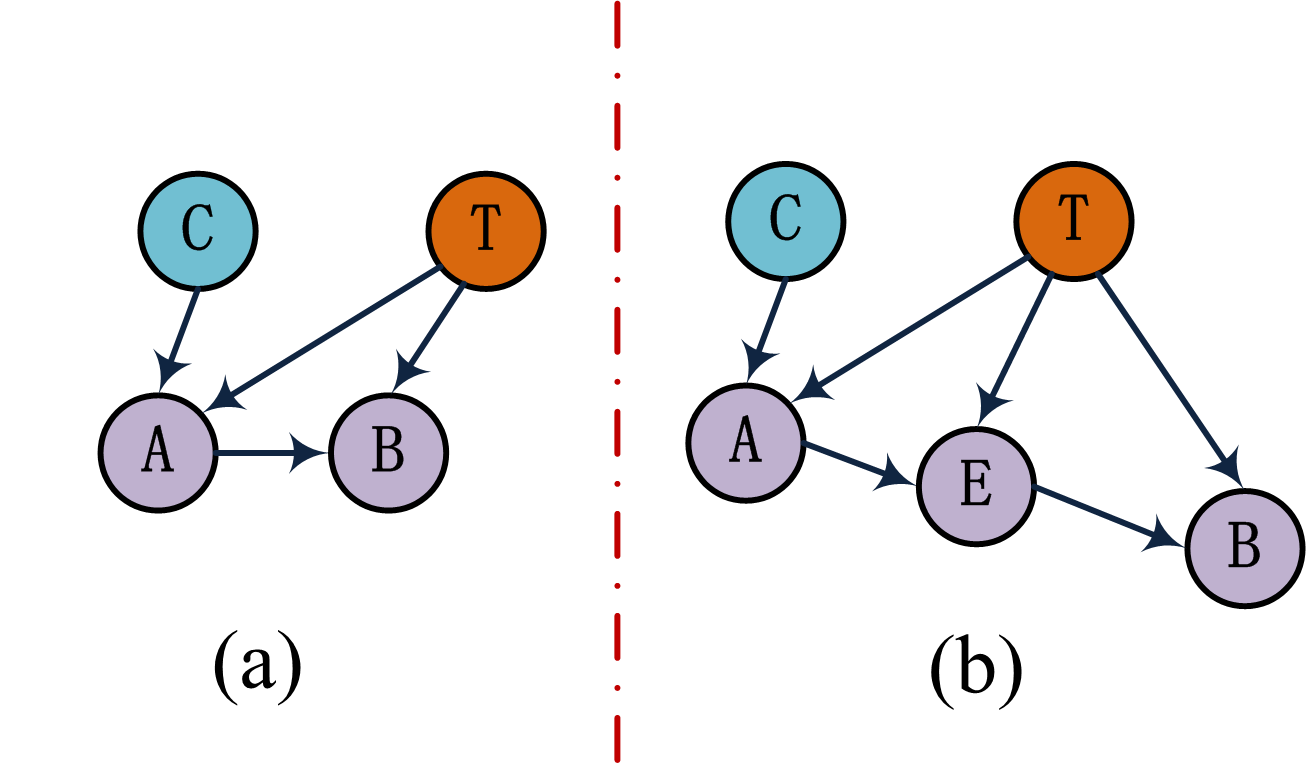}
  \caption{Examples of N-structures.}
  \label{Examplesub1}
\end{figure}

Fig. \ref{Examplesub1} gives examples of the N-structures. In Fig. \ref{Examplesub1} (a),  there is an N-structure formed by \emph{T}, \emph{A}, \emph{B} and \emph{C}. In Fig. \ref{Examplesub1} (b), variables \emph{T}, \emph{A}, \emph{B} and \emph{C} construct an N-structure, and  variables \emph{T}, \emph{A}, \emph{E} and \emph{C} construct an N-structure. Given the target variable \emph{T}, we can leverage the N-structures to determine edge directions between \emph{T} and its children (i.e. \emph{B} and \emph{E}) during learning the MB of \emph{T} without learning MBs of the other variables.

\begin{theoremNoParens}[\cite{CPS}]\label{thm1}
In a faithful BN, for any two variables X $\in$ \textbf{U} and Y $\in$ \textbf{U}, if there exists an edge between X and Y, then $\forall$ \textbf{S} $\subseteq$  \textbf{U} $\setminus$ \{X,Y\}, X$\not\!\perp\!\!\!\perp$Y $|$ \textbf{S} holds.
\end{theoremNoParens}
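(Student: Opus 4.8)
The plan is to prove the contrapositive-free direction directly by invoking the faithfulness assumption together with the semantics of d-separation. The statement asserts that an edge between $X$ and $Y$ cannot be "screened off" by any conditioning set; that is, if $X$ and $Y$ are adjacent in $\mathbb{G}$, then they remain dependent in $\mathbb{P}$ given every $\textbf{S} \subseteq \textbf{U} \setminus \{X,Y\}$. The natural route is: first show that adjacency in $\mathbb{G}$ implies $X$ and $Y$ are \emph{not} d-separated by any such $\textbf{S}$ (a purely graphical fact), and then transfer this to $\mathbb{P}$ using Definition \ref{def4} (Faithfulness).

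First I would fix $\textbf{S} \subseteq \textbf{U} \setminus \{X,Y\}$ arbitrarily and consider the single-edge path $\pi$ consisting of just the edge between $X$ and $Y$ (either $X \rightarrow Y$ or $X \leftarrow Y$). I claim this path is unblocked given $\textbf{S}$ according to Definition \ref{def3}: the only internal vertices of a path are those strictly between its endpoints, and $\pi$ has none, so there is no non-collider in $\textbf{S}$ on $\pi$ and no collider on $\pi$ at all. Hence neither blocking condition of Definition \ref{def3} can be satisfied, so $\pi$ is unblocked, and therefore $X$ and $Y$ are not d-separated given $\textbf{S}$. Since $\textbf{S}$ was arbitrary, $X$ and $Y$ are not d-separated by any subset of $\textbf{U} \setminus \{X,Y\}$.

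Next I would invoke faithfulness. By Definition \ref{def4}, every conditional independence in $\mathbb{P}$ is entailed by $\mathbb{G}$, i.e. corresponds to a d-separation in $\mathbb{G}$. Contrapositively, if $X$ and $Y$ are not d-separated given $\textbf{S}$ in $\mathbb{G}$, then $X \not\!\perp\!\!\!\perp Y \mid \textbf{S}$ in $\mathbb{P}$. Applying this to each $\textbf{S} \subseteq \textbf{U} \setminus \{X,Y\}$ from the previous paragraph yields $X \not\!\perp\!\!\!\perp Y \mid \textbf{S}$ for all such $\textbf{S}$, which is exactly the claim.

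I do not anticipate a serious obstacle here; the result is essentially a restatement of the standard fact that adjacent vertices in a faithful DAG are dependent conditional on everything. The one point that warrants care is the precise reading of Definition \ref{def3} for a length-one path: one must be explicit that the quantifiers "there is a non-collider \emph{on} $\pi$" and "there is a collider \emph{on} $\pi$" range over internal vertices only, so both are vacuously false, making $\pi$ unblocked by definition. Everything else is a direct application of the definitions, so the proof is short.
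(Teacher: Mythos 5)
Your proof is correct. Note, however, that the paper offers no proof of this statement at all: it is imported verbatim from the cited reference \cite{CPS} and used as a black box, so there is no in-paper argument to compare against. Your two-step argument is exactly the standard justification: the single-edge path between $X$ and $Y$ has no internal vertices, so neither blocking condition of Definition \ref{def3} can apply and the path is unblocked for every $\textbf{S} \subseteq \textbf{U} \setminus \{X,Y\}$; then the contrapositive of faithfulness (Definition \ref{def4}), namely that any conditional independence in $\mathbb{P}$ must correspond to a d-separation in $\mathbb{G}$, converts ``not d-separated given $\textbf{S}$'' into $X \not\!\perp\!\!\!\perp Y \mid \textbf{S}$. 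Your explicit remark that the blocking clauses quantify only over internal vertices of the path is the right point to be careful about, and with it the argument is complete.
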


Theorem \ref{thm1} demonstrates that if \emph{X} is a parent (or a child) of \emph{Y} if \emph{X} and \emph{Y} are not conditionally independent conditioning any subsets excluding \emph{X} and \emph{Y}.

\section{The  Proposed  Method}

\subsection{The ELCS Algorithm}
We propose the Efficient Local Causal Structure learning  algorithm (ELCS) to distinguish  parents  from children  of a target variable, as shown in Algorithm \ref{algorithmLCDMB}. ELCS starts from the target variable, sequentially finds MBs of variables connected to the target variable and simultaneously constructs local causal structures over MBs until all the parents and children of the target variable have been distinguished or it is clear that they cannot be further distinguished  by continuing the process. In the following, we first summarize the main idea  of ELCS, then give the details of ELCS.

\par To improve the efficiency of local causal structure learning, in ELCS, we propose the following two acceleration strategies. First, ELCS finds the N-structures, and then leverages those found N-structures to infer  edge directions between the target variable \emph{T} and its children during learning the MB of \emph{T}. Second, two rules in Lemma 1 are used to further infer edge directions between \emph{T} and its PC during learning the MB of \emph{T}.

\par As described in Algorithm \ref{algorithmLCDMB}, given the target variable \emph{T}, ELCS first initializes the variable set \emph{\textbf{W}} and the queue \emph{Que} to  empty sets (line 1 in Algorithm \ref{algorithmLCDMB}), where \emph{\textbf{W}}  is used to  store variables that their MBs have been learnt, and  \emph{Que}  is utilized to  store variables that their MBs need to be learnt in  next phase. Then, \emph{T} enters  \emph{Que} (line 2 in Algorithm \ref{algorithmLCDMB}). Next, lines 4-13 in Algorithm \ref{algorithmLCDMB} will be executed. At line 4 in Algorithm  \ref{algorithmLCDMB}, the header element in \emph{Que}  is out of queue, that is, \emph{X} = \emph{T}. Since the MB of \emph{T} has not been learnt, \emph{T} is added to the set \emph{\textbf{W}} (lines 5-6 in Algorithm \ref{algorithmLCDMB}). The EMB (Efficient Markov Blanket discovery) subroutine is executed to learn the MB of \emph{T} (line 7 in Algorithm \ref{algorithmLCDMB}). Given a  variable \emph{X}, the EMB subroutine not only is able to find the PC (\textbf{\emph{PC}}$_{\emph{X}}$) and MB (\textbf{\emph{SP}}$_{\emph{X}}$ $\cup$ \textbf{\emph{PC}}$_{\emph{X}}$), but also has an ability to distinguish parents from children of \emph{X}. The details of EMB  are described in Section 4.2. Let \textbf{\emph{P}}$_{\emph{X}}$ represent a set of the identified parents of \emph{X}. \textbf{\emph{C}}$_{\emph{X}}$ denotes a set of the identified children of \emph{X}.  The set  containing undistinguished PC variables of \emph{X}  is denoted as \textbf{\emph{UN}}$_{\emph{X}}$.  After executing the line 7 in Algorithm \ref{algorithmLCDMB}, the sets of \textbf{\emph{P}}$_{\emph{T}}$, \textbf{\emph{C}}$_{\emph{T}}$, \textbf{\emph{UN}}$_{\emph{T}}$, \textbf{\emph{SP}}$_{\emph{T}}$, \textbf{\emph{PC}}$_{\emph{T}}$ are obtained.  If \textbf{\emph{UN}}$_{\emph{T}}$ is  empty, that is, parents and children of \emph{T} are all distinguished, the learning process will be terminated. Otherwise, the undistinguished variables within \textbf{\emph{UN}}$_{\emph{T}}$ will be put in  \emph{Que} (lines 9-11 in Algorithm  \ref{algorithmLCDMB}). Then, Meek rules \cite{Meek1995Causal} are used to orient other edge directions between variables in \emph{\textbf{W}} (line 13 in Algorithm  \ref{algorithmLCDMB}).  Lines 4-13 in Algorithm \ref{algorithmLCDMB} will be repeated until all the parents  and children  of \emph{T} have been distinguished or \emph{Que} is empty or the size of \emph{\textbf{W}} equals to that of the entire variable set.
\begin{algorithm}[t]
\footnotesize
\caption{ELCS}
\label{algorithmLCDMB}
\begin{algorithmic}[1]
\REQUIRE ~~
Data \emph{D}, target variable \emph{T }, random variables set \textbf{\emph{U}}\\
\ENSURE ~~
\textbf{\emph{P}}$_{\emph{T}}$, \textbf{\emph{C}}$_{\emph{T}}$, \textbf{\emph{UN}}$_{\emph{T}}$\\
%\textbf{\emph{P}}$_{\emph{X}}$, \textbf{\emph{C}}$_{\emph{X}}$, \textbf{\emph{UN}}$_{\emph{X}}$ Local causal structure of \emph{T }\\
\STATE \emph{\textbf{W}} $\leftarrow$ $\emptyset$, \emph{Que} = $\emptyset$\\
\STATE \emph{Que}$.push$(\emph{T})\\
\STATE \textbf{Repeat}\\
\STATE \quad \emph{X} = \emph{Que}.$pop()$
\STATE \quad \textbf{if} \emph{X} $\not\in$ \emph{\textbf{W}} \textbf{then}
\STATE \quad \quad \emph{\textbf{W}} $\leftarrow$  \emph{\textbf{W}} $\cup$ \{\emph{X}\}
\STATE \quad \quad [\textbf{\emph{P}}$_{\emph{X}}$, \textbf{\emph{C}}$_{\emph{X}}$, \textbf{\emph{UN}}$_{\emph{X}}$, \textbf{\emph{SP}}$_{\emph{X}}$, \textbf{\emph{PC}}$_{\emph{X}}$] = EMB(\emph{D}, \emph{X})
\STATE \quad \quad Orienting edge directions between \emph{X} and its PC nodes \\according to  \textbf{\emph{P}}$_{\emph{X}}$ and \textbf{\emph{C}}$_{\emph{X}}$
\STATE \quad \quad \textbf{for }each \emph{Y} $\in$ \textbf{\emph{UN}}$_{\emph{X}}$  \textbf{do}
\STATE \quad \quad \quad  \emph{Que}$.push$(\emph{Y})
\STATE \quad \quad \textbf{end for}
\STATE \quad \textbf{end if}
\STATE \quad Using Meek rules to orient other edge directions between\\ variables in \emph{\textbf{W}}
\STATE \textbf{Until} (1) all parents and children of \emph{T} can be determined, or \\(2) \emph{Que} = $\emptyset$, or (3)\emph{\textbf{W}} = \emph{\textbf{U}}\\
\end{algorithmic}
\end{algorithm}

\begin{algorithm}[t]
\footnotesize
\caption{EMB}
\label{algorithmEMB}
\begin{algorithmic}[1]
\REQUIRE ~~
Data \emph{D}, target variable \emph{T }\\
\ENSURE ~~
\textbf{\emph{P}}$_{\emph{T}}$, \textbf{\emph{C}}$_{\emph{T}}$, \textbf{\emph{UN}}$_{\emph{T}}$, \textbf{\emph{SP}}$_{\emph{T}}$, \textbf{\emph{PC}}$_{\emph{T}}$\\
/*\emph{Step 1: find the PC set of T}*/\\
\STATE [\textbf{\emph{PC}}$_{\emph{T}}$, \textbf{\emph{Sep}}$_{\emph{T}}$]$\leftarrow$ RecogPC(\emph{T}, \emph{D})\\
/*\emph{Step 2: find spouses of T}*/\\
\STATE [\textbf{\emph{SP}}$_{\emph{T}}$, \textbf{\emph{CSP}}$_{\emph{T}}$]$\leftarrow$ RecogSpouses(\emph{D}, \emph{T}, \textbf{\emph{PC}}$_{\emph{T}}$, \textbf{\emph{Sep}}$_{\emph{T}}$)\\
/*\emph{Step 3: remove  false positive PC from  \textbf{PC}$_{T}$}*/\\
\STATE  \textbf{for }each \emph{Y} $\in$ \textbf{\emph{PC}}$_{\emph{T}}$ \textbf{do}
\STATE \quad  \textbf{if} \emph{T} $\!\perp\!\!\!\perp$ \emph{Y} $|$ \emph{\textbf{Z}},  \emph{\textbf{Z}} $\subseteq$ \textbf{\emph{SP}}$_{\emph{T}}$\{{\emph{Y}}\} $\cup$   \textbf{\emph{PC}}$_{\emph{T}}$ $\setminus$ \{\emph{Y}\}    \textbf{then}
\STATE \quad \quad  \textbf{\emph{PC}}$_{\emph{T}}$ $\leftarrow$ \textbf{\emph{PC}}$_{\emph{T}}$ $\setminus$ \{\emph{Y}\}
\STATE \quad \quad  \textbf{\emph{SP}}$_{\emph{T}}$\{{\emph{Y}}\} $\leftarrow$ $\emptyset$
\STATE \quad   \textbf{end if}
\STATE  \textbf{end for}\\
/*\emph{Step 4: find \textbf{P}$_{T}$, \textbf{C}$_{T}$}*/\\
\STATE  [\textbf{\emph{P}}$_{\emph{T}}$, \textbf{\emph{C}}$_{\emph{T}}$, \textbf{\emph{UN}}$_{\emph{T}}$]$\leftarrow$ DistinguishPC(\emph{D}, \emph{T}, \textbf{\emph{PC}}$_{\emph{T}}$, \textbf{\emph{SP}}$_{\emph{T}}$, \textbf{\emph{CSP}}$_{\emph{T}}$)
\end{algorithmic}
\end{algorithm}

\begin{algorithm}[t]
\footnotesize
\caption{RecogSpouses}
\label{FindSpouses}
\begin{algorithmic}[1]
\REQUIRE ~~
Data \emph{D}, target variable \emph{T}, \textbf{\emph{PC}}$_{\emph{T}}$, \textbf{\emph{Sep}}$_{\emph{T}}$ \\
\ENSURE ~~
\textbf{\emph{SP}}$_{\emph{T}}$, \textbf{\emph{CSP}}$_{\emph{T}}$\\
\STATE  \textbf{\emph{SP}}$_{\emph{T}}$ $\leftarrow$ $\emptyset$
\STATE \textbf{for} each \emph{X} $\in$ \emph{\textbf{U}}$\setminus$\{\emph{T}\}$\setminus$\textbf{\emph{PC}}$_{\emph{T}}$ \textbf{do}
\STATE \quad \textbf{\emph{Temp}} $\leftarrow$ $\emptyset$
\STATE \quad \textbf{for} each \emph{Y} $\in$ \textbf{\emph{PC}}$_{\emph{T}}$ \textbf{do}
\STATE \quad \quad \textbf{if} \emph{X} $\not\!\perp\!\!\!\perp$ \emph{Y} $|$ $\emptyset$ \textbf{then}
\STATE \quad\quad \quad \textbf{\emph{Temp}} $\leftarrow$ \textbf{\emph{Temp}} $\cup$  \{\emph{Y}\}
\STATE \quad\quad \textbf{end if}
\STATE \quad \textbf{end for}
\STATE \quad \textbf{if} \emph{X} $\not\!\perp\!\!\!\perp$ \emph{T} $|$ \textbf{\emph{Temp}} \textbf{then}
\STATE \quad\quad \textbf{for} each \emph{Y} $\in$ \textbf{\emph{Temp}} \textbf{do}
\STATE \quad\quad\quad  \textbf{if} \emph{X} $\not\!\perp\!\!\!\perp$ \emph{T} $|$ \{\emph{Y}\} $\cup$  \textbf{\emph{Sep}}$_{\emph{T}}$\{\emph{X}\} \textbf{then}
\STATE \quad\quad\quad\quad \textbf{\emph{CSP}}$_{\emph{T}}$\{{\emph{Y}}\} $\leftarrow$ \textbf{\emph{CSP}}$_{\emph{T}}$\{{\emph{Y}}\} $\cup$ \{\emph{X}\}
\STATE \quad\quad\quad \textbf{end if}
\STATE \quad\quad \textbf{end for}
\STATE \quad \textbf{end if}
\STATE \textbf{end for}
\STATE \textbf{\emph{SP}}$_{\emph{T}}$ $\leftarrow$ \textbf{\emph{CSP}}$_{\emph{T}}$
%\STATE  for each \emph{Y} $\in$ \textbf{\emph{PC}}$_{\emph{T}}$ do
%\STATE  \quad for each \emph{X} $\in$ \textbf{\emph{SP}}$_{\emph{T}}$\{{\emph{Y}}\} do
%\STATE \quad \quad  if \emph{X} $\!\perp\!\!\!\perp$ \emph{Y} $|$ \emph{\textbf{Z}}, \emph{\textbf{Z}} $\subseteq$ \textbf{\emph{SP}}$_{\emph{T}}$\{{\emph{Y}}\} $\setminus$ \{\emph{X}\}    then
%\STATE \quad \quad \quad  \textbf{\emph{SP}}$_{\emph{T}}$\{{\emph{Y}}\} $\leftarrow$ \textbf{\emph{SP}}$_{\emph{T}}$\{{\emph{Y}}\} $\setminus$ \{\emph{X}\}
%\STATE \quad \quad  end if
%\STATE \quad end for
%\STATE end for
\STATE  \textbf{for} each \emph{Y} $\in$ \textbf{\emph{PC}}$_{\emph{T}}$ \textbf{do}
\STATE  \quad \textbf{for} each \emph{X} $\in$ \textbf{\emph{SP}}$_{\emph{T}}$\{{\emph{Y}}\} \textbf{do}
\STATE \quad \quad  \textbf{if} \emph{X} $\!\perp\!\!\!\perp$ \emph{Y} $|$ \emph{\textbf{Z}}, \emph{\textbf{Z}} $\subseteq$ \textbf{\emph{SP}}$_{\emph{T}}$\{{\emph{Y}}\} $\cup$ \{\emph{T}\} $\cup$  \textbf{\emph{PC}}$_{\emph{T}}$ $\setminus$ \{\emph{X},\emph{Y}\}    then
\STATE \quad \quad \quad  \textbf{\emph{SP}}$_{\emph{T}}$\{{\emph{Y}}\} $\leftarrow$ \textbf{\emph{SP}}$_{\emph{T}}$\{{\emph{Y}}\} $\setminus$ \{\emph{X}\}
\STATE \quad \quad  \textbf{end if}
\STATE \quad \textbf{end for}
\STATE \textbf{end for}\\
%\STATE  for each \emph{Y} $\in$ \textbf{\emph{PC}}$_{\emph{T}}$ do
%\STATE  \quad for each \emph{X} $\in$ \textbf{\emph{SP}}$_{\emph{T}}$\{{\emph{Y}}\} do
%\STATE \quad \quad  if \emph{X} $\!\perp\!\!\!\perp$ \emph{T} $|$ \textbf{\emph{Y} $\cup$ \emph{\textbf{Z}}},  \emph{\textbf{Z}} $\subseteq$    \textbf{\emph{PC}}$_{\emph{T}}$ $\setminus$ \{\emph{X}, \emph{Y}\}    then
%\STATE \quad \quad \quad \quad \textbf{\emph{SP}}$_{\emph{T}}$\{{\emph{Y}}\} $\leftarrow$ \textbf{\emph{SP}}$_{\emph{T}}$\{{\emph{Y}}\} $\setminus$ \{\emph{X}\}
%\STATE \quad \quad  end if
%\STATE \quad end for
%\STATE end for\\
%\STATE  for each \emph{Y} $\in$ \textbf{\emph{PC}}$_{\emph{T}}$ do
%\STATE  \quad for each \emph{X} $\in$ \textbf{\emph{SP}}$_{\emph{T}}$\{{\emph{Y}}\} do
%\STATE \quad \quad  if \emph{X} $\!\perp\!\!\!\perp$ \emph{Y} $|$ \emph{\textbf{Z}}, \emph{\textbf{Z}} $\subseteq$ \textbf{\emph{SP}}$_{\emph{T}}$\{{\emph{Y}}\} $\cup$   \textbf{\emph{PC}} $\setminus$ \{\emph{X},\emph{Y}\}   then
%\STATE \quad \quad \quad  \textbf{\emph{SP}}$_{\emph{T}}$\{{\emph{Y}}\} $\leftarrow$ \textbf{\emph{SP}}$_{\emph{T}}$\{{\emph{Y}}\} $\setminus$ \{\emph{X}\}
%\STATE \quad \quad  end if
%\STATE \quad end for
%\STATE end for\\
\end{algorithmic}
\end{algorithm}

\begin{algorithm}[t]
\footnotesize
\caption{DistinguishPC}
\label{algorithmIdentifyPC}
\begin{algorithmic}[1]
\REQUIRE ~~
Data \emph{D}, target variable \emph{T }, \textbf{\emph{PC}}$_{\emph{T}}$, \textbf{\emph{SP}}$_{\emph{T}}$, \textbf{\emph{CSP}}$_{\emph{T}}$ \\
\ENSURE ~~
\textbf{\emph{P}}$_{\emph{T}}$, \textbf{\emph{C}}$_{\emph{T}}$, \textbf{\emph{UN}}$_{\emph{T}}$\\
\STATE  \textbf{\emph{C}}$_{\emph{T}}$  $\leftarrow$ $\emptyset$, \textbf{\emph{P}}$_{\emph{T}}$  $\leftarrow$ $\emptyset$
\STATE  \textbf{for} each \emph{Y} $\in$ \textbf{\emph{PC}}$_{\emph{T}}$ \textbf{do}
\STATE \quad  \textbf{if} \textbf{\emph{SP}}$_{\emph{T}}$\{{\emph{Y}}\} is nonempty \textbf{then}
\STATE \quad \quad  \textbf{\emph{C}}$_{\emph{T}}$  $\leftarrow$ \textbf{\emph{C}}$_{\emph{T}}$ $\cup$ \{\emph{Y}\}
\STATE \quad   \textbf{end if}
\STATE  \textbf{end for}
\STATE  \textbf{\emph{UN}}$_{\emph{T}}$ $\leftarrow$ \textbf{\emph{PC}}$_{\emph{T}}$  $\setminus$ \textbf{\emph{C}}$_{\emph{T}}$
\STATE  \textbf{for} each \emph{X} $\in$ \textbf{\emph{UN}}$_{\emph{T}}$ \textbf{do}
\STATE \quad  \textbf{if} \textbf{\emph{CSP}}$_{\emph{T}}$\{{\emph{X}}\} $\cap$ \textbf{\emph{C}}$_{\emph{T}}$ is nonempty \textbf{then}
\STATE \quad \quad  \textbf{\emph{C}}$_{\emph{T}}$  $\leftarrow$ \textbf{\emph{C}}$_{\emph{T}}$ $\cup$ \{\emph{X}\}
\STATE \quad   \textbf{end if}
\STATE  \textbf{end for}
\STATE  \textbf{for} each \emph{X} $\in$ \textbf{\emph{PC}}$_{\emph{T}}$  $\setminus$ \textbf{\emph{C}}$_{\emph{T}}$ \textbf{do}
\STATE  \quad \textbf{for} each \emph{Y} $\in$ \textbf{\emph{PC}}$_{\emph{T}}$  $\setminus$ \textbf{\emph{C}}$_{\emph{T}}$ \textbf{do}
\STATE \quad  \quad \textbf{if} \emph{X}  $\!\perp\!\!\!\perp$ \emph{Y} $|$ $\emptyset$ and \emph{X}  $\not\!\perp\!\!\!\perp$ \emph{Y} $|$ \emph{T} \textbf{then}
\STATE \quad  \quad \quad \quad \textbf{\emph{P}}$_{\emph{T}}$ $\leftarrow$ \textbf{\emph{P}}$_{\emph{T}}$ $\cup$ \{\emph{X}\} $\cup$ \{\emph{Y}\}
\STATE \quad \quad  \textbf{end if}
\STATE \quad   \textbf{end for}
\STATE  \textbf{end for}
\STATE  \textbf{\emph{UN}}$_{\emph{T}}$ $\leftarrow$ \textbf{\emph{PC}}$_{\emph{T}}$ $\setminus$ \textbf{\emph{P}}$_{\emph{T}}$ $\setminus$ \textbf{\emph{C}}$_{\emph{T}}$
\STATE  \textbf{for} each \emph{X} $\in$ \textbf{\emph{UN}}$_{\emph{T}}$ \textbf{do}
\STATE  \quad \textbf{for} each \emph{Y} $\in$ \textbf{\emph{P}}$_{\emph{T}}$ \textbf{do}
\STATE \quad \quad  \textbf{if} \emph{X}  $\not\!\perp\!\!\!\perp$ \emph{Y} $|$ $\emptyset$ and \emph{X}  $\!\perp\!\!\!\perp$ \emph{Y} $|$ \emph{T} \textbf{then}
\STATE \quad \quad \quad \textbf{\emph{C}}$_{\emph{T}}$ $\leftarrow$ \textbf{\emph{C}}$_{\emph{T}}$ $\cup$ \{\emph{X}\}
\STATE \quad \quad \quad \textbf{break}
\STATE \quad  \quad \textbf{end if}
\STATE \quad \textbf{end for}
\STATE \textbf{end for}
\STATE  \textbf{\emph{UN}}$_{\emph{T}}$ $\leftarrow$ \textbf{\emph{PC}}$_{\emph{T}}$ $\setminus$ \textbf{\emph{P}}$_{\emph{T}}$ $\setminus$ \textbf{\emph{C}}$_{\emph{T}}$
\end{algorithmic}
\end{algorithm}

\subsection{EMB  Subroutine}
\par  ELCS depends on the MB learning methods for local causal structure learning, but existing MB learning algorithms have the following shortcomings. First, existing MB learning methods cannot be directly combined with the N-structures to infer edge directions between the target variable and its children. Second, existing MB learning methods only focus on learning the MB of the target variable and are not able to distinguish parents from children. Third, existing MB learning methods may be computationally expensive. In order to help ELCS to leverage N-structures and the rules in Lemma \ref{lemma1} for efficiently learning local causal structures, we design an Effective MB discovery subroutine (EMB) to learn the MB of a target variable and distinguish parents from children of the target variable simultaneously.

 \par As shown in Algorithm \ref{algorithmEMB}, EMB consists of four steps as follows. Given a target variable \emph{T}, EMB first learns the PC  of \emph{T} (\emph{\textbf{PC}}$_{\emph{T}}$) using an existing PC learning algorithm. Second, EMB obtains  spouses of \emph{T} using a RecogSpouses subroutine. Then, EMB removes false PC  from \emph{\textbf{PC}}$_{\emph{T}}$. Finally, EMB orients edges between \emph{T} and its PC  as many as possible using a DistinguishPC subroutine. Specifically, to find the N-structures, EMB first determines which variable within \emph{\textbf{U}}$\setminus$\{\emph{T}\}$\setminus$\textbf{\emph{PC}}$_{\emph{T}}$ is a candidate spouse of \emph{T}, and obtains the candidate spouse set \textbf{\emph{CSP}}$_{\emph{T}}$\{{\emph{Y}}\} of each variable \emph{Y} within \textbf{\emph{PC}}$_{\emph{T}}$, and then obtains the spouses of \emph{T}. Based on the learnt \textbf{\emph{CSP}}$_{\emph{T}}$\{{\emph{Y}}\} and  spouses, we can find the N-structures. Through leveraging the found N-structures,  EMB can distinguish some children of \emph{T} with regard to the found N-structures.  In addition, two rules in Lemma \ref{lemma1} are used in the DistinguishPC subroutine to further distinguish  parents from children of \emph{T}.  In the following, we will give the details of these four steps.

 \begin{lemma}\label{lemma1}
The PC (parents and children) set of a given variable T (T $\in$  \textbf{U}) is denoted as \textbf{PC}$_{T}$. Let X $\in$ \textbf{PC}$_{T}$, Y $\in$ \textbf{PC}$_{T}$.  We can get the following two dependence relationships between  X and  Y.
\par\emph{(a)} X $\!\perp\!\!\!\perp$ Y $|$ $\emptyset$ and X  $\not\!\perp\!\!\!\perp$ Y $|$ T $\Rightarrow$  X and  Y are both parents of  T. This shows that there is a V-structure ($X\rightarrow T \leftarrow Y$) formed by variables X, Y and T, and T is a collider.
\par\emph{(b)} X is a direct cause of T, X  $\not\!\perp\!\!\!\perp$ Y $|$ $\emptyset$ and X  $\!\perp\!\!\!\perp$ Y $|$ T $\Rightarrow$  Y is a direct effects of  T. This shows that there is only one path ($X \rightarrow T \rightarrow Y$) from X to Y, and the path is blocked by T.
\end{lemma}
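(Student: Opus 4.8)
The statement has two parts, (a) and (b), and both are essentially applications of the faithfulness assumption (Definition~\ref{def4}) together with the structure of a DAG. I would prove each part by a short case analysis on the possible orientations of the edges $X - T$ and $Y - T$ (both edges exist since $X, Y \in \textbf{PC}_T$), using the graphical characterization of $d$-separation (Definition~\ref{def3}) to rule out all configurations inconsistent with the stated (in)dependence facts.

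\textbf{Part (a).} Here the hypotheses are $X \perp\!\!\!\perp Y \mid \emptyset$ and $X \not\perp\!\!\!\perp Y \mid T$. Since $X$ and $Y$ are each adjacent to $T$, the edge $X - T$ is one of $X \rightarrow T$ or $T \rightarrow X$, and likewise for $Y - T$; this gives four cases. I would argue that conditioning on $T$ can only \emph{create} a dependence between $X$ and $Y$ (relative to conditioning on $\emptyset$) when $T$ is a collider on a path between them; this is precisely the content of the collider clause in Definition~\ref{def3}. The path $X - T - Y$ is a collider path at $T$ iff both edges point into $T$, i.e. $X \rightarrow T \leftarrow Y$. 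In the other three orientations, $T$ is a non-collider on this length-two path, so conditioning on $T$ blocks it; and since $X \perp\!\!\!\perp Y \mid \emptyset$ already tells us (by faithfulness, applied contrapositively) that no other active path between $X$ and $Y$ exists without conditioning, one checks that no new active path can appear after conditioning on the single node $T$ either — the only path that could change status is one through $T$, and that one gets blocked. Hence $X \not\perp\!\!\!\perp Y \mid T$ forces the $X \rightarrow T \leftarrow Y$ orientation, which is the asserted V-structure with $T$ a collider. The one subtlety to handle carefully is the ``no other active path'' argument: I would phrase it as ``if there were an active path $\pi$ between $X$ and $Y$ given $\{T\}$ other than $X\rightarrow T\leftarrow Y$, then $\pi$ does not pass through $T$ (or passes through $T$ as a non-collider, contradiction), so $\pi$ is active given $\emptyset$ as well, contradicting $X \perp\!\!\!\perp Y \mid \emptyset$ via faithfulness'' — a small amount of care is needed because a path blocked by a collider outside $T$ at $\emptyset$ could in principle be unblocked at $\{T\}$ if $T$ is a descendant of that collider, but then that collider together with the sub-path gives an active path not through $T$, looping back to the same contradiction once one traces it out.

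\textbf{Part (b).} Now we additionally know $X$ is a direct cause of $T$, i.e. the edge is oriented $X \rightarrow T$, and the hypotheses are $X \not\perp\!\!\!\perp Y \mid \emptyset$ and $X \perp\!\!\!\perp Y \mid T$. Again consider the two orientations of $Y - T$. If $T \rightarrow Y$, then $X \rightarrow T \rightarrow Y$ is a directed path, active given $\emptyset$ (consistent with $X \not\perp\!\!\!\perp Y \mid \emptyset$) and blocked by conditioning on the non-collider $T$ (consistent with $X \perp\!\!\!\perp Y \mid T$): this is the desired conclusion, that $Y$ is a direct effect of $T$. If instead $Y \rightarrow T$, then together with $X \rightarrow T$ we have a V-structure $X \rightarrow T \leftarrow Y$, so conditioning on the collider $T$ would \emph{open} the path $X - T - Y$, giving $X \not\perp\!\!\!\perp Y \mid T$ — contradicting the hypothesis. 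Hence the orientation must be $T \rightarrow Y$. To conclude ``there is only one path $X \rightarrow T \rightarrow Y$ and it is blocked by $T$'': the uniqueness/blocking claim follows because any path between $X$ and $Y$ not equal to $X \rightarrow T \rightarrow Y$ must either not pass through $T$ or pass through $T$ as a collider — in the former case it is active given both $\emptyset$ and $\{T\}$, in the latter case active given $\{T\}$ — and either way it would make $X \not\perp\!\!\!\perp Y \mid T$, contradicting the hypothesis. (Strictly, ``only one path'' should be read as ``only one $d$-connecting/relevant path,'' which is what the argument delivers; I would state it that way to be safe.)

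\textbf{Main obstacle.} The genuinely delicate point in both parts is not the two-case edge-orientation dichotomy — that is routine — but the argument that \emph{no auxiliary active path} can sneak in when we toggle the conditioning set by the single node $T$. The clean way to organize this is to prove a small auxiliary observation first: if $A \perp\!\!\!\perp B \mid \emptyset$ and $A \not\perp\!\!\!\perp B \mid \{T\}$ for adjacent-to-both $T$, then $T$ must be a collider on some (hence, after tracing, on the direct) path $A - T - B$; and symmetrically if $A \not\perp\!\!\!\perp B \mid \emptyset$ and $A \perp\!\!\!\perp B \mid \{T\}$ then $T$ lies as a non-collider on every active $A$–$B$ path. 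Both follow from Definition~\ref{def3} plus faithfulness, and once this observation is in hand, parts (a) and (b) are immediate by matching it against the edge orientations forced by $X, Y \in \textbf{PC}_T$. I would therefore lead with that observation and keep the rest short.
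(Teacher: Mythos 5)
The paper states Lemma 1 without giving a proof of its own (it is invoked as a standard consequence of faithfulness, in the spirit of the usual collider/chain orientation rules), so there is no in-paper argument to match yours against; judged on its merits, your proposal does establish the two substantive conclusions, and your core case analysis for (b) — if $Y\rightarrow T$ then conditioning on the collider $T$ opens $X\rightarrow T\leftarrow Y$, so faithfulness would force $X\not\perp\!\!\!\perp Y\mid T$, contradiction, hence $T\rightarrow Y$ — is exactly right. For (a), though, you take a needlessly long route: under any of the three non-collider orientations the two-edge path $X-T-Y$ is already unblocked given $\emptyset$ (its only interior node $T$ is a non-collider and $T\notin\emptyset$), so $X\perp\!\!\!\perp Y\mid\emptyset$ plus faithfulness rules those orientations out immediately. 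The hypothesis $X\not\perp\!\!\!\perp Y\mid T$, and with it the whole ``no auxiliary active path given $\{T\}$'' analysis that you single out as the main obstacle, is not needed for the conclusion. (Your sketch of that analysis is in fact completable — the collider-with-descendant-$T$ case does trace back to a no-collider walk contradicting marginal independence — but it is only a sketch, and it is precisely the part you can delete.) One small point you should add for the V-structure wording: $X\perp\!\!\!\perp Y\mid\emptyset$ together with Theorem 1 gives that $X$ and $Y$ are non-adjacent, which the paper's definition of a V-structure requires.

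The one genuinely incorrect step is your justification of the ``only one path'' remark in (b). You assert that any path other than $X\rightarrow T\rightarrow Y$ either avoids $T$ or passes through $T$ as a collider, and that in the former case it ``is active given both $\emptyset$ and $\{T\}$.'' Neither claim holds: a path avoiding $T$ can be blocked by its own colliders (e.g.\ $X\rightarrow W\leftarrow Y$ with $W$ not an ancestor of $T$ is blocked under both conditioning sets and is perfectly compatible with the hypotheses), and a longer path such as $X\rightarrow A\rightarrow T\rightarrow Y$ passes through $T$ as a non-collider without being the direct edge path. So your argument does not prove path uniqueness — nor can any argument, since such extra paths can coexist with the hypotheses; the defensible reading of that sentence (in the lemma's prose as well as in your hedge) is only that every path which is open given $\emptyset$ must contain $T$ as a non-collider and is therefore blocked by $\{T\}$. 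This does not affect the main conclusion of (b), which rests solely on excluding $Y\rightarrow T$, but the uniqueness paragraph as written should be dropped or restated.
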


\par Step 1 (line 1 in Algorithm \ref{algorithmEMB}): EMB obtains \textbf{\emph{PC}}$_{\emph{T}}$ and \textbf{\emph{Sep}}$_{\emph{T}}$ of a target variable \emph{T} by utilizing an existing PC learning algorithm, where \textbf{\emph{Sep}}$_{\emph{T}}$ is a set that contains the  sets \textbf{\emph{Sep}}$_{\emph{T}}$\{{$\cdot$}\}  of all variables. In this paper, we use HITON-PC \cite{AliferisTS03} to find  the PC  of  \emph{T} (any other state-of-the-art PC learning algorithms can be used here to instantiate the RecogPC() function at Step 1 in Algorithm   \ref{algorithmEMB}).

\par Step 2 (line 2 in Algorithm \ref{algorithmEMB}): At this step, EMB learns spouses of \emph{T}. We design a RecogSpouses subroutine for learning spouses. The details of RecogSpouses are described in Algorithm \ref{FindSpouses}. RecogSpouses first finds candidate spouses from all variables within \emph{\textbf{U}}$\setminus$\{\emph{T}\}$\setminus$\textbf{\emph{PC}}$_{\emph{T}}$ that are conditionally independent of \emph{T}. If  \emph{X} and \emph{T} are conditionally independence, then  we  construct a set \textbf{\emph{Temp}} that consists of variables which belong to \emph{\textbf{PC}}$_{\emph{T}}$ and  are  dependent of \emph{X} given an empty set (lines 3-8 in Algorithm   \ref{FindSpouses}). If \emph{X}  and \emph{T} are conditionally independent conditioning on \textbf{\emph{Temp}}, then \emph{X} cannot be a spouse of \emph{T}. Otherwise, \emph{X} is regarded as a candidate spouse and lines 9-15 in Algorithm \ref{FindSpouses} will be executed. If \emph{X} and \emph{T} are dependent conditioning on \textbf{\emph{Sep}}$_{\emph{T}}$\{\emph{X}\} $\cup$ \emph{Y} (\emph{Y} $\in$ \textbf{\emph{Temp}}), then \emph{X} will be added to \textbf{\emph{CSP}}$_{\emph{T}}$\{\emph{Y}\} (lines 9-15 in Algorithm \ref{FindSpouses}). Since some non-parent variables of  \emph{Y}  will be added to \textbf{\emph{CSP}}$_{\emph{T}}$\{\emph{Y}\}, non-parent variables of each \emph{Y}  $\in$ \textbf{\emph{PC}}$_{\emph{T}}$ will be removed  from \textbf{\emph{CSP}}$_{\emph{T}}$\{\emph{Y}\} and  the spouse set \textbf{\emph{SP}}$_{\emph{T}}$\{\emph{Y}\} will be obtained after executing lines 17-24 in Algorithm \ref{FindSpouses}.

\par Step 3 (lines 3-8 in Algorithm \ref{algorithmEMB}): At this step, EMB removes false positives from the candidate set of PC of \emph{T}. For each  variable \emph{Y} within \textbf{\emph{PC}}$_{\emph{T}}$, if there exists a subset \textbf{\emph{Z}} of the union \textbf{\emph{SP}}$_{\emph{T}}$\{\emph{Y}\} $\cup$ \textbf{\emph{PC}}$_{\emph{T}}$ such that \emph{Y} and  \emph{T} are conditionally independent conditioning on \textbf{\emph{Z}}, \emph{Y} will be removed from \textbf{\emph{PC}}$_{\emph{T}}$, and \textbf{\emph{SP}}$_{\emph{T}}$\{\emph{Y}\} will be set to an empty set.

\par Step 4 (line 9 in Algorithm \ref{algorithmEMB}): At this step,  EMB distinguishes  parents from children of \emph{T} as many as possible. We propose a DistinguishPC subroutine to accomplish this goal. DistinguishPC first identifies some children of \emph{T} with the help of spouses of  \emph{T}. Second, DistinguishPC uses  the found N-structures to infer  edge directions between \emph{T} and its children. Finally, DistinguishPC distinguishes  parents from children of \emph{T} using Lemma 1.

\par The details of DistinguishPC  are described in Algorithm \ref{algorithmIdentifyPC}. First, DistinguishPC uses the learnt spouses to  identify some children of \emph{T} (lines 2-6 in Algorithm \ref{algorithmIdentifyPC}). For example, in Fig. \ref{Example1}, \emph{C} and \emph{D} are spouses of \emph{T}, \textbf{\emph{SP}}$_{\emph{T}}$\{{\emph{A}}\} = \{\emph{C}\}, \textbf{\emph{SP}}$_{\emph{T}}$\{{\emph{K}}\} = \{\emph{D}\}, and DistinguishPC identifies \emph{A} and \emph{K} as children of \emph{T}.  There exists an N-structure which is formed by \emph{C}, \emph{A}, \emph{B} and \emph{T} in Fig. \ref{Example1}, DistinguishPC  can determine the edge direction between \emph{T} and \emph{B} with the help of \textbf{\emph{SP}}$_{\emph{T}}$ and \textbf{\emph{CSP}}$_{\emph{T}}$. At Step 2, \emph{C} will be added to \textbf{\emph{CSP}}$_{\emph{T}}$\{{\emph{A}}\} and \textbf{\emph{CSP}}$_{\emph{T}}$\{{\emph{B}}\}, and \emph{C} will be removed from \textbf{\emph{SP}}$_{\emph{T}}$\{{\emph{B}}\} because \emph{C} is not a parent of \emph{B} (lines 17-24 in Algorithm \ref{FindSpouses}). Since \emph{C} is a spouse of \emph{T} and \emph{C} is within the set \textbf{\emph{CSP}}$_{\emph{T}}$\{{\emph{B}}\}, DistinguishPC identifies \emph{B} as a child of \emph{T} (lines 8-12 in Algorithm \ref{algorithmIdentifyPC}). Theorem \ref{thm2} gives the theoretical analysis. In addition, in order to orient more edges between \emph{T} and its PC, two rules in Lemma 1 are used.  If \emph{X} $\!\perp\!\!\!\perp$ \emph{Y} $|$ $\emptyset$ and \emph{X}  $\not\!\perp\!\!\!\perp$ \emph{Y} $|$ \emph{T}, we can conclude that  \emph{X} and  \emph{Y} are both parents of \emph{T}. Therefore, DistinguishPC identifies  parents of \emph{T} as many as possible using Lemma \ref{lemma1} (a) (lines 13-19 in Algorithm \ref{algorithmIdentifyPC}).  If \emph{X} is a parent of \emph{T}, \emph{X}  $\not\!\perp\!\!\!\perp$ Y $|$ $\emptyset$ and \emph{X}  $\!\perp\!\!\!\perp$ \emph{Y} $|$ \emph{T}, then  \emph{Y} is a child of  \emph{T}.  DistinguishPC uses the identified parents of \emph{T} to  determine  edge directions between \emph{T} and its children using Lemma \ref{lemma1} (b) (lines 21-28 in Algorithm \ref{algorithmIdentifyPC}).

\par We also propose a variant of EMB to further improve the efficiency of MB learning, which is referred to as EMB-II. Compared with EMB, in learning MB, instead of directly executing line 20 in Algorithm \ref{FindSpouses}, EMB-II  first ranks the variables within \textbf{\emph{SP}}$_{\emph{T}}$\{{\emph{Y}}\} in descending order according to the dependency with variable \emph{Y}, then executes line 20 in Algorithm \ref{FindSpouses}.
 \par We also propose a variant of ECLS to further improve the efficiency of local causal structure learning, which is referred to as ELCS-II. Compared with ECLS, ELCS-II uses EMB-II for MB learning instead of using EMB.

 \par  In the following, we will give the details of Theorem 2 and its proof.

\begin{theorem}\label{thm2}
In a faithful BN, given an N-structure consisting of  four variables T, A, B, C. T is a parent of A and B, C is a parent of A, there is no an edge between C and T, A is an ancestor of B, and the parents of B are in PC set of T, then EMB identifies B as a child of T during learning the MB of  T.
\end{theorem}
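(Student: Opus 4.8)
The plan is to trace the subroutine EMB (Algorithm~\ref{algorithmEMB}) on the target $T$ and show that, under the hypotheses of the N-structure, the variable $C$ ultimately witnesses $B$ as a child of $T$ inside DistinguishPC. Two facts are used repeatedly: faithfulness, to turn d-connection into conditional dependence, and Theorem~\ref{thm1}, which guarantees that two adjacent variables are conditionally dependent given every subset of the remaining variables. First I would observe that $A$ and $B$ lie in $\textbf{\emph{PC}}_{T}$ after Step~1 and are not removed in Step~3: since $T\rightarrow A$ and $T\rightarrow B$ are edges, Theorem~\ref{thm1} gives $T\not\!\perp\!\!\!\perp A\mid\textbf{\emph{Z}}$ and $T\not\!\perp\!\!\!\perp B\mid\textbf{\emph{Z}}$ for every $\textbf{\emph{Z}}$, so a correct PC-learning routine (here HITON-PC) keeps them and the removal test in Step~3 of Algorithm~\ref{algorithmEMB} cannot fire on $A$ or $B$. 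Since $C$ is not adjacent to $T$ we have $C\notin\textbf{\emph{PC}}_{T}$, so $C$ is examined by the outer loop of RecogSpouses (Algorithm~\ref{FindSpouses}); and because every parent of $B$ other than $T$ lies in $\textbf{\emph{PC}}_{T}$, no parent of $B$ is a candidate spouse, so $\textbf{\emph{SP}}_{T}\{B\}$ is empty after Step~2 and the identification of $B$ must come through the N-structure branch of DistinguishPC.

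The heart of the proof is Step~2. Running RecogSpouses with $X=C$: the set \textbf{\emph{Temp}} contains $A$ because $C\rightarrow A$ gives $C\not\!\perp\!\!\!\perp A\mid\emptyset$, and contains $B$ because the directed path $C\rightarrow A\rightsquigarrow B$ gives $C\not\!\perp\!\!\!\perp B\mid\emptyset$. Conditioning on $A$ activates the collider $T\rightarrow A\leftarrow C$, so $C\not\!\perp\!\!\!\perp T\mid\textbf{\emph{Temp}}$ and the inner loop is entered. Now $\textbf{\emph{Sep}}_{T}\{C\}$ is a set that d-separates $C$ from $T$, hence it can contain neither $A$ nor any descendant of $A$ (otherwise the collider $T\rightarrow A\leftarrow C$ would already be open). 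Therefore: (i) conditioning on $\{A\}\cup\textbf{\emph{Sep}}_{T}\{C\}$ leaves the collider $T\rightarrow A\leftarrow C$ open, so $C\not\!\perp\!\!\!\perp T\mid\{A\}\cup\textbf{\emph{Sep}}_{T}\{C\}$ and $C$ is added to $\textbf{\emph{CSP}}_{T}\{A\}$; (ii) on the concatenated path $C\rightarrow A\rightsquigarrow B\leftarrow T$ the vertex $B$ is the unique collider and every other vertex is a chain vertex that is not in $\textbf{\emph{Sep}}_{T}\{C\}$, so conditioning on $\{B\}\cup\textbf{\emph{Sep}}_{T}\{C\}$ opens this path, giving $C\not\!\perp\!\!\!\perp T\mid\{B\}\cup\textbf{\emph{Sep}}_{T}\{C\}$, whence $C$ is added to $\textbf{\emph{CSP}}_{T}\{B\}$. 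Finally, in the pruning phase (lines~17--24 of Algorithm~\ref{FindSpouses}) $C$ stays in $\textbf{\emph{SP}}_{T}\{A\}$, because the edge $C\rightarrow A$ together with Theorem~\ref{thm1} rules out $C\!\perp\!\!\!\perp A\mid\textbf{\emph{Z}}$ for every admissible $\textbf{\emph{Z}}$; thus $C$ is a genuine spouse of $T$ (it may be dropped from $\textbf{\emph{SP}}_{T}\{B\}$, since $C$ is not a parent of $B$, but $\textbf{\emph{CSP}}_{T}\{B\}$ is never pruned and still contains $C$).

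It then only remains to run DistinguishPC (Algorithm~\ref{algorithmIdentifyPC}): $A$ enters $\textbf{\emph{C}}_{T}$ because $\textbf{\emph{SP}}_{T}\{A\}\neq\emptyset$ (lines~2--6), and $B$ enters $\textbf{\emph{C}}_{T}$ because $\textbf{\emph{CSP}}_{T}\{B\}$ contains a recognized spouse of $T$, namely $C$ (lines~8--12); hence $B$ is identified as a child of $T$, which is the claim. I expect the main obstacle to be part~(ii) above, namely establishing $C\not\!\perp\!\!\!\perp T\mid\{B\}\cup\textbf{\emph{Sep}}_{T}\{C\}$: this is exactly where the hypotheses ``$A$ is an ancestor of $B$'' and ``$T$ is a parent of $B$'' enter (together they place a collider precisely at $B$), and it also needs $\textbf{\emph{Sep}}_{T}\{C\}$ to avoid every chain vertex of $A\rightsquigarrow B$, which I would justify by the observation that any set d-separating $C$ from $T$ cannot contain a descendant of $A$ without opening the collider $T\rightarrow A\leftarrow C$. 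Everything else is bookkeeping that reduces to faithfulness and Theorem~\ref{thm1}.
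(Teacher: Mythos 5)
Your proof is correct and takes essentially the same route as the paper's: trace EMB, note that $C$ is recognized as a spouse of $T$ through the V-structure $C\rightarrow A\leftarrow T$, that $C$ enters $\mathbf{CSP}_{T}\{B\}$ because $C$ and $T$ remain dependent given $\{B\}\cup\mathbf{Sep}_{T}\{C\}$, and that DistinguishPC then flags $B$ as a child. The only (welcome) difference is that you establish that key dependence directly, by exhibiting the d-connecting path $C\rightarrow A\rightsquigarrow B\leftarrow T$ and observing that $\mathbf{Sep}_{T}\{C\}$ can contain no descendant of $A$, whereas the paper simply asserts the dependence and argues by contradiction that $B$ cannot be a parent; your version fills in a detail the paper leaves implicit.
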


\begin{proof}
\par Under the faithfulness assumption, the PC of a target variable  \emph{T} only contains parents and children of \emph{T}. Since  \emph{C} is a parent of  \emph{A}, there is no an edge between \emph{C} and \emph{T}, then EMB identifies \emph{C} as a spouse of \emph{T} since there is a V-structure ($\emph{C}\rightarrow \emph{A} \leftarrow \emph{T}$) around  \emph{A}.  At step 2 of EMB,  \emph{C} will be added to \textbf{\emph{CSP}}$_{\emph{T}}$\{{\emph{B}}\} (lines 2-16 in Algorithm \ref{FindSpouses}) since \emph{C} and \emph{T} are conditionally dependent given the conditioning set \{\emph{B}\} $\cup$  \textbf{\emph{Sep}}$_{\emph{T}}$\{\emph{C}\}. If \emph{B} is a parent of \emph{T}, then  \emph{C} and \emph{T} are conditionally independent given the conditioning set \{\emph{B}\} $\cup$  \textbf{\emph{Sep}}$_{\emph{T}}$\{\emph{C}\}, since all paths from  \emph{T} to \emph{C} are blocked by conditioning set \{\emph{B}\} $\cup$  \textbf{\emph{Sep}}$_{\emph{T}}$\{\emph{C}\}. Therefore, \emph{B} is a child of \emph{T}.
\end{proof}

\subsection{Tracing}
We first trace the execution of EMB using the example in Fig. \ref{executexample}, then trace the execution of ELCS using the example in Fig. \ref{executexample2}.

\subsubsection{Tracing EMB}
\label{TraEMB}

\begin{figure*}
  \centering
  % Requires \usepackage{graphicx}
  \includegraphics[width=18cm]{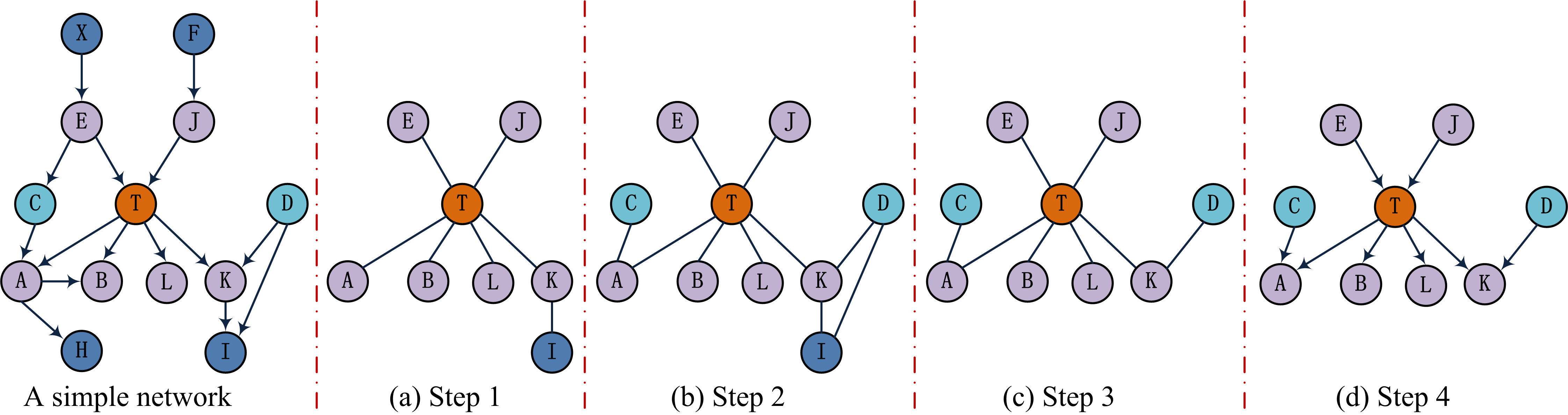}
  \caption{An example of the execution process of EMB.}
  \label{executexample}
\end{figure*}

\par We utilize the example  in Fig. \ref{executexample} to trace the execution of EMB. Suppose that we have a dataset for variable set \textbf{\emph{U} } = \{\emph{A}, \emph{B}, \emph{C}, \emph{D}, \emph{E}, \emph{F}, \emph{X}, \emph{H}, \emph{I}, \emph{J}, \emph{K}, \emph{L}, \emph{T}\}. The independence relationships between variables can be represented by the Bayesian Network structure in Fig. \ref{executexample}. In the following, we regard \emph{T} as the target variable, and give the execution process of EMB.
\par (1) step 1: referring to the simple network, i.e., the left network in Fig. \ref{executexample}. First,  HITON-PC is used to find the PC of \emph{T}. According to Theorem 1, \emph{A}, \emph{B}, \emph{L}, \emph{K}, \emph{E} and \emph{J} will be added to \textbf{\emph{PC}}$_{\emph{T}}$. Note that \emph{D} is conditionally independent of  \emph{T} given an empty set, hence  \emph{D} will not be in any of the conditioning sets for higher order conditional independence tests. As a result, \emph{I} will be added to \textbf{\emph{PC}}$_{\emph{T}}$ since \emph{T} and \emph{I} are conditionally dependent given conditioning set \{\emph{K}\}. Then, as shown in Fig. \ref{executexample} (a), \textbf{\emph{PC}}$_{\emph{T}}$ = \{\emph{A}, \emph{B}, \emph{L}, \emph{K}, \emph{E}, \emph{J}, \emph{I}\}.
\par (2) step 2: as shown in Fig.\ref{executexample} (b), EMB discovers the spouses of \emph{T}. \emph{X} and each variable within \textbf{\emph{Temp}} = \{\emph{A}, \emph{B}, \emph{L}, \emph{K}, \emph{E}, \emph{I}\} are conditionally dependent given an empty set, while \emph{X} is conditionally independent of \emph{T} given the conditioning set \textbf{\emph{Temp}}, so that \emph{X} cannot be a candidate spouse of \emph{T} since each path from \emph{X} to \emph{T} is blocked by \textbf{\emph{Temp}}. Similarly, both \emph{F} and \emph{H} are not candidate spouses.  \emph{C} and each variable within \textbf{\emph{Temp}} = \{\emph{A}, \emph{B}, \emph{L}, \emph{K}, \emph{E}, \emph{I}\} are conditionally dependent given an empty set, and \emph{C} is dependent of \emph{T} given  \textbf{\emph{Temp}}, and we need to conduct further tests. Owning to \emph{C}  $\!\perp\!\!\!\perp$ \emph{T} $|$ \emph{E}, \emph{C}  $\not\!\perp\!\!\!\perp$ \emph{T} $|$ \{\emph{E}, \emph{A}\}, \emph{C}  $\not\!\perp\!\!\!\perp$ \emph{T} $|$ \{\emph{E}, \emph{B}\}, \emph{C}  $\!\perp\!\!\!\perp$ \emph{T} $|$ \{\emph{E}, \emph{L}\}, \emph{C}  $\!\perp\!\!\!\perp$ \emph{T} $|$ \{\emph{E}, \emph{K}\} and \emph{C}  $\!\perp\!\!\!\perp$ \emph{T} $|$ \{\emph{E}, \emph{I}\}, hence  \emph{C} is added to  \textbf{\emph{CSP}}$_{\emph{T}}$\{{\emph{A}}\} and \textbf{\emph{CSP}}$_{\emph{T}}$\{{\emph{B}}\}, \textbf{\emph{CSP}}$_{\emph{T}}$\{{\emph{A}}\} = \{\emph{C}\}, \textbf{\emph{CSP}}$_{\emph{T}}$\{{\emph{B}}\} = \{\emph{C}\}.  Similarly,  \emph{D} is added to \textbf{\emph{CSP}}$_{\emph{T}}$\{{\emph{K}}\} and  \textbf{\emph{CSP}}$_{\emph{T}}$\{{\emph{I}}\}, \textbf{\emph{CSP}}$_{\emph{T}}$\{{\emph{K}}\} = \{\emph{D}\}, \textbf{\emph{CSP}}$_{\emph{T}}$\{{\emph{I}}\} = \{\emph{D}\}. In the following, \emph{C} will be removed from \textbf{\emph{SP}}$_{\emph{T}}$\{{\emph{B}}\} since \emph{C}  $\!\perp\!\!\!\perp$ \emph{B} $|$ \{\emph{A}, \emph{T}\} (lines 17-24 in Algorithm \ref{FindSpouses}).  After this step, \textbf{\emph{SP}}$_{\emph{T}}$\{{\emph{A}}\} = \{\emph{C}\}, \textbf{\emph{SP}}$_{\emph{T}}$\{{\emph{K}}\} = \{\emph{D}\}, \textbf{\emph{SP}}$_{\emph{T}}$\{{\emph{I}}\} = \{\emph{D}\}.
\par (3) step 3: as shown in Fig. \ref{executexample} (c), after checking at line 4 in Algorithm \ref{algorithmEMB}, \emph{I} will be removed from \textbf{\emph{PC}}$_{\emph{T}}$ since \emph{I}  $\!\perp\!\!\!\perp$ \emph{T} $|$ \{\emph{K}, \emph{D}\}. After this step, \textbf{\emph{PC}}$_{\emph{T}}$ = \{\emph{A}, \emph{B}, \emph{L}, \emph{K}, \emph{E}, \emph{J}\}, \textbf{\emph{SP}}$_{\emph{T}}$ = \{\emph{C}, \emph{D}\}, and \emph{\textbf{MB}}$_{\emph{T}}$ =\{\emph{A}, \emph{B}, \emph{L}, \emph{K}, \emph{E}, \emph{J}, \emph{C}, \emph{D}\}.
\par (4) step 4: as shown in Fig. \ref{executexample} (d), EMB orients the edge directions between  \emph{T} and its PC  as many as possible.  Since \emph{C} is a spouse of \emph{T}, and \emph{C} has been added to  \textbf{\emph{SP}}$_{\emph{T}}$\{{\emph{B}}\} at step 2, based on Theorem \ref{thm2}, \emph{B} is a child of \emph{T}. In addition, according to Lemma 1, \emph{E} and \emph{J} are parents of \emph{T} since \emph{E}  $\!\perp\!\!\!\perp$ \emph{J} $|$ $\emptyset$ and \emph{E}  $\not\!\perp\!\!\!\perp$ \emph{J} $|$ \emph{T}. \emph{L} is a child of \emph{T} since \emph{E} is a parent of \emph{T}, \emph{E}  $\not\!\perp\!\!\!\perp$ \emph{L} $|$ $\emptyset$ and \emph{E}  $\!\perp\!\!\!\perp$ \emph{L} $|$ \emph{T}.

\subsubsection{Tracing ELCS}
\label{TraELCS}
We use the example in Fig. \ref{executexample2} to trace the execution of ELCS.
\begin{figure*}
  \centering
  % Requires \usepackage{graphicx}
  \includegraphics[width=12cm]{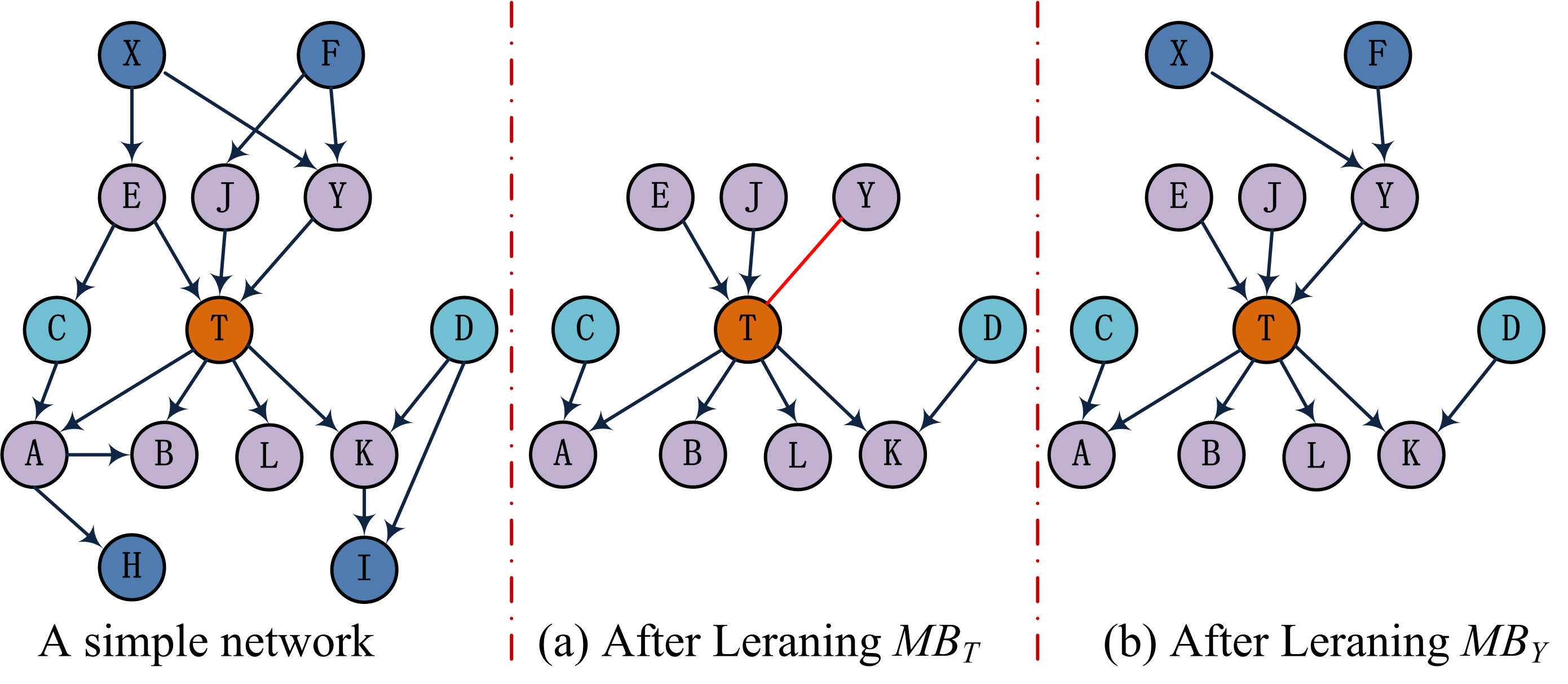}
  \caption{An example of the execution process of ELCS.}
  \label{executexample2}
\end{figure*}
\par  Suppose that we have a dataset for variable set \textbf{\emph{U} } = \{\emph{A}, \emph{B}, \emph{C}, \emph{D}, \emph{E}, \emph{F}, \emph{X}, \emph{H}, \emph{I}, \emph{J}, \emph{K}, \emph{L}, \emph{T}, \emph{Y}\}. The independence relationships between variables can be represented by the BN structure in Fig.\ref{executexample2}. In the following, we regard  \emph{T} as the target variable, and give the execution process of distinguishing  parents from children of \emph{T} using ELCS. We use the $G(X,Y)$ = -1 to represent that  \emph{X} is a parent of  \emph{Y}, $G(X,Y)$ = 1  represents that \emph{X} is adjacent to  \emph{Y}, $G(X,Y)$ = 0  represents that there is no an edge between  \emph{X} and  \emph{Y}.
\par (1) step 1: referring to the simple network, i.e., the left network in Fig. \ref{executexample2}. We first use EMB to distinguish  parents from children of \emph{T}. After learning the MB of \emph{T}, as shown in Fig. \ref{executexample2} (a), the \textbf{\emph{PC}}$_{\emph{T}}$ = \{\emph{A}, \emph{B}, \emph{L}, \emph{K}, \emph{E}, \emph{J}, \emph{Y}\} and  \textbf{\emph{SP}}$_{\emph{T}}$ = \{\emph{C}, \emph{D}\} are obtained. Then, $G(T,A)$ = -1,  $G(T,B)$ = -1,  $G(T,L)$ = -1,  $G(T,K)$ = -1,  $G(E,T)$ = -1,  $G(J,T)$ = -1 and $G(Y,T)$ = 1. The edge direction  between  \emph{Y} and \emph{T} is unsure.
\par (2) step 2: to resolve $G(Y,T)$, as shown in Fig. \ref{executexample2} (b), we need to make a further search. In the following, the MB of \emph{Y} is extracted using EMB, and $G(X,Y)$ = -1,  $G(F,Y)$ = -1. After updating the current local structure using Meek rules, we can learn that \emph{Y} is a parent of \emph{T}, that is, $G(Y,T)$ = -1.

\subsection{Theoretical Analysis}
In the following, we first theoretically analyze the correctness of EMB, then theoretically analyze the correctness of ELCS.
\begin{theorem}[\textbf{Correctness of EMB}]\label{thm3}
Under the faithfulness assumption, and all CI tests are reliable, EMB  finds all and only the MB of a  given  variable.
\end{theorem}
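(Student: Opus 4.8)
The plan is to show soundness and completeness of EMB by decomposing the claim along the four steps of Algorithm~\ref{algorithmEMB} and arguing that each step is correct under the stated assumptions, so that the union $\textbf{\emph{PC}}_{\emph{T}} \cup \textbf{\emph{SP}}_{\emph{T}}$ returned by EMB equals $\textbf{\emph{MB}}_{\emph{T}}$. First I would invoke the correctness of the PC-learning subroutine at Step~1 (HITON-PC, or whichever instantiation of RecogPC is used): under faithfulness and reliable CI tests, $\textbf{\emph{PC}}_{\emph{T}}$ initially computed contains all true parents and children of $\emph{T}$, together with the separating sets $\textbf{\emph{Sep}}_{\emph{T}}\{\cdot\}$ for the non-adjacent variables. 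The subtlety here is that HITON-PC can admit false positives (descendants of children reachable through a spouse, like variable $\emph{I}$ in the tracing example), so at this stage $\textbf{\emph{PC}}_{\emph{T}}$ is a superset of the true PC set; the cleanup happens at Step~3.

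Next I would argue the correctness of Step~2 (RecogSpouses, Algorithm~\ref{FindSpouses}). The key facts to establish are: (i) every true spouse $\emph{X}$ of $\emph{T}$ — i.e., a variable with $\emph{X}\rightarrow\emph{Y}\leftarrow\emph{T}$ for some true child $\emph{Y}$ — is detected, because $\emph{X}$ is marginally/conditionally dependent on some $\emph{Y}\in\textbf{\emph{PC}}_{\emph{T}}$, is $d$-separated from $\emph{T}$ by some set (so it survives the filter at line~2 as a candidate), yet becomes dependent on $\emph{T}$ once the collider $\emph{Y}$ is added to the conditioning set $\textbf{\emph{Sep}}_{\emph{T}}\{\emph{X}\}$; and (ii) the pruning loop at lines~18--24, which tests $\emph{X}\!\perp\!\!\!\perp\emph{Y}\mid\textbf{\emph{Z}}$ over subsets $\textbf{\emph{Z}}\subseteq\textbf{\emph{SP}}_{\emph{T}}\{\emph{Y}\}\cup\{\emph{T}\}\cup\textbf{\emph{PC}}_{\emph{T}}\setminus\{\emph{X},\emph{Y}\}$, removes exactly the non-parents of $\emph{Y}$ from $\textbf{\emph{SP}}_{\emph{T}}\{\emph{Y}\}$ by Theorem~\ref{thm1} (if $\emph{X}$ is a genuine parent of $\emph{Y}$ it stays dependent under every conditioning set). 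One must also check that spurious candidates that are not spouses of $\emph{T}$ at all are eliminated — either by the $\textbf{\emph{Temp}}$-filter at line~9 or by the $\textbf{\emph{PC}}_{\emph{T}}$-pruning of $\textbf{\emph{SP}}_{\emph{T}}\{\emph{Y}\}$.

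Then Step~3 (lines~3--8 of Algorithm~\ref{algorithmEMB}) removes the false-positive PC members: for any $\emph{Y}$ that is a descendant-but-not-child admitted by HITON-PC, there is a subset $\textbf{\emph{Z}}\subseteq\textbf{\emph{SP}}_{\emph{T}}\{\emph{Y}\}\cup\textbf{\emph{PC}}_{\emph{T}}\setminus\{\emph{Y}\}$ $d$-separating $\emph{Y}$ from $\emph{T}$ (this is the standard STMB-style argument: the spouse link that created the dependence is broken once the relevant child and its co-parent are conditioned on), while genuine adjacent variables survive by Theorem~\ref{thm1}. After this, $\textbf{\emph{PC}}_{\emph{T}}$ equals the true PC set and $\textbf{\emph{SP}}_{\emph{T}}$ the true spouse set, so $\textbf{\emph{PC}}_{\emph{T}}\cup\textbf{\emph{SP}}_{\emph{T}}=\textbf{\emph{MB}}_{\emph{T}}$ by Definition~\ref{def5}. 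Step~4 (DistinguishPC) only orients edges and reclassifies variables among $\textbf{\emph{P}}_{\emph{T}},\textbf{\emph{C}}_{\emph{T}},\textbf{\emph{UN}}_{\emph{T}}$ without adding or deleting MB members, so it does not affect the set-equality claim — though for a complete treatment I would note that its orientations are sound by Lemma~\ref{lemma1} and Theorem~\ref{thm2}.

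The main obstacle I anticipate is Step~2/Step~3: carefully proving that the interplay between the $\textbf{\emph{CSP}}_{\emph{T}}\{\cdot\}$ bookkeeping, the $\textbf{\emph{Temp}}$ set, and the two pruning loops neither discards a true spouse nor retains a false one in full generality — in particular handling variables that are simultaneously reachable through multiple colliders, and confirming that the restricted conditioning sets used in the CI tests (subsets of $\textbf{\emph{SP}}_{\emph{T}}\{\emph{Y}\}\cup\textbf{\emph{PC}}_{\emph{T}}\cup\{\emph{T}\}$ rather than all of $\textbf{\emph{U}}$) are always rich enough to realize the required $d$-separations. This is where a detailed case analysis on the path structure around each child of $\emph{T}$, appealing repeatedly to faithfulness and to Theorem~\ref{thm1}, will be needed.
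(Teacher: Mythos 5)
Your proposal matches the paper's own proof in both structure and substance: a step-by-step argument that Step~1 returns a superset of the true PC (possibly with descendant false positives), Step~2 detects every true spouse via the collider test and prunes non-parents of each child $Y$ using conditioning sets that contain all of $Y$'s parents, Step~3 removes the false PC members so that $\textbf{\emph{PC}}_{\emph{T}}\cup\textbf{\emph{SP}}_{\emph{T}}=\textbf{\emph{MB}}_{\emph{T}}$, and Step~4 only orients edges (justified by Lemma~\ref{lemma1} and Theorem~\ref{thm2}) without altering MB membership. Your closing remarks on the adequacy of the restricted conditioning sets flag real subtleties that the paper's proof also passes over quickly, but the approach is essentially identical.
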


\begin{proof}
\par At step 1, EMB finds all the true PC variables. According to Theorem \ref{thm1}, the variables which are dependent with the target variable \emph{T} will be added to \textbf{\emph{PC}}$_{\emph{T}}$. \textbf{\emph{PC}}$_{\emph{T}}$ contains the true parents and children of \emph{T}, since the true PC variables are always dependent of \emph{T}. In addition, \textbf{\emph{PC}}$_{\emph{T}}$ also contains some descendants of \emph{T} \cite{GaoJ17}. Then, based on the results of  step 1, EMB finds all the true spouses of \emph{T} at step 2. If \emph{Y} is a collider, \emph{X} is regarded as a candidate spouse if there exists a V-structure ($\emph{X}\rightarrow\emph{Y}\leftarrow\emph{T}$) formed by \emph{X}, \emph{Y} and \emph{T}, and \emph{X} will be added to \textbf{\emph{CSP}}$_{\emph{T}}$\{{\emph{Y}}\} (lines 1-16 in Algorithm \ref{FindSpouses}). Owning to an exhaustive search, EMB will not miss any true  spouses of \emph{T}. In the following,  the variable \emph{X} $\in$ \textbf{\emph{SP}}$_{\emph{T}}$\{{\emph{Y}}\} that is a non-parent of \emph{Y} will be removed from \textbf{\emph{SP}}$_{\emph{T}}$\{{\emph{Y}}\} (lines 17-24 in Algorithm \ref{FindSpouses}). According to the Markov condition, the variable  \emph{X} $\in$ \textbf{\emph{SP}}$_{\emph{T}}$\{{\emph{Y}}\} will be removed if it is not a parent of \emph{Y}, since conditioning set \textbf{\emph{SP}}$_{\emph{T}}$\{{\emph{Y}}\} $\cup$ \emph{T} $\cup$  \textbf{\emph{PC}}$_{\emph{T}}$ $\setminus$ \{\emph{X},\emph{Y}\} contains  all true parents of \emph{Y}. Therefore, \textbf{\emph{SP}}$_{\emph{T}}$ contains all the true spouses of \emph{T}.  The learnt \textbf{\emph{PC}}$_{\emph{T}}$ may contain some false PC variables. \emph{T} and each false PC variable are conditionally independent given the spouses of the false PC variable and \textbf{\emph{PC}}$_{\emph{T}}$. At the step 3, the false  PC variables found at step 1 and false  spouses found at step 2 will be removed (lines 3-8 in Algorithm \ref{algorithmEMB}). Then, EMB contains all and only the true  PC variables \textbf{\emph{PC}}$_{\emph{T}}$ and spouses \textbf{\emph{SP}}$_{\emph{T}}$ after executing Algorithm \ref{algorithmEMB}, and \textbf{\emph{PC}}$_{\emph{T}}$ and \textbf{\emph{SP}}$_{\emph{T}}$ together form all and only true  MB variables. At step 4, based on Theorem \ref{thm2} and Lemma \ref{lemma1}, parents and children of \emph{T} are learnt by  EMB  are correct.

\end{proof}

 \par Theorem \ref{thm4}  describes the correctness of the proposed ELCS algorithm. In the following, we will introduce  Theorem \ref{thm4} and its proof in detail.
% \par\textbf{Theorem 4 (Correctness of ELCS):} Under the faithfulness assumption, Algorithm \ref{algorithmLCDMB} will distinguish the direct causes and effects of a given  variable.
\begin{theorem}[\textbf{Correctness of ELCS}]\label{thm4}
%\textbf{The Correctness of ELCS algorithm.}
Under the faithfulness assumption,  and all CI tests are reliable, ELCS  distinguishes all parents from children of a given  variable.
\end{theorem}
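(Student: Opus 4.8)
The plan is to establish two complementary facts and then combine them: \emph{soundness}, that every orientation ELCS ever assigns to an edge incident to \emph{T} is correct; and \emph{completeness}, that ELCS orients every edge incident to \emph{T} whose direction is common to all DAGs in the Markov equivalence class of the true BN. Together these say precisely that ELCS distinguishes all (identifiable) parents of \emph{T} from its children, which is the content of Theorem~\ref{thm4} (read, as in \cite{GaoJ15}, modulo non-identifiable edges).

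For soundness I would argue inductively over the iterations of the \textbf{Repeat} loop in Algorithm~\ref{algorithmLCDMB}. Each time a variable \emph{X} is popped and has not been processed before, EMB(\emph{D},\emph{X}) is invoked; by Theorem~\ref{thm3}, under faithfulness and reliable CI tests EMB returns exactly \textbf{\emph{MB}}$_{\emph{X}}$, and every element it places in \textbf{\emph{P}}$_{\emph{X}}$ or \textbf{\emph{C}}$_{\emph{X}}$ is labelled correctly (the spouse-based labels by the V-structure characterisation, the N-structure labels by Theorem~\ref{thm2}, the remaining labels by the two rules of Lemma~\ref{lemma1}). Hence after line~8 the directed edges added between \emph{X} and its PC are correct, and the edges recorded in \emph{\textbf{W}} form a correct partial skeleton carrying a correct set of V-structures. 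Meek's rules \cite{Meek1995Causal} are sound, so line~13 only adds orientations shared by every DAG consistent with that skeleton and those V-structures. Therefore every directed edge in \emph{\textbf{W}}, in particular every edge at \emph{T} that ELCS ever orients, agrees with the true BN. (Termination also follows here: \emph{\textbf{W}} grows monotonically and is bounded by \emph{\textbf{U}}, so only finitely many distinct variables are ever processed.)

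For completeness I would track how \emph{\textbf{W}} expands. Whenever EMB cannot decide the orientation of some \emph{Y} $\in$ \textbf{\emph{PC}}$_{\emph{X}}$, \emph{Y} is pushed onto \emph{Que}, so \emph{\textbf{W}} grows over a connected region containing \emph{T}, and the loop halts only when (1) all PC members of \emph{T} are labelled, (2) \emph{Que} empties, or (3) \emph{\textbf{W}} $=$ \emph{\textbf{U}}. Case (3) is immediate: every variable has been processed, so \emph{\textbf{W}} carries the full skeleton and all V-structures, and Meek's rules then recover the CPDAG and hence every identifiable edge at \emph{T}. Case (1) is vacuous. The substantive case is (2): I would show that if \emph{Que} empties while some edge between \emph{T} and a neighbour \emph{Y} is still undirected, then that edge is non-identifiable. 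The key is that when \emph{Que} empties, \emph{\textbf{W}} is \emph{closed} --- any variable adjacent (in the true graph) to a processed variable through an edge ELCS has not oriented has itself been processed, for otherwise it would still sit in \emph{Que}. Any forced orientation of the \emph{T}--\emph{Y} edge in the CPDAG arises from a finite derivation alternating V-structures and Meek-rule applications; by induction on this derivation one checks that every vertex and edge it uses, with its correct orientation/V-structure status, already lies in \emph{\textbf{W}}, so line~13 would have replayed the derivation and oriented \emph{T}--\emph{Y}, a contradiction.

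The main obstacle is exactly this closure argument in case (2): making rigorous that the locally grown structure \emph{\textbf{W}}, at the moment \emph{Que} empties, contains every vertex and edge needed to reproduce any CPDAG-level orientation chain that touches \emph{T}. Two points need care: (i) that EMB (via Algorithm~\ref{algorithmEMB} together with line~8 of Algorithm~\ref{algorithmLCDMB}) actually exposes to \emph{\textbf{W}} \emph{all} V-structures at or adjacent to the processed variable, so that the V-structure information in \emph{\textbf{W}} is not merely correct but sufficient for Meek's completeness; and (ii) that the queue-propagation rule --- push precisely the undistinguished PC members --- never strands a vertex that some Meek-rule chain through \emph{T} would require. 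Once the closure property is secured, soundness and completeness combine to give Theorem~\ref{thm4}.
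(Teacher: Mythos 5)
Your soundness half is, in substance, exactly the paper's proof: the paper argues that EMB returns the true MB/PC (Theorem~\ref{thm3}), that children identified via spouses are correct by the V-structure characterisation (Definition~\ref{defVS}), that children identified via N-structures are correct by Theorem~\ref{thm2}, that the orientations from Lemma~\ref{lemma1} are correct, and that Meek's rules only add correct orientations --- and it stops there, concluding that everything ELCS distinguishes is correct. Where you genuinely diverge is your second half: the paper makes no completeness argument at all. It never addresses the case in which \emph{Que} empties with the \emph{T}--\emph{Y} edge still undirected, never claims such an edge must be non-identifiable, and never establishes the closure property you identify (that at queue-exhaustion \emph{\textbf{W}} contains every vertex, adjacency and V-structure needed to replay any CPDAG-level orientation chain reaching \emph{T}). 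So your proposal is strictly more ambitious: it would justify the word ``all'' in the theorem statement (modulo identifiability, as in the CMB analysis of \cite{GaoJ15}), whereas the paper's proof only delivers soundness of whatever orientations happen to be produced. The trade-off is that the part you add is precisely the part you admit is not yet rigorous --- your points (i) (EMB exposing all V-structures incident to processed variables so Meek completeness applies) and (ii) (the push-only-undistinguished-PC rule never stranding a vertex needed by an orientation chain) are real obligations, not routine bookkeeping, and neither is discharged by anything in the paper. If you only want to match the published result, your first paragraph suffices; if you want the stronger reading of the theorem, the closure lemma must actually be proved, and you should not expect to find it in the paper's argument.
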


%  or they can not be distinguished further

\begin{proof}
Under the causal faithfulness assumption, given a target variable, EMB finds all and only the true  MB variables and the true  PC variables of the target variable. The learnt PC set contains all and only the parents and children of the target variable. Based on Definition \ref{defVS}, the children identified by the learnt spouses are correct. Base on Theorem \ref{thm2}, the children identified by the found N-structures are correct. All the parents and children identified by Lemma \ref{lemma1} are correct.  ELCS updates the local causal structures until the parents and children of the target variable have been distinguished. Meek rules \cite{Meek1995Causal} are used to orient other undirected edges between the target variable and the variables that are adjacent to the target variable during learning the local causal structure, and all the edge directions determined by Meek rules are correct. Thus, all the parents and children of a given target variable distinguished by ELCS are correct.
%\par Under the causal faithfulness assumption, EMB discovers the true positive MB variables and \textbf{\emph{PC}}$_{\emph{T}}$ of a given target variable \emph{T}, and the learnt \textbf{\emph{PC}}$_{\emph{T}}$ only contains the  true parents and children of \emph{T}. Several children   of \emph{T} can be identified with the help of spouses. Additionally, several parents and children of \emph{T} will be determined  by Theorem 2 and Lemma 1.  ELCS updates the local causal structures until parents and children can be determined or parents and children can not be identified further. Meek rules \cite{Meek1995Causal} are used to orient the direction of other undirected edges between \emph{T} and variables adjacent to \emph{T} with the help of current local causal structure, and all the edge directions  determined by Meek rules are correct. Thus, the direct causes and effects of a given target variable  are distinguished correctly by ELCS.
\end{proof}

%we find the MB of variable \emph{Y} using EMB algorithm. Then, $G(X,Y)$ = 1. Owing to the direction between variables \emph{T} and \emph{M} can not identified, we need further search. As shown in Fig.\ref{executexample2} (c), we find the MB of variable \emph{N}. Then, $G(N,O)$ = 1.
\subsection{Computational Complexity}
The  computational complexity of ELCS algorithm depends on the steps of discovering  MB. In the following, we will give the computational complexity of EMB and ELCS.
\par \textbf{The computational complexity of EMB:} The computational complexity of EMB depends on its four steps.  Given a target variable \emph{T},  at step 1, the computational cost is dominated by HITON-PC which takes at most $O(|\emph{\textbf{U}}|2^{|\emph{\textbf{PC}}_{\emph{T}}|})$ CI tests to find the PC. Step 2 takes $O(|\textbf{\emph{SP}}_{\emph{T}} |2^{|\textbf{\emph{PC}}_{\emph{T}}|\emph{+}|\textbf{\emph{SP}}_{\emph{T}}|})$   CI tests, step 3 takes $O(|\textbf{\emph{PC}}_{\emph{T}} |2^{|\textbf{\emph{PC}}_{\emph{T}}| \emph{+} |\textbf{\emph{SP}}_{\emph{T}}|})$  CI tests,   step 4 takes $O(2{|\textbf{\emph{PC}}_{\emph{T}}|}^2)$  CI tests. Thus, the complexity of EMB is $O(|\emph{\textbf{U}}|2^{|\emph{\textbf{PC}}_{\emph{T}}|} \emph{+} |\textbf{\emph{SP}}_{\emph{T}} |2^{|\textbf{\emph{PC}}_{\emph{T}}|\emph{+}|\textbf{\emph{SP}}_{\emph{T}}|} + |\textbf{\emph{PC}}_{\emph{T}} |2^{|\textbf{\emph{PC}}_{\emph{T}}|\emph{+}|\textbf{\emph{SP}}_{\emph{T}}|} + 2{|\textbf{\emph{PC}}_{\emph{T}}|}^2)$ = $O(|\emph{\textbf{U}}|2^{|\emph{\textbf{PC}}_{\emph{T}}|})$. Let $|\textbf{\emph{PC}}|$ represent the largest size of the PC sets of all the variables, the complexity of EMB is $O(|\emph{\textbf{U}}|2^{|\emph{\textbf{PC}}|})$.

\par We summarize the computational complexity of EMB and its rivals in Table \ref{OTimeOfMB}. From the table, IAMB is the fastest among all MB learning algorithms. MMMB and HINTON-MB are the slowest two algorithms, since they need to find the PC sets of all target variable's parents and children. The computational complexity of EMB is lower than STMB. In general, most of the BNs have a large number of variables but a small-sized PC set of each variable, so that EMB  is faster than STMB. The computational complexity of BAMB is the same with EMB.

\par \textbf{The computational complexity of ELCS:} In the best case, the complexity of ELCS is $O(|\emph{\textbf{U}}|2^{|\emph{\textbf{PC}}|})$. But in the worst case (e.g. the target variable has all single ancestors), ELCS needs to learn a whole  structure, hence the complexity of ELCS is $O(|\emph{\textbf{U}}|^22^{|\emph{\textbf{PC}}|})$.

\par Table \ref{OTimeOfMLCS} reports the computational complexity of ELCS and its rivals.  Note that $m$ $\ll |\emph{\textbf{U}}|$   is the memory size of L-BFGS \cite{ZhengARX18}. $n$ and $t$ are the number of samples in  data and iterations, respectively. $h$ is the number of neurons in the hidden layer \cite{YuCGY19}. The computational complexity of global causal structure learning algorithms is higher than that of local causal structure learning algorithms, since they learn the whole structure of all variables.  PCD-by-PCD  uses MMPC for PC learning, and CMB uses HITON-MB for MB learning. In the best case,  the complexity of PCD-by-PCD is consistent with that of MMMB, it is $O(|\emph{\textbf{U}}||\textbf{\emph{PC}}|2^{|\textbf{\emph{PC}}|})$. The complexity of CMB is consistent with that of HITON-MB, it is $O(|\emph{\textbf{U}}||\textbf{\emph{PC}}|2^{|\textbf{\emph{PC}}|})$. In the worst case, since MMPC can discover the separating sets while learning PC, the complexity of PCD-by-PCD is consistent with that of  using MMPC to find PCs of all variables, it is $O(|\emph{\textbf{U}}|^{2}2^{|\textbf{\emph{PC}}|})$. The complexity of CMB is  consistent with that of using HITON-MB to find MBs of all variables, it is  $O(|\emph{\textbf{U}}|^{2}|\textbf{\emph{PC}}|2^{|\textbf{\emph{PC}}|})$.  ELCS is the fastest algorithm in both best and worst cases, since the  complexity of ELCS relies on that of EMB which only takes $O(|\emph{\textbf{U}}|2^{|\emph{\textbf{PC}}|})$ CI tests to find the MB.

\tabcolsep 0.15in	
\begin{table*}
\footnotesize
\caption{Computational Complexity of Each Markov Blanket Learning Algorithm}\label{OTimeOfMB}
\centering
\begin{tabular}{ccccccc}
\toprule
Algorithms      &IAMB &MMMB &HITON-MB &STMB &BAMB &EMB\\
\midrule   %第二行线
Complexity &$O(|\emph{\textbf{U}}|^2)$  &$O(|\emph{\textbf{U}}||\textbf{\emph{PC}}|2^{|\textbf{\emph{PC}}|})$ &$O(|\emph{\textbf{U}}||\textbf{\emph{PC}}|2^{|\textbf{\emph{PC}}|})$  &$O(|\emph{\textbf{U}}|2^{|\emph{\textbf{U}}|})$ & $O(|\emph{\textbf{U}}|2^{|\textbf{\emph{PC}}|})$ &$O(|\emph{\textbf{U}}|2^{|\emph{\textbf{PC}}|})$\\
\bottomrule
\end{tabular}
\end{table*}

\tabcolsep 0.2in	
\begin{table}
\footnotesize
\caption{Computational Complexity of   Local and Global  Causal Structure Learning Algorithms}\label{OTimeOfMLCS}
\centering
\begin{tabular}{ccc}
\toprule
Algorithm      &Worst Case &Best Case\\
\midrule   %第二行线
PCD-by-PCD &$O(|\emph{\textbf{U}}|^{2}2^{|\textbf{\emph{PC}}|})$ &$O(|\emph{\textbf{U}}||\textbf{\emph{PC}}|2^{|\textbf{\emph{PC}}|})$\\
CMB   &$O(|\emph{\textbf{U}}|^{2}|\textbf{\emph{PC}}|2^{|\textbf{\emph{PC}}|})$&$O(|\emph{\textbf{U}}||\textbf{\emph{PC}}|2^{|\textbf{\emph{PC}}|})$\\
ELCS  &$O(|\emph{\textbf{U}}|^22^{|\emph{\textbf{PC}}|})$ &$O(|\emph{\textbf{U}}|2^{|\emph{\textbf{PC}}|})$\\
MMHC  &$O(|\emph{\textbf{U}}|^22^{|\emph{\textbf{U}}|})$ &$O(|\emph{\textbf{U}}|^22^{|\emph{\textbf{PC}}|})$\\
NOTEARS  &\multicolumn{2}{c}{$O(m^2|\emph{\textbf{U}}|^2 + m^3 + mt|\emph{\textbf{U}}|^2)$} \\
DAG-GNN  &\multicolumn{2}{c}{$O(nht|\emph{\textbf{U}}|^4$)}\\
\bottomrule
\end{tabular}
\end{table}

%\multicolumn{2}{l|}{com}

\section{Experiments}
In this section, we evaluate the performance of the proposed ELCS algorithm, and this section is organized as follows. Section \ref{ExS} gives the experimental settings, Section \ref{ExR} summarizes and discusses the experimental results,  Section \ref{Why} analyses the reason why ELCS is efficient and effective.
%conduct a comprehensive experiment study on eight benchmark Bayesian Networks (BNs) to evaluate the effectiveness  of the proposed ELCS algorithm.  Each benchmark BN contains two groups of data, one group containing 10 data sets with 5000 data examples,  and the other  one including 10 data sets with 1000 data examples. The number of variables of these BNs from  20 to 801. The details of the eight benchmark BNs are described in Table \ref{benchmarkBNs}.
%In this section, we conduct a comprehensive experiment study on eight benchmark Bayesian Networks (BNs) to evaluate the effectiveness  of the proposed ELCS algorithm.  Each benchmark BN contains two groups of data, one group containing 10 data sets with 5000 data examples,  and the other  one including 10 data sets with 1000 data examples. The number of variables of these BNs from  20 to 801. The details of the eight benchmark BNs are described in Table \ref{benchmarkBNs}.

\subsection{Experimental Settings}
\label{ExS}
\subsubsection{Datasets}
%\textbf{\;\;\;\; 1) Datasets.}
 We use eight benchmark BNs to evaluate ELCS against its rivals. Each benchmark BN contains two groups of data, one group containing 10 data sets with 5000 data examples,  and the other  one including 10 data sets with 1000 data examples. The number of variables of these BNs ranges from  20 to 801. A brief description of the eight benchmark BNs is listed in  Table \ref{benchmarkBNs}.
%The details of the eight benchmark BNs are described in
 %as shown in Table \ref{benchmarkBNs}.

%\textbf{2) Comparison Methods.}
\subsubsection{Comparison Methods}
We compare our approach ELCS with three state-of-the-art global causal structure learning algorithms, including MMHC \cite{TsamardinosBA06}, NOTEARS \cite{ZhengARX18} and DAG-GNN \cite{YuCGY19}, and two local causal structure learning algorithms, including PCD-by-PCD \cite{YinZWHZG08} and CMB \cite{GaoJ15}. In addition, we also compare ELCS with ELCS-II.

%\textbf{3) Implementation Details.}
\subsubsection{Implementation Details}
PCD-by-PCD and CMB algorithms are implemented by ourselves in MATLAB (https://github.com/kuiy/CausalLearner). For MMHC, we use the implementation in the software tool of Causal Explorer \cite{AliferisTSB03}.  For NOTEARS and DAG-GNN, we use the  source codes provided by the authors. In the experiments, $G^2$-test with the significance level of 0.01 is utilized to measure the conditional independence between variables. All experimental results are conducted on Windows 10 with Intel(R) i7-8700, 3.19 GHz CPU, and 16GB memory.

 %\cite{mcdonald2009handbook}
%\footnote{https://github.com/xunzheng/notears.}
%\footnote{https://github.com/fishmoon1234/DAG-GNN.}
\subsubsection{Evaluation Metrics}
%\textbf{4) Evaluation Metrics.}
In the experiments, we evaluate the performance using the following metrics.
\begin{itemize}
\item \emph{ArrP}: The number of true directed edges in the output (i.e., the variables in the output belonging to the true parents and children of a target variable in a test DAG) divided by the number of edges in the output of an algorithm.
\item \emph{ArrR}:  The number of true directed edges in the output divided by the number of true directed edges (i.e., the number of parents and children of a target variable in a test DAG).
\item \emph{SHD}:  SHD is the number of total error edges, which contains undirected edges, reverse edges, missing edges and extra edges. The smaller value of SHD is better.
\item \emph{FDR}:  The number of false edges in the output divided by the number of edges in the output of an algorithm.
\item \emph{Efficiency}: The number of CI tests and the running time (in seconds) are utilized to measure the efficiency.
\end{itemize}

% of a local structure  discovery algorithm
In the following Tables, the results are reported in the format of $A \pm B$, where $A$ denotes the average results, and $B$ represents the standard deviation. The best results in each setting have been marked in bold. "-" means that the output of the corresponding BN cannot be generated  in two days by the algorithm. Note that NOTEARS and DAG-GNN do not perform CI tests.
 \subsection{Experimental Results of ELCS and Its Rivals}
 \label{ExR}
%In this section, to verify the effectiveness of the proposed ELCS, we compare our approach ELCS with three state-of-the-art global causal structure learning algorithms: MMHC \cite{TsamardinosBA06}, NOTEARS \cite{ZhengARX18} and DAG-GNN \cite{YuCGY19}, and two local causal structure learning algorithms: PCD-by-PCD \cite{YinZWHZG08}, CMB \cite{GaoJ15}.

\tabcolsep 0.05in
\newcommand{\tabincell}[2]{\begin{tabular}{@{}#1@{}}#2\end{tabular}}
\begin{table}
\footnotesize
\centering
% table caption is above the table
\caption{Summary of Benchmark BNs}
\label{benchmarkBNs}       % Give a unique label
% For LaTeX tables use
\begin{tabular}{cccccc}
%\begin{tabular}{cccc}
\toprule
\tabincell{c}{Network}  & \tabincell{c}{Num. \\ Vars }   & \tabincell{c}{Num. \\ Edges} &\tabincell{c}{Max In/Out\\ Degree} &\tabincell{c}{Min/Max\\ $|$PCset$|$}  &\tabincell{c}{Domain\\ Range}\\
\midrule
Alarm      &  37 & 46 & 4 / 5 & 1 / 6 & 2 - 4\\
Insurance  &  27 & 52 & 3 / 7 & 1 / 9 & 2 - 5\\
Child      &  20 & 25 & 2 / 7 & 1 / 8 & 2 - 6\\
Alarm10      &  370 & 570 & 4 / 7 & 1 / 9 & 2 - 4\\
Insurance10  &  270 & 556 & 5 / 8 & 1 / 11 & 2 - 5\\
Child10      &  200 & 257 & 2 / 7 & 1 / 8 & 2 - 6\\
Pigs         &  441 & 592 & 2 / 39 & 1 / 41 & 3 - 3\\
Gene        &  801 & 972  & 4 / 10 & 0 / 11 & 3 - 5\\
\bottomrule
\end{tabular}
\end{table}

\par We compare ELCS with  MMHC, NOTEARS, DAG-GNN, PCD-by-PCD, CMB and ELCS-II on the eight BNs as shown in Table \ref{benchmarkBNs}. The average results of ArrP, ArrR, SHD, FDR, CI tests and running time of each algorithm are reported in Table \ref{Local5000}-\ref{Local1000}. Table \ref{Local5000} summarizes the experimental results on the eight  BNs with 5,000 data examples, and Table \ref{Local1000} reports the experimental results on the eight  BNs with 1,000 data examples.  From the experimental results, we have the following observations.

\tabcolsep 0.03in
\begin{table}
\tiny
\caption{ Comparison of ELCS with State-of-the-art Causal Structure Learning Algorithms on Eight Benchmark BNs (size=5,000)}\label{Local5000}
\centering
\begin{tabular}{c||c||cccccc}\hline
Network & Algorithm  & ArrP($\uparrow$)   & ArrR($\uparrow$) &SHD($\downarrow$)   &FDR($\downarrow$) &CI Tests($\downarrow$) &Time($\downarrow$) \\\hline\hline
\multirow{7}{*}{Alarm}
&MMHC 	&0.19$\pm$0.02	&0.08$\pm$0.02		&4.58$\pm$0.02	&0.60$\pm$0.01		&13860$\pm$4971	&7.03$\pm$2.74 \\
&NOTEARS 	&0.61$\pm$0.01	&0.74$\pm$0.04		&2.84$\pm$0.17	&0.48$\pm$0.02		&-	&541.95$\pm$27.18 \\
&DAG-GNN	&0.66$\pm$0.03	&0.54$\pm$0.04		&2.02$\pm$0.17	&0.36$\pm$0.05		&-	&1.1e3$\pm$1.1e2 \\
&PCD-by-PCD 	&0.77$\pm$0.03	&0.64$\pm$0.05		&0.94$\pm$0.15	&0.27$\pm$0.04		&2110$\pm$92	&0.81$\pm$0.04 \\
&CMB 	        &0.77$\pm$0.05	&0.72$\pm$0.06		&0.76$\pm$0.14	&0.22$\pm$0.04		&2111$\pm$207	&0.69$\pm$0.07\\
&ELCS 	        &\textbf{0.86$\pm$0.01}	&\textbf{0.81$\pm$0.01}		&\textbf{0.44$\pm$0.06}	&\textbf{0.07$\pm$0.02}		&648$\pm$55	    &0.20$\pm$0.02 \\
&ELCS-II 	    &\textbf{0.86$\pm$0.01}	&\textbf{0.81$\pm$0.01}		&\textbf{0.44$\pm$0.06}	&\textbf{0.07$\pm$0.02}		&\textbf{607$\pm$52}	    &\textbf{0.19$\pm$0.02} \\\hline
\multirow{7}{*}{Insurance}
&MMHC 	&0.21$\pm$0.02	&0.03$\pm$0.02		&5.87$\pm$0.17	&0.67$\pm$0.02		&2603$\pm$271	&1.18$\pm$0.12 \\
&NOTEARS 	&0.46$\pm$0.01	&0.24$\pm$0.01		&4.64$\pm$0.12	&0.72$\pm$0.02		&-	&420.89$\pm$22.92 \\
&DAG-GNN	&0.54$\pm$0.03	&0.21$\pm$0.01		&4.35$\pm$0.25	&0.67$\pm$0.05		&-	&518.84$\pm$37.24 \\
&PCD-by-PCD 	&0.68$\pm$0.02	&0.45$\pm$0.02		&2.07$\pm$0.09	&0.34$\pm$0.03		&3038$\pm$300	&1.48$\pm$0.16 \\
&CMB 	        &0.70$\pm$0.04	&0.54$\pm$0.04		&2.31$\pm$0.25	&0.37$\pm$0.05		&11553$\pm$4827	&5.44$\pm$2.29 \\
&ELCS 	        &\textbf{0.85$\pm$0.04}	&\textbf{0.69$\pm$0.04}		&\textbf{1.61$\pm$0.06}	&\textbf{0.18$\pm$0.05}		&1686$\pm$276	&\textbf{0.75$\pm$0.12} \\
&ELCS-II	    &\textbf{0.85$\pm$0.04}	&\textbf{0.69$\pm$0.04}		&\textbf{1.61$\pm$0.06}	&\textbf{0.18$\pm$0.05}		&\textbf{1637$\pm$275}	&\textbf{0.75$\pm$0.12} \\\hline
\multirow{7}{*}{Child}
&MMHC 	&0.22$\pm$0.03	&0.19$\pm$0.07		&3.63$\pm$0.25	&0.48$\pm$0.03		&8600$\pm$632	&5.32$\pm$0.46 \\
&NOTEARS 	&0.52$\pm$0.02	&0.39$\pm$0.03		&2.99$\pm$0.17	&0.70$\pm$0.03		&-	&140.74$\pm$36.59 \\
&DAG-GNN	&0.50$\pm$0.04	&0.29$\pm$0.06		&2.08$\pm$0.10	&0.44$\pm$0.06		&-	&384.73$\pm$30.76 \\
&PCD-by-PCD 	&0.71$\pm$0.02	&0.59$\pm$0.04		&0.86$\pm$0.09	&0.26$\pm$0.04		&2432$\pm$78	&1.24$\pm$0.04 \\
&CMB 	        &\textbf{0.82$\pm$0.05}	&\textbf{0.75$\pm$0.08}  	&\textbf{0.72$\pm$0.18}	&0.25$\pm$0.08		&9424$\pm$4106	&4.58$\pm$1.96\\
&ELCS 	        &0.71$\pm$0.12	&0.61$\pm$0.16		&1.08$\pm$0.36	&\textbf{0.09$\pm$0.08}		&2093$\pm$287	&\textbf{0.93$\pm$0.10} \\
&ELCS-II 	    &0.71$\pm$0.12	&0.61$\pm$0.16		&1.08$\pm$0.36	&\textbf{0.09$\pm$0.08}		&\textbf{2087$\pm$287}	&\textbf{0.93$\pm$0.10} \\\hline
\multirow{7}{*}{Alarm10}
&MMHC 	&0.19+0.00	&0.02+0.00		&5.63+0.05	&0.63+0.00		&9.7e7+8.9e6	&4.6e4+4.8e3 \\
&NOTEARS 	&0.73$\pm$0.01	&0.50$\pm$0.01		&2.27$\pm$0.04	&0.28$\pm$0.01		&-	&1.6e4$\pm$1.8e3 \\
&DAG-GNN	&-	&-		&-	&-		&-	&- \\
&PCD-by-PCD 	&0.73$\pm$0.01	&0.54$\pm$0.01		&1.48$\pm$0.03	&0.21$\pm$0.01		&25795$\pm$1770	&8.18$\pm$0.57\\
&CMB 	        &0.72$\pm$0.01	&0.58$\pm$0.01		&1.57$\pm$0.04	&0.34$\pm$0.01		&14011$\pm$565	&3.69$\pm$0.15 \\
&ELCS 	        &\textbf{0.83$\pm$0.01}	&\textbf{0.68$\pm$0.02}		&\textbf{1.26$\pm$0.07}	&\textbf{0.14$\pm$0.02}		&\textbf{6893$\pm$483}	&\textbf{1.77$\pm$0.12}\\
&ELCS-II 	    &\textbf{0.83$\pm$0.01}	&\textbf{0.68$\pm$0.02}		&\textbf{1.26$\pm$0.07}	&\textbf{0.14$\pm$0.02}		&6916$\pm$480	&\textbf{1.77$\pm$0.12}\\\hline
\multirow{7}{*}{Insurance10}
&MMHC 	&0.22$\pm$0.00	&0.00$\pm$0.00		&6.72$\pm$0.04	&0.70$\pm$0.00		&1.9e5$\pm$2.0e4	&81.66$\pm$10.39 \\
&NOTEARS 	&0.30$\pm$0.01	&0.20$\pm$0.00		&8.67$\pm$0.44	&0.85$\pm$0.01		&-	&1.7e4$\pm$1.5e4 \\
&DAG-GNN	&-	&-		&-	&-		&-	&- \\
&PCD-by-PCD 	&0.68$\pm$0.01	&0.46$\pm$0.01		&2.10$\pm$0.05	&0.41$\pm$0.01		&\textbf{9581$\pm$224}	&4.38$\pm$0.13\\
&CMB 	        &0.64$\pm$0.01	&0.49$\pm$0.01		&2.58$\pm$0.06	&0.48$\pm$0.02		&39932$\pm$3898	&16.04$\pm$1.54 \\
&ELCS 	        &\textbf{0.80$\pm$0.02}	&\textbf{0.67$\pm$0.02}		&\textbf{1.75$\pm$0.11}	&\textbf{0.23$\pm$0.01}		&10809$\pm$1528	&\textbf{3.92$\pm$0.55}\\
&ELCS-II	    &\textbf{0.80$\pm$0.02}	&\textbf{0.67$\pm$0.02}		&\textbf{1.75$\pm$0.11}	&\textbf{0.23$\pm$0.01}		&10605$\pm$1499	&\textbf{3.91$\pm$0.55}\\\hline
\multirow{7}{*}{Child10}
&MMHC 	        &0.15$\pm$0.01	&0.03$\pm$0.01		&5.29$\pm$0.09	&0.58$\pm$0.01		&7.9e5$\pm$2.7e5	&439.79$\pm$169.28 \\
&NOTEARS 	    &0.61$\pm$0.01	&0.74$\pm$0.04		&2.84$\pm$0.17	&0.48$\pm$0.02		&-	&541.95$\pm$27.18 \\
&DAG-GNN	&-	&-		&-	&-		&-	&- \\
&PCD-by-PCD 	&0.77$\pm$0.01	&0.68$\pm$0.02		&0.82$\pm$0.04	&0.22$\pm$0.01		&\textbf{11341$\pm$1470}	&4.44$\pm$0.57\\
&CMB 	        &0.75$\pm$0.02	&0.68$\pm$0.02		&1.03$\pm$0.07	&0.31$\pm$0.02		&22861$\pm$2648	&8.11$\pm$0.95 \\
&ELCS 	        &\textbf{0.83$\pm$0.05}	&\textbf{0.76$\pm$0.07}		&\textbf{0.73$\pm$0.20}	&\textbf{0.14$\pm$0.03}		&13129$\pm$2613	&\textbf{3.98$\pm$0.79}\\
&ELCS-II	        &\textbf{0.83$\pm$0.05}	&\textbf{0.76$\pm$0.07}		&\textbf{0.73$\pm$0.20}	&\textbf{0.14$\pm$0.03}		&13104$\pm$2605	&\textbf{3.98$\pm$0.79}\\\hline
\multirow{7}{*}{Pigs}
&MMHC 	&0.26$\pm$0.00	&0.00$\pm$0.00		&6.85$\pm$0.07	&1.00$\pm$0.00		&4.3e5$\pm$1.4e4	&207.96$\pm$6.88 \\
&NOTEARS 	&0.43$\pm$0.00	&0.26$\pm$0.00	&2.77$\pm$0.03	&0.77$\pm$0.00	&-	&3.1e4$\pm$1.2e3 \\
&DAG-GNN	&-	&-		&-	&-		&-	&- \\
&PCD-by-PCD 	&-	&-		&-	&-		&-	&- \\
&CMB 	        &-	&-		&-	&-		&-	&- \\
&ELCS 	        &\textbf{0.91$\pm$0.00}	&\textbf{0.99$\pm$0.00}		&\textbf{0.42$\pm$0.02}	&\textbf{0.15$\pm$0.01}		&13374$\pm$8660	&8.91$\pm$6.84 \\
&ELCS-II 	        &\textbf{0.91$\pm$0.00}	&\textbf{0.99$\pm$0.00}		&\textbf{0.42$\pm$0.02}	&\textbf{0.15$\pm$0.01}		&\textbf{11467$\pm$5659} &\textbf{8.38$\pm$5.12} \\\hline
\multirow{7}{*}{Gene}
&MMHC 	&-	&-		&-	&-		&-	&- \\
&NOTEARS 	&-	&-		&-	&-		&-	&- \\
&DAG-GNN	&-	&-		&-	&-		&-	&- \\
&PCD-by-PCD 	&-	&-		&-	&-		&-	&- \\
&CMB 	        &-	&-		&-	&-		&-	&- \\
&ELCS 	        &\textbf{0.76$\pm$0.01}	&\textbf{0.79$\pm$0.01}		&\textbf{0.79$\pm$0.03}	&\textbf{0.32$\pm$0.01}		&36950$\pm$7876	&11.03$\pm$2.35 \\
&ELCS-II 	        &\textbf{0.76$\pm$0.01}	&\textbf{0.79$\pm$0.01}		&\textbf{0.79$\pm$0.03}	&\textbf{0.32$\pm$0.01}		&\textbf{36051$\pm$7696}	&\textbf{11.02$\pm$2.08} \\\hline
\end{tabular}
\end{table}

\tabcolsep 0.03in
\begin{table}
\tiny
\caption{ Comparison of ELCS with State-of-the-art Causal Structure Learning Algorithms on Eight Benchmark BNs (size=1,000)}\label{Local1000}
\centering
\begin{tabular}{c||c||cccccc}
\hline
Network & Algorithm  & ArrP($\uparrow$)   & ArrR($\uparrow$) &SHD($\downarrow$)   &FDR($\downarrow$) &CI Tests($\downarrow$) &Time($\downarrow$)\\\hline\hline
\multirow{7}{*}{Alarm}
&MMHC 	        &0.22$\pm$0.03	&0.12$\pm$0.04		&4.32$\pm$0.20	&0.57$\pm$0.03		&8884$\pm$4471	&1.64$\pm$0.81 \\
&NOTEARS 	    &0.59$\pm$0.03	&\textbf{0.72$\pm$0.06}		&3.14$\pm$0.21	&0.51$\pm$0.04		&-	&232.28$\pm$22.13 \\
&DAG-GNN	    &0.48$\pm$0.04	&0.26$\pm$0.06		&2.01$\pm$0.10	&0.17$\pm$0.03		&-	&241.19$\pm$23.82 \\
&PCD-by-PCD 	&0.66$\pm$0.07	&0.49$\pm$0.05		&1.33$\pm$0.10	&0.22$\pm$0.04		&1737$\pm$265	&0.39$\pm$0.05 \\
&CMB 	        &0.67$\pm$0.06	&0.52$\pm$0.06		&1.32$\pm$0.13	&0.34$\pm$0.07		&3171$\pm$410	&0.50$\pm$0.07 \\
&ELCS 	        &\textbf{0.72$\pm$0.07}	&0.61$\pm$0.08		&\textbf{1.06$\pm$0.14}	&\textbf{0.11$\pm$0.04}		&901$\pm$172	&\textbf{0.13$\pm$0.03}\\
&ELCS-II 	    &\textbf{0.72$\pm$0.07}	&0.61$\pm$0.08		&\textbf{1.06$\pm$0.14}	&\textbf{0.11$\pm$0.04}		&\textbf{861$\pm$162}	&\textbf{0.13$\pm$0.03}\\\hline
\multirow{7}{*}{Insurance}
&MMHC 	        &0.22$\pm$0.02	&0.04$\pm$0.02		&5.72$\pm$0.18	&0.65$\pm$0.03		&2110$\pm$293	&0.44$\pm$0.05 \\
&NOTEARS 	    &0.43$\pm$0.02	&0.24$\pm$0.01		&4.90$\pm$0.12	&0.75$\pm$0.03		&-	&220.97$\pm$44.51 \\
&DAG-GNN	    &0.49$\pm$0.06	&0.15$\pm$0.05		&3.78$\pm$0.19	&0.39$\pm$0.13		&-	&151.75$\pm$18.26 \\
&PCD-by-PCD 	&0.68$\pm$0.04	&\textbf{0.40$\pm$0.04}		&\textbf{2.43$\pm$0.15}	&\textbf{0.30$\pm$0.06}		&1370$\pm$104	&0.33$\pm$0.02 \\
&CMB 	        &\textbf{0.69$\pm$0.06}	&0.46$\pm$0.05		&2.55$\pm$0.22	&0.38$\pm$0.08		&4457$\pm$1196	&0.76$\pm$0.20 \\
&ELCS 	        &\textbf{0.69$\pm$0.12}	&0.44$\pm$0.15		&2.49$\pm$0.40	&0.37$\pm$0.19		&1188$\pm$566	&\textbf{0.17$\pm$0.08}\\
&ELCS-II	    &\textbf{0.69$\pm$0.12}	&0.44$\pm$0.15		&2.49$\pm$0.40	&0.37$\pm$0.19		&\textbf{1106$\pm$513}	&\textbf{0.17$\pm$0.08}\\\hline
\multirow{7}{*}{Child}
&MMHC 	        &0.24$\pm$0.02	&0.18$\pm$0.04		&3.41$\pm$0.14	&0.45$\pm$0.03		&4583$\pm$898	&0.90$\pm$0.19 \\
&NOTEARS 	    &0.49$\pm$0.02	&0.37$\pm$0.05		&3.31$\pm$0.23	&0.72$\pm$0.04		&-	&66.32$\pm$25.78 \\
&DAG-GNN	    &0.34$\pm$0.05	&0.15$\pm$0.03		&2.20$\pm$0.05	&0.29$\pm$0.09		&-	&87.70$\pm$8.96 \\
&PCD-by-PCD 	&0.52$\pm$0.05	&0.34$\pm$0.06		&1.61$\pm$0.12	&0.33$\pm$0.08		&\textbf{2085$\pm$183}	&0.39$\pm$0.02 \\
&CMB 	        &0.74$\pm$0.09	&0.59$\pm$0.10		&1.27$\pm$0.29	&0.33$\pm$0.12		&4991$\pm$1145	&0.65$\pm$0.14 \\
&ELCS 	        &\textbf{0.82$\pm$0.05}	&\textbf{0.69$\pm$0.06}		&\textbf{1.01$\pm$0.18}	&\textbf{0.21$\pm$0.06}		&2882$\pm$815	&\textbf{0.34$\pm$0.10}\\
&ELCS-II 	    &\textbf{0.82$\pm$0.05}	&\textbf{0.69$\pm$0.06}		&\textbf{1.01$\pm$0.18}	&\textbf{0.21$\pm$0.06}		&2592$\pm$723	&\textbf{0.33$\pm$0.10}\\\hline
\multirow{7}{*}{Alarm10}
&MMHC 	        &0.19$\pm$0.00	&0.03$\pm$0.01		&5.69$\pm$0.07	&0.63$\pm$0.00		&3.9e6$\pm$3.5e5	&700.63$\pm$55.38 \\
&NOTEARS 	    &0.39$\pm$0.01	&0.47$\pm$0.01		&9.27$\pm$0.49	&0.69$\pm$0.02		&-	&1.6e4$\pm$1.2e3 \\
&DAG-GNN	    &-	&-		&-	&-		&-	&- \\
&PCD-by-PCD 	&0.66$\pm$0.01	&0.44$\pm$0.02		&1.74$\pm$0.06	&\textbf{0.20$\pm$0.02}		&26572$\pm$3414	&4.87$\pm$0.63 \\
&CMB 	        &0.68$\pm$0.01	&0.48$\pm$0.02		&1.90$\pm$0.06	&0.39$\pm$0.02		&10827$\pm$643	&1.51$\pm$0.08 \\
&ELCS 	        &\textbf{0.75$\pm$0.02}	&\textbf{0.53$\pm$0.02}		&\textbf{1.72$\pm$0.06}	&\textbf{0.20$\pm$0.03}		&8800$\pm$1218	&\textbf{1.18$\pm$0.16}\\
&ELCS-II 	    &\textbf{0.75$\pm$0.02}	&\textbf{0.53$\pm$0.02}		&\textbf{1.72$\pm$0.06}	&\textbf{0.20$\pm$0.03}		&\textbf{8745$\pm$1209}	&\textbf{1.18$\pm$0.16}\\\hline
\multirow{7}{*}{Insurance10}
&MMHC 	        &0.24$\pm$0.01	&0.05$\pm$0.01		&6.57$\pm$0.05	&0.63$\pm$0.01		&9.6e5$\pm$1.2e5	&236.11$\pm$25.93 \\
&NOTEARS 	    &0.20$\pm$0.02	&0.20$\pm$0.00		&14.11$\pm$0.92	&0.91$\pm$0.01		&-	&9.0e3$\pm$1.1e3 \\
&DAG-GNN	    &-	&-		&-	&-		&-	&- \\
&PCD-by-PCD 	&\textbf{0.63$\pm$0.01}	&0.37$\pm$0.01		&\textbf{2.66$\pm$0.05}	&0.46$\pm$0.02		&8461$\pm$1809	&1.60$\pm$0.35 \\
&CMB 	        &0.62$\pm$0.01	&\textbf{0.45$\pm$0.01}		&2.95$\pm$0.05	&\textbf{0.45$\pm$0.02}		&20895$\pm$2158	&3.23$\pm$0.33 \\
&ELCS 	        &0.50$\pm$0.01	&0.26$\pm$0.00		&3.18$\pm$0.04	&0.65$\pm$0.00		&4333$\pm$1736	&0.60$\pm$0.25\\
&ELCS-II 	        &0.50$\pm$0.01	&0.26$\pm$0.00		&3.18$\pm$0.04	&0.65$\pm$0.00		&\textbf{3966$\pm$1423}	&\textbf{0.59$\pm$0.25}\\\hline
\multirow{7}{*}{Child10}
&MMHC 	        &0.22$\pm$0.01	&0.19$\pm$0.02		&4.37$\pm$0.11	&0.48$\pm$0.01		&7.7e6$\pm$2.0e6	&1.5e3$\pm$3.9e2 \\
&NOTEARS 	    &0.49$\pm$0.01	&0.34$\pm$0.02		&2.99$\pm$0.10	&0.65$\pm$0.02		&-	&3.3e3$\pm$1.9e2 \\
&DAG-GNN	    &-	&-		&-	&-		&-	&- \\
&PCD-by-PCD 	&0.55$\pm$0.02	&0.36$\pm$0.03		&1.69$\pm$0.06	&0.38$\pm$0.03		&15698$\pm$3819	&2.82$\pm$0.71 \\
&CMB 	        &\textbf{0.71$\pm$0.04}	&\textbf{0.59$\pm$0.03}		&1.58$\pm$0.12	&\textbf{0.35$\pm$0.03}		&26986$\pm$3942	&3.71$\pm$0.54 \\
&ELCS 	        &0.67$\pm$0.03	&0.55$\pm$0.02		&\textbf{1.56$\pm$0.07}	&0.36$\pm$0.02		&5074$\pm$658	&\textbf{0.67$\pm$0.09}\\
&ELCS-II	    &0.67$\pm$0.03	&0.55$\pm$0.02		&\textbf{1.56$\pm$0.07}	&0.36$\pm$0.02		&\textbf{4889$\pm$631}	&\textbf{0.66$\pm$0.09}\\\hline
\multirow{7}{*}{Pigs}
&MMHC 	        &0.26$\pm$0.00	&0.00$\pm$0.00		&6.72$\pm$0.02	&1.00$\pm$0.00		&4.6e5$\pm$9.9e3	&90.35$\pm$2.90 \\
&NOTEARS 	    &0.42$\pm$0.00	&0.22$\pm$0.01		&2.85$\pm$0.03	&0.80$\pm$0.01		&-	&2.4e4$\pm$1.7e3 \\
&DAG-GNN	    &-	&-		&-	&-		&-	&- \\
&PCD-by-PCD 	&-	&-		&-	&-		&-	&- \\
&CMB 	        &-	&-		&-	&-		&-	&- \\
&ELCS 	        &\textbf{0.91$\pm$0.01}	&\textbf{0.99$\pm$0.00}		&\textbf{0.40$\pm$0.03}	&\textbf{0.15$\pm$0.01}		&11793$\pm$3279	&\textbf{0.84$\pm$0.13}\\
&ELCS-II 	        &\textbf{0.91$\pm$0.01}	&\textbf{0.99$\pm$0.00}		&\textbf{0.40$\pm$0.03}	&\textbf{0.15$\pm$0.01}		&\textbf{11685$\pm$3272}	&\textbf{0.84$\pm$0.13}\\\hline
\multirow{7}{*}{Gene}
&MMHC 	&-	&-		&-	&-		&-	&- \\
&NOTEARS 	&-	&-		&-	&-		&-	&- \\
&DAG-GNN	&-	&-		&-	&-		&-	&- \\
&PCD-by-PCD 	&-	&-		&-	&-		&-	&- \\
&CMB 	        &-	&-		&-	&-		&-	&- \\
&ELCS 	        &\textbf{0.77$\pm$0.00}	&\textbf{0.78$\pm$0.01}		&\textbf{0.78$\pm$0.02}	&\textbf{0.31$\pm$0.01}		&31753$\pm$3432	&\textbf{4.37$\pm$0.47} \\
&ELCS-II 	        &\textbf{0.77$\pm$0.00}	&\textbf{0.78$\pm$0.01}		&\textbf{0.78$\pm$0.02}	&\textbf{0.31$\pm$0.01}		&\textbf{31430$\pm$3406}	&\textbf{4.36$\pm$0.47} \\\hline
\end{tabular}
\end{table}

%\tabcolsep 0.03in
%\begin{table*}
%\caption{ Average normalized time results}\label{Local1000}
%\centering
%\begin{tabular}{c|c|cccccc}\hline
%& Algorithm  & MMHC   &NOTEARS &DAG-GNN   & PCD-by-PCD &CMB &ELCS \\\hline\hline
%\multirow{2}{*}{Sample size}
%&5000 	        &3777.54(7/1)	&2971.91(7/1)		&2133.925(3/5)	&2.37(6/2)		&3.97(6/2)	&1.00(8/0) \\
%&1000 	        &312.62(7/1)	&1786.77(7/1)		&1855.31(3/5)	&3.00(6/2)		&3.85(6/2)	&1.00(8/0)\\\hline
%Average &       & -   &- &-  &- &- &- \\\hline\hline
%\end{tabular}
%\end{table*}

%&PCD-by-PCD 	&\textbf{0.95+0.01}	&0.95+0.01		&0.42+0.04	&0.27+0.01		&788551+76591	&36.35+3.21 \\
%&0.77+0.01	&0.76+0.02		&0.94+0.06	&0.31+.02		&372969+129965	&36.79+12.64 \\

\par \textbf{ELCS \emph{versus} MMHC.} Regardless of the number of samples (5000 or 1000), ELCS is significantly better than MMHC. On the ArrP and ArrR metrics, ELCS is superior to MMHC, which means that ELCS finds more true causal edges and less false casual edges. In addition, on the SHD metric, the value of SHD of ELCS is significantly lower than that of MMHC. On the FDR metric, ELCS performs better than MMHC. Furthermore, ELCS always uses less CI tests than MMHC.  To learn the local causal structure of a target variable,  MMHC needs to learn the whole DAG over all variables in a dataset, hence MMHC performs much more CI tests than ELCS. Thus, we can conclude that ELCS is more efficient and effective than  MMHC.
\par \textbf{ELCS \emph{versus} NOTEARS and DAG-GNN.} NOTEARS and DAG-GNN are  global causal learning algorithms, they need to learn the global structure over all variables, and then obtain the parents and children of a given variable. ELCS achieves better performance than NOTEARS and DAG-GNN using both 5,000 data samples and 1,000 data samples, especially using 5,000 data samples.  On the ArrP, ArrR, SHD and FDR metrics, ELCS  is significantly better than NOTEARS and DAG-GNN. The values of ELCS on ArrP and ArrR metrics  are higher than that of NOTEARS and DAG-GNN, and lower on the SHD and FDR metrics. Since NOTEARS and DAG-GNN  adopt a continuous optimization strategy to obtain the DAG from observational data, and the experimental results are susceptible to the influence of parameters. Additionally,  NOTEARS and DAG-GNN need to spend much time in learning the DAG, since they obtain the optimal solution by means of a large number of iterations. In a word,  ELCS is superior to NOTEARS and DAG-GNN.
\par \textbf{ELCS \emph{versus} PCD-by-PCD and CMB.} Both PCD-by-PCD and CMB are  local causal learning algorithms.  Using 5,000 data samples, ELCS performs better  than PCD-by-PCD and CMB. Except on Child, ELCS achieves highest ArrP and  ArrR values, and lowest  SHD and FDR values on the other BNs. In addition, ELCS uses less CI tests than PCD-by-PCD and CMB on most of BNs.  Using 1,000 data samples, ELCS is superior to PCD-by-PCD and CMB on Alarm, Child, Alarm10, Pigs and Gene.  ELCS is better than CMB  and worse than PCD-by-PCD on Insurance on the  ArrP, ArrR,  SHD and FDR metrics, while ELCS has advantages in terms of CI tests and running time.  On Insurance10,  ELCS is worse than PCD-by-PCD and CMB on the  ArrP, ArrR,  SHD and FDR metrics. The reason may be that EMB learns inaccurate MBs  on  small  data samples. ELCS is better than PCD-by-PCD  and little worse than CMB on Child10 on the  ArrP, ArrR,  SHD and FDR metrics. Generally, ELCS performs better than PCD-by-PCD and CMB.

\par \textbf{ELCS \emph{versus} ELCS-II.} ELCS-II is superior to ELCS. ELCS-II further improves the efficiency of EMB while maintaining the same performance as measured by the ArrP, ArrR,  SHD and FDR metrics, which indicates the efficiency of ELCS-II.

In summary, it can be seen from Tables \ref{Local5000}-\ref{Local1000}, ELCS is significantly better than MMHC, NOTEARS and DAG-GNN.   Additionally, ELCS outperforms  PCD-by-PCD and CMB on the ArrP, ArrR, SHD, FDR metrics. Specifically, compared with PCD-by-PCD and CMB, ELCS not only achieves higher ArrP and ArrR values,  but also achieves lower SHD and FDR  values. Furthermore,  ELCS is the fastest  algorithm  among all structure learning algorithms. ELCS is significantly better  than MMHC and NOTEARS  in terms of running time. MMHC, NOTEARS and DAG-GNN are global causal learning algorithms, they need to find the global structure of a BN. In particular, ELCS is 10 times faster  than MMHC and 1000 times faster than NOTEARS and DAG-GNN on average. Additionally, ELCS is  also superior to PCD-by-PCD and CMB in terms of  running times. ELCS is 2 times faster than PCD-by-PCD and 3 times faster than  CMB on average. Specifically, MMHC, NOTEARS, PCD-by-PCD and CMB fail to generate the output on several BNs, whereas ELCS can be successful applied in learning the local causal structure of each variable within two days. But beyond that, ELCS uses the smallest number of CI tests.  Overall, ELCS  is superior to its rivals  in both efficiency and accuracy.

\tabcolsep 0.03in
\begin{table}
\tiny
\caption{Comparison of ELCS with ``ECLS w/o N" on Eight Benchmark BNs (size=5,000)}\label{caseNLocal5000}
\centering
\begin{tabular}{c||c||cccccc}\hline
Network & Algorithm  & ArrP($\uparrow$)   & ArrR($\uparrow$) &SHD($\downarrow$)   &FDR($\downarrow$) &CI Tests($\downarrow$) &Time($\downarrow$) \\\hline\hline
\multirow{2}{*}{Alarm}
&ELCS 	        &0.86$\pm$0.01	&\textbf{0.81$\pm$0.01}		&0.44$\pm$0.06	&0.07$\pm$0.02		&\textbf{648$\pm$55}	    &\textbf{0.20$\pm$0.02} \\
&ELCS w/o N 	        &\textbf{0.87$\pm$0.01} &\textbf{0.81$\pm$0.01} &\textbf{0.40$\pm$0.06} &\textbf{0.05$\pm$0.02} &701$\pm$99 &0.22$\pm$0.03 \\\hline
\multirow{2}{*}{Insurance}
&ELCS         &\textbf{0.85$\pm$0.04}	&\textbf{0.69$\pm$0.04}		&\textbf{1.61$\pm$0.06}	&\textbf{0.18$\pm$0.05}		&\textbf{1686$\pm$276}	&\textbf{0.75$\pm$0.12} \\
&ELCS w/o N 	&\textbf{0.85$\pm$0.04}	&0.68$\pm$0.03	&1.61$\pm$0.14	&\textbf{0.18$\pm$0.05}		&1924$\pm$257	   &0.84$\pm$0.11 \\\hline
\multirow{2}{*}{Child}
&ELCS 	        &\textbf{0.71$\pm$0.12}	&\textbf{0.61$\pm$0.16}		&\textbf{1.08$\pm$0.36}	&\textbf{0.09$\pm$0.08}		&\textbf{2093$\pm$287}	&\textbf{0.93$\pm$0.10} \\
&ELCS w/o N     &0.70$\pm$0.13	&0.59$\pm$0.16	&1.13$\pm$0.36	&0.09$\pm$0.09		&2143$\pm$277	   &0.94$\pm$0.09 \\\hline
\multirow{2}{*}{Alarm10}
&ELCS 	        &0.83$\pm$0.01	&\textbf{0.68$\pm$0.02}		&1.26$\pm$0.07	&\textbf{0.14$\pm$0.02}		&\textbf{6893$\pm$483}	&\textbf{1.77$\pm$0.12}\\
&ELCS w/o N 	&\textbf{0.84$\pm$0.01}	&\textbf{0.68$\pm$0.02}	&\textbf{1.24$\pm$0.06}	&\textbf{0.14$\pm$0.02}		&7579$\pm$524	   &2.12$\pm$0.14 \\\hline
\multirow{2}{*}{Insurance10}
&ELCS 	        &0.80$\pm$0.02	&0.67$\pm$0.02		&1.75$\pm$0.11	&0.23$\pm$0.01		&\textbf{10809$\pm$1528}	&\textbf{3.92$\pm$0.55}\\
&ELCS w/o N 	&\textbf{0.86$\pm$0.02}	&\textbf{0.73$\pm$0.02}	&\textbf{1.42$\pm$0.10}	&\textbf{0.17$\pm$0.01}		&11300$\pm$1713	   &4.49$\pm$0.55 \\\hline
\multirow{2}{*}{Child10}
&ELCS 	        &\textbf{0.83$\pm$0.05}	&\textbf{0.76$\pm$0.07}		&\textbf{0.73$\pm$0.20}	&0.14$\pm$0.03		&\textbf{13129$\pm$2613}	&\textbf{3.98$\pm$0.79}\\
&ELCS w/o N	        &0.83$\pm$0.06	&\textbf{0.76$\pm$0.07}	&0.74$\pm$0.21	&\textbf{0.13$\pm$0.03}		&13491$\pm$2559	   &4.32$\pm$0.77 \\\hline
\multirow{2}{*}{Pigs}
&ELCS 	        &\textbf{0.91$\pm$0.00}	&\textbf{0.99$\pm$0.00}		&0.42$\pm$0.02	&\textbf{0.15$\pm$0.01}		&\textbf{13374$\pm$8660}	&\textbf{8.91$\pm$6.84} \\
&ELCS w/o N	        &\textbf{0.91$\pm$0.00}	&\textbf{0.99$\pm$0.00}	&\textbf{0.40$\pm$0.02}	&\textbf{0.15$\pm$0.01}		&14343$\pm$8657	   &9.70$\pm$6.75 \\\hline
\multirow{2}{*}{Gene}
&ELCS 	        &\textbf{0.76$\pm$0.01}	&\textbf{0.79$\pm$0.01}		&\textbf{0.79$\pm$0.03}	&\textbf{0.32$\pm$0.01}		&\textbf{36950$\pm$7876}	&\textbf{11.03$\pm$2.35} \\
&ELCS w/o N 	        &\textbf{0.76$\pm$0.01}	&0.78$\pm$0.01	&\textbf{0.79$\pm$0.03}	&\textbf{0.32$\pm$0.01}		&38061$\pm$8041	   &12.52$\pm$2.62 \\\hline
\end{tabular}
\end{table}

\subsection{Why ELCS is Efficient and Effective?}
\label{Why}
In this section, we analyse the reason why ELCS is efficient and effective from the following two aspects. First, we give a case study to evaluate the effectiveness of utilizing  the N-structures to infer edge directions between a given variable and its children. Second, we evaluate the effectiveness of the proposed EMB subroutine, since the proposed ELCS algorithm relies on EMB.

\subsubsection{Case Study}

To illustrate the benefit of utilizing the N-structures, we do not use  the N-structures to distinguish  the children of a given variable in learning the MB of the variable, that is, in the DistinguishPC subroutine, we remove lines 8-12 in Algorithm \ref{algorithmIdentifyPC}, and we denote this version of ELCS as ``ECLS w/o N". Table \ref{caseNLocal5000} summarizes the experimental results of ECLS and ``ECLS w/o N" on the eight BNs with 5,000 data examples. From the experimental results, we note that ELCS achieves comparable performance against ``ECLS w/o N" in terms of ArrP, ArrR, SHD and FDR on average but ELCS performs less CI tests. Specifically, on Alarm, for each variable, the number of CI tests is reduced by 53 on average. On Gene, for each variable,  the number of CI tests is reduced by 1111 on average. We observe that there are significant differences between the number of CI tests of ELCS and ``ECLS w/o N" on Insurance, Alarm10, Insurance10, Pigs and Gene, but there is a little difference between them on the other BNs. The reason may be that ELCS makes use of  the N-structures to speed up children identification, and there are few N-structures existing in Alarm, Child and Child10, leading to that the number of CI tests of ELCS and ``ECLS w/o N" on these three BNs are  close to each other. The efficiency performance of ECLS will improve a little on the BNs with few N-structures, this is a limitation of ELCS. In summary, ELCS is more efficient and provides better local structure learning quality than ``ECLS w/o N", which indicates the effectiveness of leveraging the N-structures for  local casual structure learning.

\tabcolsep 0.03in
\begin{table}
\tiny
\caption{Comparison of EMB with State-of-the-art MB Learning Algorithms on Eight Benchmark BNs (size=5,000)}\label{EMB5000}
\centering
\begin{tabular}{c||c||cccccc}
\hline
Network & Algorithm  & Distance($\downarrow$) & F1($\uparrow$) & Precision($\uparrow$)   & Recall($\uparrow$)  &CI Tests($\downarrow$) &Time($\downarrow$)\\
\hline\hline
\multirow{7}{*}{Alarm}
&IAMB	        &0.15$\pm$0.03	&0.90$\pm$0.02		&0.94$\pm$0.02	&0.89$\pm$0.01		&\textbf{142$\pm$2}	&\textbf{0.05$\pm$0.00} \\
&MMMB 	        &0.10$\pm$0.02	&0.94$\pm$0.02		&0.92$\pm$0.02	&\textbf{0.97$\pm$0.01}		&604$\pm$26	&0.24$\pm$0.01 \\
&HITON-MB 	    &\textbf{0.06$\pm$0.02}	&\textbf{0.96$\pm$0.01}		&0.97$\pm$0.02	&\textbf{0.97$\pm$0.01}		&394$\pm$12	&0.13$\pm$0.00 \\
&STMB 	        &0.30$\pm$0.02	&0.78$\pm$0.02		&0.73$\pm$0.02	&0.96$\pm$0.01		&531$\pm$15	&0.19$\pm$0.01 \\
&BAMB 	        &0.09$\pm$0.03	&0.94$\pm$0.02		&0.96$\pm$0.03	&0.95$\pm$0.01		&351$\pm$11	&0.14$\pm$0.00\\
&EMB 	        &\textbf{0.06$\pm$0.01}	&\textbf{0.96$\pm$0.01}		&\textbf{0.99$\pm$0.01}	&0.95$\pm$0.01		&318$\pm$7	&0.10$\pm$0.00 \\
&EMB-II         &\textbf{0.06$\pm$0.01}	&\textbf{0.96$\pm$0.01}		&\textbf{0.99$\pm$0.01}	&0.95$\pm$0.01		&298$\pm$5	&0.09$\pm$0.00 \\\hline
\multirow{7}{*}{Insurance}
&IAMB	        &0.36$\pm$0.02	&0.76$\pm$0.01		&\textbf{0.94$\pm$0.02}	&0.67$\pm$0.01		&\textbf{104$\pm$2}	&\textbf{0.04$\pm$0.00} \\
&MMMB 	        &0.31$\pm$0.02	&0.79$\pm$0.02		&0.88$\pm$0.03	&0.76$\pm$0.02		&1186$\pm$124	&0.60$\pm$0.07 \\
&HITON-MB 	    &0.33$\pm$0.03	&0.78$\pm$0.02	&0.88$\pm$0.03	&0.74$\pm$0.02		&679$\pm$62	&0.31$\pm$0.04 \\
&STMB 	        &0.49$\pm$0.03	&0.65$\pm$0.02		&0.64$\pm$0.04	&\textbf{0.77$\pm$0.03}		&703$\pm$47	&0.33$\pm$0.02 \\
&BAMB 	        &\textbf{0.30$\pm$0.02}	&\textbf{0.80$\pm$0.01}		&0.89$\pm$0.03	&\textbf{0.77$\pm$0.02}		&619$\pm$39	&0.33$\pm$0.02 \\
&EMB 	        &0.31$\pm$0.01	&0.79$\pm$0.01	&0.92$\pm$0.02	&0.73$\pm$0.01		&370$\pm$23	&0.16$\pm$0.01 \\
&EMB-II	        &0.31$\pm$0.01	&0.79$\pm$0.01		&0.92$\pm$0.02	&0.73$\pm$0.01		&360$\pm$20	&0.15$\pm$0.01 \\\hline
\multirow{7}{*}{Child}
&IAMB	        &0.15$\pm$0.02	&0.90$\pm$0.02		&0.95$\pm$0.03	&0.88$\pm$0.01		&\textbf{63$\pm$1}	&\textbf{0.03$\pm$0.00} \\
&MMMB 	        &0.05$\pm$0.02	&0.97$\pm$0.01		&0.96$\pm$0.02	&\textbf{0.99$\pm$0.01}		&897$\pm$25	&0.47$\pm$0.01 \\
&HITON-MB 	    &\textbf{0.04$\pm$0.03}	&\textbf{0.98$\pm$0.02}		&\textbf{0.97$\pm$0.03} &\textbf{0.99$\pm$0.01}		&499$\pm$16	&0.24$\pm$0.01 \\
&STMB 	        &0.17$\pm$0.04	&0.89$\pm$0.03		&0.84$\pm$0.04	&0.98$\pm$0.02		&374$\pm$35	&0.17$\pm$0.02 \\
&BAMB 	        &0.09$\pm$0.03	&0.95$\pm$0.02		&0.93$\pm$0.02	&0.98$\pm$0.02		&376$\pm$11	&0.19$\pm$0.01  \\
&EMB 	        &0.05$\pm$0.02	&0.97$\pm$0.02	    &\textbf{0.97$\pm$0.02}	&0.98$\pm$0.02		&205$\pm$8	&0.09$\pm$0.00 \\
&EMB-II	        &0.05$\pm$0.02	&0.97$\pm$0.02	    &\textbf{0.97$\pm$0.02}	&0.98$\pm$0.02		&204$\pm$8	&0.09$\pm$0.00 \\\hline
\multirow{7}{*}{Alarm10}
&IAMB	        &0.36$\pm$0.01	&0.75$\pm$0.01		&0.83$\pm$0.01	&0.74$\pm$0.00		&\textbf{1637$\pm$14}	&0.59$\pm$0.01 \\
&MMMB 	        &0.26$\pm$0.01	&0.82$\pm$0.01		&0.88$\pm$0.01	&0.81$\pm$0.00		&1926$\pm$45    &0.60$\pm$0.01 \\
&HITON-MB 	    &\textbf{0.25$\pm$0.01}	&\textbf{0.84$\pm$0.01}		&0.90$\pm$0.01	&0.82$\pm$0.00		&1714$\pm$11	&\textbf{0.44$\pm$0.00} \\
&STMB 	        &0.67$\pm$0.01	&0.48$\pm$0.01	&0.41$\pm$0.01	&\textbf{0.83$\pm$0.01}		&5049$\pm$39	&1.89$\pm$0.02 \\
&BAMB 	        &0.30$\pm$0.01	&0.80$\pm$0.00		&0.83$\pm$0.01	&0.82$\pm$0.00		&1802$\pm$12	&0.57$\pm$0.01\\
&EMB 	        &\textbf{0.25$\pm$0.01}	&0.83$\pm$0.01		&\textbf{0.91$\pm$0.01}	&0.81$\pm$0.00		&1924$\pm$7	    &0.50$\pm$0.02 \\
&EMB-II	        &\textbf{0.25$\pm$0.01}	&0.83$\pm$0.01		&\textbf{0.91$\pm$0.01}	&0.81$\pm$0.00		&1908$\pm$7	    &0.49$\pm$0.02 \\\hline
\multirow{7}{*}{Insurance10}
&IAMB	        &0.42$\pm$0.01	&0.71$\pm$0.01		&0.89$\pm$0.01	&0.66$\pm$0.00		&\textbf{1210$\pm$8	}    &\textbf{0.50$\pm$0.01} \\
&MMMB 	        &0.33$\pm$0.01	&0.78$\pm$0.01		&0.82$\pm$0.01	&\textbf{0.80$\pm$0.00}		&3274$\pm$45    &1.53$\pm$0.03 \\
&HITON-MB 	    &0.32$\pm$0.01	&0.78$\pm$0.01		&0.84$\pm$0.01	&\textbf{0.80$\pm$0.00}		&2348$\pm$18	&0.93$\pm$0.01 \\
&STMB 	        &0.77$\pm$0.01	&0.40$\pm$0.01		&0.30$\pm$0.01	&0.79$\pm$0.00		&6781$\pm$118	&3.36$\pm$0.07 \\
&BAMB 	        &0.34$\pm$0.01	&0.77$\pm$0.00		&0.80$\pm$0.01	&\textbf{0.80$\pm$0.00} &2541$\pm$22			&1.17$\pm$0.01 \\
&EMB 	        &\textbf{0.28$\pm$0.01}	&\textbf{0.81$\pm$0.00}      &\textbf{0.91$\pm$0.01}	&0.78$\pm$0.00	&2189$\pm$15		&0.78$\pm$0.01	 \\
&EMB-II	        &\textbf{0.28$\pm$0.01}	&\textbf{0.81$\pm$0.00}      &\textbf{0.91$\pm$0.01}	&0.78$\pm$0.00	&2122$\pm$14		&0.75$\pm$0.01	 \\\hline
\multirow{7}{*}{Child10}
&IAMB	        &0.24$\pm$0.01	&0.84$\pm$0.01		&0.87$\pm$0.01	&0.88$\pm$0.00		&\textbf{750$\pm$10}	&\textbf{0.31$\pm$0.00} \\
&MMMB 	        &0.10$\pm$0.01	&0.94$\pm$0.01		&0.91$\pm$0.01	&\textbf{0.99$\pm$0.00}		&1622$\pm$21&0.70$\pm$0.01 \\
&HITON-MB 	    &0.08$\pm$0.01	&\textbf{0.95$\pm$0.01}		&0.92$\pm$0.01	&\textbf{0.99$\pm$0.00}		&1194$\pm$11	&0.43$\pm$0.01 \\
&STMB 	        &0.56$\pm$0.02	&0.56$\pm$0.02		&0.45$\pm$0.02	&\textbf{0.99$\pm$0.00}		&2881$\pm$24	&1.26$\pm$0.01 \\
&BAMB 	        &0.24$\pm$0.01	&0.84$\pm$0.00		&0.76$\pm$0.01	&\textbf{0.99$\pm$0.00}		&1111$\pm$10	&0.46$\pm$0.01 \\
&EMB 	        &\textbf{0.07$\pm$0.01}	&\textbf{0.95$\pm$0.01}		&\textbf{0.94$\pm$0.01}	&\textbf{0.99$\pm$0.00}		&1062$\pm$8	    &0.33$\pm$0.00 \\
&EMB-II	        &\textbf{0.07$\pm$0.01}	&\textbf{0.95$\pm$0.01}		&\textbf{0.94$\pm$0.01}	&\textbf{0.99$\pm$0.00}		&1061$\pm$8	    &0.33$\pm$0.00 \\\hline
\multirow{7}{*}{Pigs}
&IAMB	        &0.42$\pm$0.00	&0.71$\pm$0.00		&0.62$\pm$0.00	&0.96$\pm$0.00		&2616$\pm$7	&\textbf{1.28$\pm$0.00} \\
&MMMB 	        &0.13$\pm$0.00	&0.92$\pm$0.00      &0.87$\pm$0.00	&1.00$\pm$0.00		&3.2e5$\pm$2.2e4  &2.2e2$\pm$1.7e1\\
&HITON-MB 	    &0.14$\pm$0.00	&0.92$\pm$0.00		&0.86$\pm$0.00	&1.00$\pm$0.00		&46956$\pm$454	&34.94$\pm$0.33 \\
&STMB 	        &0.82$\pm$0.01	&0.26$\pm$0.00	&0.18$\pm$0.01	&1.00$\pm$0.00		&45770$\pm$2746	&29.20$\pm$2.05 \\
&BAMB 	        &0.18$\pm$0.01	&0.89$\pm$0.01		&0.82$\pm$0.01  &\textbf{1.00$\pm$0.00}			&29097$\pm$201	&31.16$\pm$0.13  \\
&EMB 	        &\textbf{0.12$\pm$0.00}	&\textbf{0.93$\pm$0.00}		&\textbf{0.88$\pm$0.00}	&\textbf{1.00$\pm$0.00}		&8784$\pm$3405	&5.73$\pm$2.43 \\
&EMB-II	        &\textbf{0.12$\pm$0.00}	&\textbf{0.93$\pm$0.00}		&\textbf{0.88$\pm$0.00}	&\textbf{1.00$\pm$0.00}		&7335$\pm$209	&4.94$\pm$0.26 \\\hline
\multirow{7}{*}{Gene}
&IAMB	        &0.32$\pm$0.00	&0.79$\pm$0.00		&0.76$\pm$0.00	&0.89$\pm$0.00		&\textbf{3463$\pm$10}	&1.36$\pm$0.00 \\
&MMMB 	        &\textbf{0.25$\pm$0.00}	&\textbf{0.83$\pm$0.00}		&\textbf{0.77$\pm$0.00}	&0.94$\pm$0.00		&6035$\pm$48     &2.21$\pm$0.02\\
&HITON-MB 	    &\textbf{0.25$\pm$0.00}	&\textbf{0.83$\pm$0.00}		&\textbf{0.77$\pm$0.00}	&0.94$\pm$0.00		&4576$\pm$22	&1.44$\pm$0.00 \\
&STMB 	        &0.88$\pm$0.00	&0.18$\pm$0.00		&0.13$\pm$0.00	&\textbf{1.00$\pm$0.00}		&17282$\pm$268	&7.70$\pm$0.18\\
&BAMB 	        &0.39$\pm$0.00	&0.73$\pm$0.00		&0.64$\pm$0.00	&0.94$\pm$0.00		&4474$\pm$30	&1.72$\pm$0.02 \\
&EMB 	        &0.26$\pm$0.00	&0.82$\pm$0.00		&0.76$\pm$0.00	&0.94$\pm$0.00		&4486$\pm$9	    &1.28$\pm$0.00 \\
&EMB-II	        &0.26$\pm$0.00	&0.82$\pm$0.00	&0.76$\pm$0.00	&0.94$\pm$0.00		&4412$\pm$11	    &\textbf{1.27$\pm$0.00} \\
\hline
\end{tabular}
\end{table}

\tabcolsep 0.03in
\begin{table}
\tiny
\caption{Comparison of EMB with State-of-the-art MB Learning Algorithms on Eight Benchmark BNs (size=1,000)}\label{EMB1000}
\centering
\begin{tabular}{c||c||cccccc}
\hline
Network & Algorithm  & Distance($\downarrow$) & F1($\uparrow$) & Precision($\uparrow$)   & Recall($\uparrow$)  &CI Tests($\downarrow$) &Time($\downarrow$)\\
\hline\hline
\multirow{7}{*}{Alarm}
&IAMB	        &0.27$\pm$0.01 &0.81$\pm$0.01	&0.93$\pm$0.01	&0.76$\pm$0.01	&\textbf{120$\pm$2}	&\textbf{0.02$\pm$0.00} \\
&MMMB 	        &0.20$\pm$0.01 &0.87$\pm$0.01	&0.91$\pm$0.02	&\textbf{0.87$\pm$0.01}	&437$\pm$33	&0.09$\pm$0.00\\
&HITON-MB 	    &\textbf{0.16$\pm$0.01} &\textbf{0.90$\pm$0.01}	&0.95$\pm$0.02	&\textbf{0.87$\pm$0.01}	&315$\pm$12	&0.05$\pm$0.00\\
&STMB 	        &0.39$\pm$0.03 &0.72$\pm$0.02	&0.71$\pm$0.02	&0.85$\pm$0.02	&392$\pm$12	&0.06$\pm$0.00\\
&BAMB 	        &0.21$\pm$0.01 &0.86$\pm$0.01	&0.91$\pm$0.02	&0.86$\pm$0.01	&280$\pm$16	&0.04$\pm$0.00		 \\
&EMB 	        &0.18$\pm$0.02 &0.88$\pm$0.01	&\textbf{0.96$\pm$0.02}	&0.85$\pm$0.02	&297$\pm$14	&0.04$\pm$0.00\\
&EMB-II	    &0.18$\pm$0.02 &0.88$\pm$0.01	&\textbf{0.96$\pm$0.02}	&0.85$\pm$0.02	&285$\pm$13	&0.04$\pm$0.00\\\hline
\multirow{7}{*}{Insurance}
&IAMB	        &0.48$\pm$0.02 &0.66$\pm$0.01	&\textbf{0.92$\pm$0.03}	&0.56$\pm$0.01	&\textbf{86$\pm$2}	&\textbf{0.01$\pm$0.00}	\\
&MMMB 	        &\textbf{0.42$\pm$0.03} &\textbf{0.71$\pm$0.02}	&0.83$\pm$0.03	&0.66$\pm$0.02	&511$\pm$47	&0.12$\pm$0.01\\
&HITON-MB 	    &0.45$\pm$0.03 &0.69$\pm$0.02	&0.83$\pm$0.04	&0.65$\pm$0.01	&358$\pm$34	&0.06$\pm$0.01\\
&STMB 	        &0.59$\pm$0.03 &0.58$\pm$0.03	&0.58$\pm$0.06	&0.66$\pm$0.04	&1138$\pm$1277	&0.15$\pm$0.16\\
&BAMB 	        &0.45$\pm$0.02 &0.69$\pm$0.02	&0.76$\pm$0.04	&\textbf{0.68$\pm$0.01}	&404$\pm$51	&0.06$\pm$0.01		 \\
&EMB 	        &0.46$\pm$0.03 &0.68$\pm$0.03	&0.81$\pm$0.08	&0.64$\pm$0.02	&395$\pm$82	&0.06$\pm$0.01 \\
&EMB-II	    &0.46$\pm$0.03 &0.68$\pm$0.03	&0.81$\pm$0.08	&0.64$\pm$0.02	&371$\pm$76	&0.05$\pm$0.01 \\\hline
\multirow{7}{*}{Child}
&IAMB	        &0.27$\pm$0.03	&0.82$\pm$0.02		&0.94$\pm$0.03	&0.76$\pm$0.02		&\textbf{54$\pm$1}	&\textbf{0.01$\pm$0.00} \\
&MMMB 	        &0.22$\pm$0.03	&0.85$\pm$0.02		&0.89$\pm$0.04	&0.86$\pm$0.02		&823$\pm$85  &0.13$\pm$0.01\\
&HITON-MB 	    &0.20$\pm$0.03  &0.87$\pm$0.02	&0.90$\pm$0.03	&0.87$\pm$0.02	&469$\pm$53	&0.06$\pm$0.01\\
&STMB 	        &0.23$\pm$0.07  &0.85$\pm$0.05	&0.86$\pm$0.05	&0.87$\pm$0.04	&221$\pm$7	&0.04$\pm$0.00\\
&BAMB 	        &0.23$\pm$0.04  &0.85$\pm$0.03	&0.84$\pm$0.04	&\textbf{0.91$\pm$0.01}	&441$\pm$58	&0.05$\pm$0.01		\\
&EMB 	        &\textbf{0.17+0.03}	&\textbf{0.89+0.03}		&\textbf{0.94$\pm$0.03}	&0.87$\pm$0.02		&320$\pm$52	&0.04$\pm$0.01 \\
&EMB-II	    &\textbf{0.17+0.03}	&\textbf{0.89+0.03}		&\textbf{0.94$\pm$0.03}	&0.87$\pm$0.02		&287$\pm$39	&0.04$\pm$0.00 \\\hline
\multirow{7}{*}{Alarm10}
&IAMB	        &0.52$\pm$0.01	&0.63$\pm$0.01		&0.78$\pm$0.01	&0.60$\pm$0.01		&1355$\pm$11	&0.18$\pm$0.00 \\
&MMMB 	        &0.39$\pm$0.01	&0.73$\pm$0.01		&0.84$\pm$0.01	&\textbf{0.70$\pm$0.00}		&1579$\pm$17    &0.28$\pm$0.00\\
&HITON-MB 	    &\textbf{0.37$\pm$0.01}	&\textbf{0.75$\pm$0.01}		&0.86$\pm$0.01	&\textbf{0.70$\pm$0.01}		&1474$\pm$13	&0.20$\pm$0.00 \\
&STMB 	        &0.76$\pm$0.01	&0.42$\pm$0.01		&0.37$\pm$0.02	&\textbf{0.70$\pm$0.01}		&3668$\pm$29	&0.76$\pm$0.00 \\
&BAMB 	        &0.46$\pm$0.00	&0.68$\pm$0.00		&0.74$\pm$0.01	&\textbf{0.70$\pm$0.00}		&1551$\pm$16	&0.23$\pm$0.00 \\
&EMB 	        &0.38$\pm$0.01	&0.74$\pm$0.00		&\textbf{0.88$\pm$0.01}	&0.69$\pm$0.00		&1765$\pm$11	&0.23$\pm$0.00 \\
&EMB-II	    &0.38$\pm$0.01	&0.74$\pm$0.00		&\textbf{0.88$\pm$0.01}	&0.69$\pm$0.00		&1756$\pm$12	&0.23$\pm$0.00 \\\hline			
\multirow{7}{*}{Insurance10}
&IAMB	        &0.55$\pm$0.01 &0.61$\pm$0.01	&0.85$\pm$0.01	&0.53$\pm$0.00	&\textbf{963$\pm$5}	    &\textbf{0.13$\pm$0.00} \\
&MMMB 	        &0.49$\pm$0.01 &0.66$\pm$0.01	&0.70$\pm$0.01	&0.70$\pm$0.01	&2180$\pm$48	&0.40$\pm$0.01\\
&HITON-MB 	    &\textbf{0.45$\pm$0.01} &\textbf{0.68$\pm$0.01}	&\textbf{0.74$\pm$0.01}	&0.70$\pm$0.01	&1698$\pm$28	&0.25$\pm$0.00\\
&STMB 	        &0.75$\pm$0.01	&0.46$\pm$0.01 &0.33$\pm$0.01			&0.74$\pm$0.01		&1443$\pm$14	&0.21$\pm$0.01 \\
&BAMB 	        &0.53$\pm$0.01	&0.62$\pm$0.01		&0.60$\pm$0.01	&\textbf{0.74$\pm$0.01}		&2243$\pm$59	&0.34$\pm$0.01 \\
&EMB 	        &0.65$\pm$0.06	&0.53$\pm$0.05	&0.47$\pm$0.07	&0.71$\pm$0.01		&4333$\pm$1736	&0.57$\pm$0.23 \\
&EMB-II 	        &0.65$\pm$0.06	&0.53$\pm$0.05		&0.47$\pm$0.07	&0.71$\pm$0.01		&3966$\pm$1423	&0.55$\pm$0.23 \\\hline
\multirow{7}{*}{Child10}
&IAMB	        &0.40$\pm$0.01 &0.72$\pm$0.01	&0.84$\pm$0.01	&0.71$\pm$0.01	&\textbf{614$\pm$8}	    &\textbf{0.08$\pm$0.00} \\
&MMMB 	        &0.28$\pm$0.02 &0.81$\pm$0.01	&0.82$\pm$0.02	&0.86$\pm$0.01	&1757$\pm$43	&0.27$\pm$0.00\\
&HITON-MB 	    &0.25$\pm$0.02 &0.83$\pm$0.01   &0.84$\pm$0.02	&0.87$\pm$0.01	&1272$\pm$24	&0.17$\pm$0.00 \\
&STMB 	        &0.66$\pm$0.02 &0.48$\pm$0.02	&0.39$\pm$0.02	&0.85$\pm$0.01	&2186$\pm$41	&0.39$\pm$0.00\\
&BAMB 	        &0.47$\pm$0.01 &0.67$\pm$0.01	&0.58$\pm$0.01	&\textbf{0.90$\pm$0.01}	&1460$\pm$43	    &0.20$\pm$0.01	\\
&EMB 	        &\textbf{0.22$\pm$0.02} &\textbf{0.85$\pm$0.01}	&\textbf{0.87$\pm$0.02}	&0.88$\pm$0.01	&1225$\pm$10	&0.16$\pm$0.00 \\
&EMB-II	    &\textbf{0.22$\pm$0.02} &\textbf{0.85$\pm$0.01}	&\textbf{0.87$\pm$0.02}	&0.88$\pm$0.01	&1182$\pm$8	&0.16$\pm$0.00 \\\hline
\multirow{7}{*}{Pigs}
&IAMB	        &0.34$\pm$0.00	&0.79$\pm$0.00		&0.82$\pm$0.00	&0.84$\pm$0.00		&\textbf{1755$\pm$1}	&\textbf{0.22$\pm$0.00} \\
&MMMB 	        &0.15$\pm$0.01	&0.91$\pm$0.01		&0.85$\pm$0.01	&1.00$\pm$0.00		&197884$\pm$17879  &8.44$\pm$0.71\\
&HITON-MB 	    &0.12$\pm$0.01	&0.92$\pm$0.01		&0.88$\pm$0.01	&\textbf{1.00$\pm$0.00}		&47028$\pm$1667	&4.78$\pm$0.18 \\
&STMB 	        &0.85$\pm$0.00	&0.25$\pm$0.00		&0.15$\pm$0.00	&\textbf{1.00$\pm$0.00}		&25626$\pm$3234	&2.24$\pm$0.09 \\
&BAMB 	        &0.31$\pm$0.01	&0.80$\pm$0.01		&0.69$\pm$0.01	&\textbf{1.00$\pm$0.00}		&40466$\pm$4419	&11.22$\pm$1.55 \\
&EMB 	        &\textbf{0.11$\pm$0.01}	&\textbf{0.93$\pm$0.00}		&\textbf{0.89$\pm$0.01}	&\textbf{1.00$\pm$0.00}		&7541$\pm$99	&0.58$\pm$0.01 \\
&EMB-II	        &\textbf{0.11$\pm$0.01}	&\textbf{0.93$\pm$0.00}		&\textbf{0.89$\pm$0.01}	&\textbf{1.00$\pm$0.00}		&7466$\pm$193	&0.56$\pm$0.01 \\\hline
\multirow{7}{*}{Gene}
&IAMB	        &0.39$\pm$0.00	&0.73$\pm$0.00	&\textbf{0.79$\pm$0.00}	&0.79$\pm$0.00		&\textbf{2887$\pm$10}	&\textbf{0.36$\pm$0.00} \\
&MMMB 	        &0.28$\pm$0.00	&0.81$\pm$0.00		&0.75$\pm$0.00	&0.93$\pm$0.00		&4569$\pm$36	&0.75$\pm$0.01\\
&HITON-MB 	    &\textbf{0.25$\pm$0.01}	&\textbf{0.83$\pm$0.00}		&\textbf{0.79$\pm$0.00}	&0.93$\pm$0.00		&3918$\pm$26	&0.54$\pm$0.01 \\
&STMB 	        &0.86$\pm$0.00	&0.21$\pm$0.00		&0.14$\pm$0.00	&\textbf{0.99$\pm$0.00}		&10672$\pm$74	&2.50$\pm$0.01\\
&BAMB 	        &0.46$\pm$0.01	&0.67$\pm$0.00		&0.57$\pm$0.01	&0.94$\pm$0.00		&3817$\pm$52	&0.61$\pm$0.01 \\
&EMB 	        &\textbf{0.25$\pm$0.00}	&\textbf{0.83$\pm$0.00}		&0.78$\pm$0.00	&0.93$\pm$0.00		&4228$\pm$18	&0.57$\pm$0.00 \\
&EMB-II	        &\textbf{0.25$\pm$0.00}	&\textbf{0.83$\pm$0.00}	&0.78$\pm$0.00	&0.93$\pm$0.00		&4204$\pm$17	&0.57$\pm$0.00 \\
\hline
\end{tabular}
\end{table}
 \subsubsection{Experimental Results of EMB and Its Rivals}
%\textbf{ 2) Experimental Results of EMB and Its Rivals.}
 We evaluate the effectiveness of  the proposed EMB by comparing it with five state-of-the-art MB learning algorithms, including  IAMB \cite{IAMB}, MMMB \cite{TsamardinosAS03}, HITON-MB \cite{AliferisTS03}, STMB \cite{GaoJ17} and BAMB \cite{BAMB}.
\par For MB learning algorithms, we use precision, recall, F1,   distance \cite{PCMB} \cite{GaoJ17}, CI tests, and running time (in seconds) as the evaluation metrics.
\begin{itemize}
\item \emph{Precision}: The number of true positives in the output (i.e., the variables in the output belonging to the true MB  of a target variable in a test DAG) divided by the number of variables in the output of an algorithm.
\item \emph{Recall}:   The number of true positives in the output divided by the number of true positives (the
number of the true MB  of a target variable in a test DAG).
\item \emph{F1 = 2 * Precision * Recall / (Precision + Recall)}: The \emph{F1} score is the harmonic average of the precision and recall,  where \emph{F1} = 1 is the best case (perfect precision and recall) while \emph{F1} = 0 is the worst case.
%In the best case (perfect precision and recall),  \emph{F1} = 1. While in the worst case,  \emph{F1} = 0.}
    \item \emph{Distance} = $\sqrt{(1-Precision)^2 +(1-Recall)^2}$ \cite{PCMB} \cite{GaoJ17},  where \emph{distance} = 0 is the best case (perfect precision and recall) while \emph{distance} = $\sqrt{2}$ is the worst case.
  %  In the best case (perfect precision and recall),  \emph{distance} = 0, while in the worst case,  \emph{distance} = $\sqrt{2}$.
\item \emph{Efficiency}: The number of CI tests and the running time (in seconds) are used to measure the efficiency.
\end{itemize}

\par Tables \ref{EMB5000}-\ref{EMB1000} report the experimental results of EMB and its rivals. From the experimental results, we have the following observations.

\par \textbf{EMB \emph{versus}  IAMB, MMMB and HITON-MB.} IAMB is  much faster  than EMB, while IAMB is significantly worse than  EMB in terms of distance, F1, precision and recall on average.  Compared with MMMB and HITON-MB, EMB is more efficient. EMB needs much less CI tests than MMMB and HITON-MB. In addition, using 5,000 data samples, EMB is 2 times faster than MMMB and 1.2 times faster than  HITON-MB on  average. Moreover, EMB is more accurate than MMMB. In particular, using 5,000 data samples, EMB achieves  the lowest distance and the highest F1 values on Alarm, Insurance10, Child10 and Pigs. Using 1,000 data samples, EMB  obtains the lowest distance and the highest F1 values on Child, Child10, Pigs and Gene. Overall, EMB is superior to  IAMB, MMMB and HITON-MB.

%EMB, STMB  and BAMB do not need the symmetry constraint to learn the PC of a given variable.

\par \textbf{EMB \emph{versus}   STMB and BAMB.}  From Tables \ref{EMB5000}-\ref{EMB1000}, we note that STMB achieves higher recall values than EMB, but on the distance, F1 and precision metrics, STMB is significantly worse than EMB. Compared with BAMB, EMB achieves lower distance and higher F1 values. Additionally, the number of CI tests of EMB is less than STMB and BAMB. More specifically, using 5,000 data samples, EMB is 3.5 times faster than STMB and 1.5 times faster than  BAMB on  average. In a word, EMB performs better than BAMB and STMB in both efficiency and accuracy.

\par \textbf{EMB \emph{versus} EMB-II.} EMB is inferior to EMB-II. Compared with EMB,  EMB-II uses less CI tests for MB learning while achieving the same performance as measured by the distance, F1, precision and recall matrics, which indicates the efficiency of EMB-II.

\par  EMB is able to effectively find the MB of a target variable, and simultaneously distinguish parents and children of the target variable.  We note  that EMB uses less CI tests for MB learning, which can reduce the impact of unreliable CI tests. In summary,  EMB is helpful to learn the local causal structure.

\par To further demonstrate the effectiveness of EMB, we propose three variants of ELCS, which are referred to as ELCS-M, ECLS-S, ELCS-B, respectively. ELCS-M uses MMMB to replace EMB in ELCS. ECLS-S and ELCS-B use STMB and BAMB to replace EMB in ELCS, respectively. Table \ref{caseLocal5000} reports the experimental results of ECLS and its three variants on the eight BNs with 5,000 data examples. From the table, we observe that ELCS  outperforms these three rivals in terms of both CI tests and running time, which implies the efficiency of ELCS. We also note that ELCS achieves better ArrP, ArrR, SHD and FDR values than that of these three rivals, which shows the effectiveness of ELCS.

\par EMB has achieved encouraging performance, but it still suffers from the following two drawbacks. First, to improve the efficiency of EMB while maintaining competitive performance, EMB chooses to remove non-spouses within \emph{\textbf{U}}$\setminus$\{\emph{T}\}$\setminus$\textbf{\emph{PC}}$_{\emph{T}}$  of the target variable \emph{T} as early as possible at lines 2-16 in Algorithm \ref{FindSpouses}. The size of the conditioning sets \emph{\textbf{Temp}} (line 9 in Algorithm \ref{FindSpouses}) and \{\emph{Y}\} $\cup$ \textbf{\emph{Sep}}$_{\emph{T}}$\{\emph{X}\} (line 11 in Algorithm \ref{FindSpouses}) may be large, when the size of data samples is finite, the results of CI tests may be unreliable, leading to poor performance of EMB. Second, the performance of EMB is limited by HITON-PC  that is used for PC learning. If HITON-PC has a lower quality of PC learning, inaccurate MBs will be learnt by EMB.

\tabcolsep 0.03in
\begin{table}
\tiny
\caption{Comparison of ELCS with ELCS-M, ECLS-S and ELCS-B on Eight Benchmark BNs (size=5,000)}\label{caseLocal5000}
\centering
\begin{tabular}{c||c||cccccc}\hline
Network & Algorithm  & ArrP($\uparrow$)   & ArrR($\uparrow$) &SHD($\downarrow$)   &FDR($\downarrow$) &CI Tests($\downarrow$) &Time($\downarrow$) \\\hline\hline
\multirow{5}{*}{Alarm}
&ELCS-M 	        &0.71$\pm$0.02 &0.59$\pm$0.03 &1.14$\pm$0.06 &0.26$\pm$0.03 &1627$\pm$186 &0.62$\pm$0.08 \\
&ELCS-S 	        &0.79$\pm$0.03 &0.72$\pm$0.05 &0.88$\pm$0.14 &0.13$\pm$0.04 &1402$\pm$78 &0.57$\pm$0.03 \\
&ELCS-B 	        &0.78$\pm$0.03 &0.66$\pm$0.05 &0.92$\pm$0.09 &0.26$\pm$0.03 &778$\pm$46 &0.30$\pm$0.02 \\
&ELCS 	        &\textbf{0.86$\pm$0.01}	&\textbf{0.81$\pm$0.01}		&\textbf{0.44$\pm$0.06}	&\textbf{0.07$\pm$0.02}		&648$\pm$55	    &0.20$\pm$0.02 \\
&ELCS-II	    &\textbf{0.86$\pm$0.01}	&\textbf{0.81$\pm$0.01}		&\textbf{0.44$\pm$0.06}	&\textbf{0.07$\pm$0.02}		&\textbf{607$\pm$52}	    &\textbf{0.19$\pm$0.02} \\\hline
\multirow{5}{*}{Insurance}
&ELCS-M 	        &0.76$\pm$0.03 &0.59$\pm$0.03 &1.87$\pm$0.15 &0.31$\pm$0.04 &6976$\pm$1183 &3.28$\pm$0.62 \\
&ELCS-S 	        &0.67$\pm$0.05 &0.50$\pm$0.05 &2.50$\pm$0.25 &0.37$\pm$0.04 &2653$\pm$476 &1.39$\pm$0.25 \\
&ELCS-B 	        &0.67$\pm$0.02 &0.44$\pm$0.02 &2.26$\pm$0.10 &0.41$\pm$0.01 &3182$\pm$447 &1.78$\pm$0.27 \\
&ELCS 	        &\textbf{0.85$\pm$0.04}	&\textbf{0.69$\pm$0.04}		&\textbf{1.61$\pm$0.06}	&\textbf{0.18$\pm$0.05}		&1686$\pm$276	&\textbf{0.75$\pm$0.12} \\
&ELCS-II 	    &\textbf{0.85$\pm$0.04}	&\textbf{0.69$\pm$0.04}		&\textbf{1.61$\pm$0.06}	&\textbf{0.18$\pm$0.05}		&\textbf{1637$\pm$275}	&\textbf{0.75$\pm$0.12} \\\hline
\multirow{5}{*}{Child}
&ELCS-M 	        &0.68$\pm$0.11 &0.56$\pm$0.12 &1.18$\pm$0.31 &0.06$\pm$0.04 &8897$\pm$1247 &4.43$\pm$0.62 \\
&ELCS-S 	        &\textbf{0.81$\pm$0.07} &\textbf{0.72$\pm$0.09} &\textbf{0.75$\pm$0.18} &0.18$\pm$0.07 &2451$\pm$516 &1.28$\pm$0.28 \\
&ELCS-B	        &0.75$\pm$0.03 &0.65$\pm$0.05 &0.93$\pm$0.14 &0.26$\pm$0.05 &2252$\pm$195 &1.16$\pm$0.10 \\
&ELCS 	        &0.71$\pm$0.12	&0.61$\pm$0.16		&1.08$\pm$0.36	&0.09$\pm$0.08		&2093$\pm$287	&\textbf{0.93$\pm$0.10} \\
&ELCS-II 	    &0.71$\pm$0.12	&0.61$\pm$0.16		&1.08$\pm$0.36	&\textbf{0.09$\pm$0.08}		&\textbf{2087$\pm$287}	&\textbf{0.93$\pm$0.10} \\\hline
\multirow{5}{*}{Alarm10}
&ELCS-M 	        &0.77$\pm$0.01 &0.59$\pm$0.02 &1.60$\pm$0.07 &0.19$\pm$0.02 &10570$\pm$1207 &3.17$\pm$0.39 \\
&ELCS-S 	        &0.69$\pm$0.01 &0.46$\pm$0.01 &1.65$\pm$0.03 &0.32$\pm$0.01 &8429$\pm$237 &3.66$\pm$0.15 \\
&ELCS-B 	        &0.68$\pm$0.01 &0.44$\pm$0.00 &1.85$\pm$0.02 &0.46$\pm$0.01 &7223$\pm$170 &2.09$\pm$0.03 \\
&ELCS	        &\textbf{0.83$\pm$0.01}	&\textbf{0.68$\pm$0.02}		&\textbf{1.26$\pm$0.07}	&\textbf{0.14$\pm$0.02}		&\textbf{6893$\pm$483}	&\textbf{1.77$\pm$0.12}\\
&ELCS-II	    &\textbf{0.83$\pm$0.01}	&\textbf{0.68$\pm$0.02}		&\textbf{1.26$\pm$0.07}	&\textbf{0.14$\pm$0.02}		&6916$\pm$480	&\textbf{1.77$\pm$0.12}\\\hline
\multirow{5}{*}{Insurance10}
&ELCS-M	        &0.75$\pm$0.01 &0.61$\pm$0.01 &1.85$\pm$0.04 &0.33$\pm$0.01 &14461$\pm$2927 &6.35$\pm$1.27 \\
&ELCS-S 	        &0.60$\pm$0.01 &0.38$\pm$0.01 &2.44$\pm$0.02 &0.51$\pm$0.01 &10338$\pm$533 &5.78$\pm$0.55 \\
&ELCS-B 	        &0.64$\pm$0.01 &0.42$\pm$0.01 &2.31$\pm$0.06 &0.47$\pm$0.01 &\textbf{7556$\pm$224} &\textbf{3.55$\pm$0.11} \\
&ELCS 	        &\textbf{0.80$\pm$0.02}	&\textbf{0.67$\pm$0.02}		&\textbf{1.75$\pm$0.11}	&\textbf{0.23$\pm$0.01}		&10809$\pm$1528	&3.92$\pm$0.55\\
&ELCS-II 	    &\textbf{0.80$\pm$0.02}	&\textbf{0.67$\pm$0.02}		&\textbf{1.75$\pm$0.11}	&\textbf{0.23$\pm$0.01}		&10605$\pm$1499	&3.91$\pm$0.55\\\hline
\multirow{5}{*}{Child10}
&ELCS-M 	        &0.80$\pm$0.02 &0.75$\pm$0.03 &0.75$\pm$0.08 &0.16$\pm$0.02 &13438$\pm$1388 &5.80$\pm$0.58 \\
&ELCS-S 	        &0.66$\pm$0.01 &0.48$\pm$0.02 &1.17$\pm$0.03 &0.45$\pm$0.02 &15249$\pm$2335 &6.86$\pm$0.70 \\
&ELCS-B 	        &0.70$\pm$0.01 &0.52$\pm$0.02 &1.05$\pm$0.05 &0.47$\pm$0.02 &13880$\pm$1187 &4.28$\pm$0.05 \\
&ELCS 	        &\textbf{0.83$\pm$0.05}	&\textbf{0.76$\pm$0.07}		&\textbf{0.73$\pm$0.20}	&\textbf{0.14$\pm$0.03}		&13129$\pm$2613	&\textbf{3.98$\pm$0.79}\\
&ELCS-II	        &\textbf{0.83$\pm$0.05}	&\textbf{0.76$\pm$0.07}		&\textbf{0.73$\pm$0.20}	&\textbf{0.14$\pm$0.03}		&\textbf{13104$\pm$2605}	&\textbf{3.98$\pm$0.79}\\\hline
\multirow{5}{*}{Pigs}
&ELCS-M 	        &-	&-		&-	&-		&-	&-  \\
&ELCS-S 	        &-	&-		&-	&-		&-	&-  \\
&ELCS-B	        &-	&-		&-	&-		&-	&-  \\
&ELCS	        &\textbf{0.91$\pm$0.00}	&\textbf{0.99$\pm$0.00}		&0.42$\pm$0.02	&\textbf{0.15$\pm$0.01}		&13374$\pm$8660	&8.91$\pm$6.84 \\
&ELCS-II 	        &\textbf{0.91$\pm$0.00}	&\textbf{0.99$\pm$0.00}		&\textbf{0.42$\pm$0.02}	&\textbf{0.15$\pm$0.01}		&\textbf{11467$\pm$5659} &\textbf{8.38$\pm$5.12} \\\hline
\multirow{5}{*}{Gene}
&ELCS-M 	        &-	&-		&-	&-		&-	&-  \\
&ELCS-S 	        &-	&-		&-	&-		&-	&-  \\
&ELCS-B 	        &-	&-		&-	&-		&-	&-  \\
&ELCS 	            &\textbf{0.76$\pm$0.01}	&\textbf{0.79$\pm$0.01}		&\textbf{0.79$\pm$0.03}	&\textbf{0.32$\pm$0.01}		&36950$\pm$7876	&11.03$\pm$2.35 \\
&ELCS-II 	        &\textbf{0.76$\pm$0.01}	&\textbf{0.79$\pm$0.01}		&\textbf{0.79$\pm$0.03}	&\textbf{0.32$\pm$0.01}		&\textbf{36051$\pm$7696}	&\textbf{11.02$\pm$2.08} \\\hline
\end{tabular}
\end{table}

\section{Conclusion}
\label{conclusion}
 A new local causal structure learning algorithm (ELCS) has been proposed in this paper, which reduces the search space in distinguishing parents from children of a  target variable of interest. Specifically, ELCS makes  use of the N-structures  to distinguish  parents  from children of the target variable during learning the MB of the target variable. Furthermore, to combine MB learning with the N-structures to infer  edge directions between the target variable and its PC, we design an effective MB discovery  subroutine (EMB). We theoretically analyze the correctness of ELCS.  Extensive experimental results on benchmark BNs indicate that  ELCS not only improves the efficiency for learning the local causal structure, but also  achieves better performance in accuracy. In future, we plan to extend the ELCS algorithm for global causal structures learning and robust machine learning.

\ifCLASSOPTIONcaptionsoff
  \newpage
\fi

% trigger a \newpage just before the given reference
% number - used to balance the columns on the last page
% adjust value as needed - may need to be readjusted if
% the document is modified later
%\IEEEtriggeratref{8}
% The "triggered" command can be changed if desired:
%\IEEEtriggercmd{\enlargethispage{-5in}}

% references section

% can use a bibliography generated by BibTeX as a .bbl file
% BibTeX documentation can be easily obtained at:
% http://mirror.ctan.org/biblio/bibtex/contrib/doc/
% The IEEEtran BibTeX style support page is at:
% http://www.michaelshell.org/tex/ieeetran/bibtex/
%\bibliographystyle{IEEEtran}
% argument is your BibTeX string definitions and bibliography database(s)
%\bibliography{IEEEabrv,../bib/paper}
%
% <OR> manually copy in the resultant .bbl file
% set second argument of \begin to the number of references
% (used to reserve space for the reference number labels box)

\bibliographystyle{IEEEtran}
\bibliography{Refs}

\begin{IEEEbiography}[{\includegraphics[width=1in,height=1.25in,clip,keepaspectratio]{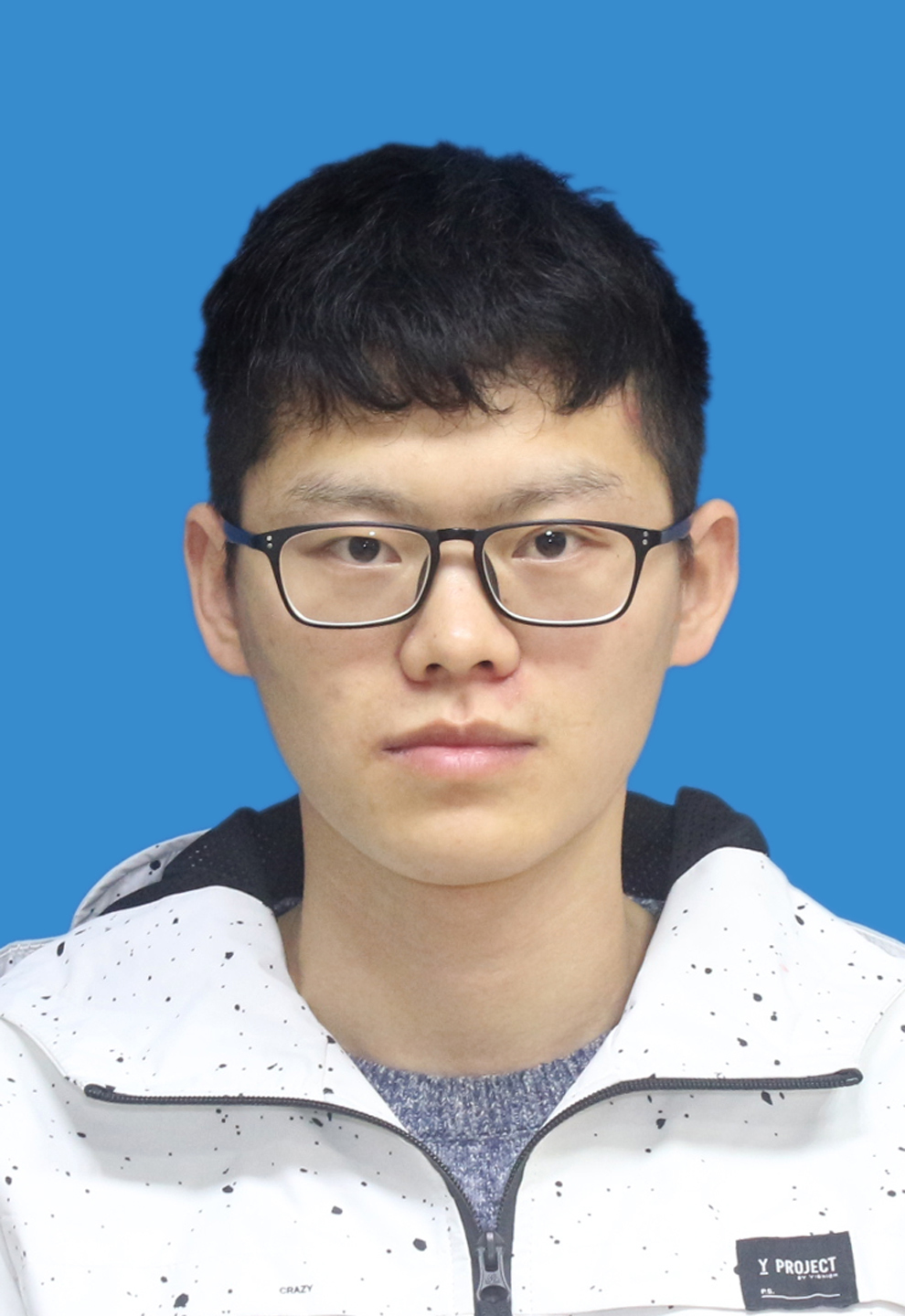}}]{Shuai Yang} is currently working toward the Ph.D. degree at  the School of Computer Science and Information Engineering, Hefei University of Technology, China. He received his B.S. and M.S. degrees in Computer Science from Hefei
University of Technology in 2016 and 2019, respectively. His main research interests include domain adaptation and  causal discovery.
\end{IEEEbiography}

\begin{IEEEbiography}[{\includegraphics[width=1in,height=1.25in,clip,keepaspectratio]{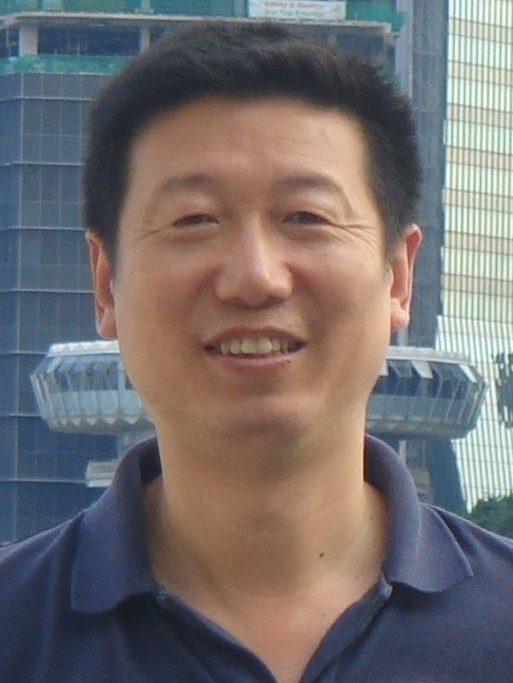}}]{Hao Wang} received the B.S. degree from the Department of Electrical Engineering and Automation, Shanghai Jiao Tong University, Shanghai, China, in 1984, and the M.S. and Ph.D. degrees in Computer Science from the Hefei University of Technology, Hefei, China, in 1989 and 1997, respectively. He is a professor with the School of Computer Science and Information Engineering, Hefei University of Technology. His current research interests include artificial intelligence and robotics and knowledge engineering.
\end{IEEEbiography}

\begin{IEEEbiography}[{\includegraphics[width=1in,height=1.25in,clip,keepaspectratio]{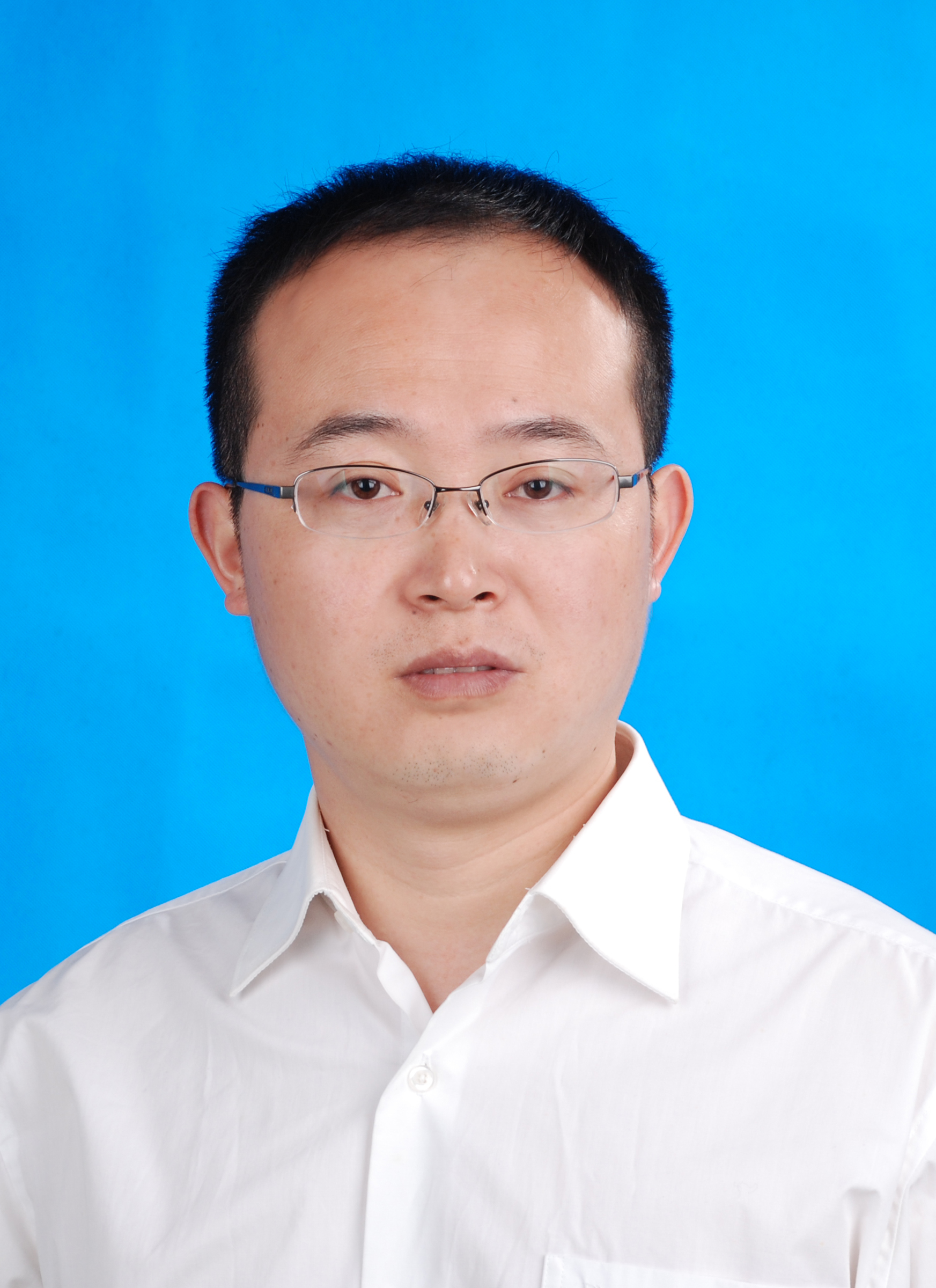}}]{Kui Yu} received his Ph.D. degree in Computer Science in 2013 from  Hefei University of Technology, China. He is currently a professor with the School of Computer Science and Information Engineering, Hefei University of Technology, China. From 2015 to 2018, he was a research fellow of Computer Science at STEM of the University of South Australia (UniSA), Australia. From 2013 to 2015, he was a postdoctoral fellow with the School of Computing Science of Simon Fraser University, Canada. His main research interests include causal discovery and machine learning.
\end{IEEEbiography}

\begin{IEEEbiography}[{\includegraphics[width=1in,height=1.25in,clip,keepaspectratio]{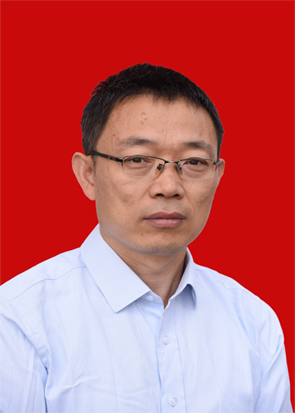}}]{Fuyuan Cao}  received the M.S. and Ph.D. degrees in
Computer Science from Shanxi University, Taiyuan, China, in 2004 and 2010, respectively. He is currently a professor with the School of Computer
and Information Technology, Shanxi University, China. His current research interests include machine learning  and clustering analysis.
\end{IEEEbiography}
%\begin{IEEEbiography}[{\includegraphics[width=1in,height=1.25in,clip,keepaspectratio]{Cao}}]{Fuyuan Cao}  received the M.S. and Ph.D. degrees in computer science from Shanxi University, Taiyuan, China, in 2004 and 2010, respectively. He is currently a Professor with the School of Computer and Information Technology, Shanxi University. His current research interests include data mining and machine learning, with a focus on clustering analysis.
%\end{IEEEbiography}

\begin{IEEEbiography}[{\includegraphics[width=1in,height=1.25in,clip,keepaspectratio]{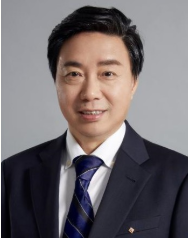}}]{Xindong Wu}(Fellow, IEEE) received the Ph.D. degree in Artificial Intelligence from University of Edinburgh, Britain, in 1993. He is currently a Chang Jiang Scholar with the School of Computer Science and Information Engineering, Hefei University of Technology, China, and also a Chief Scientist with the Mininglamp Academy of Sciences, Mininglamp Technology, Beijing, China. His research interests include data mining, knowledge-based systems, and web information exploration. He is a Fellow of the AAAS. He is also the Steering Committee Chair of the IEEE International Conference on Data Mining (ICDM), the Editor-in-Chief of Knowledge and Information Systems and of Springer book series, Advanced Information and Knowledge Processing (AIKP).
\end{IEEEbiography}

%
%%\bibitem{IEEEhowto:kopka}
%%H.~Kopka and P.~W. Daly, \emph{A Guide to \LaTeX}, 3rd~ed.\hskip 1em plus
%%  0.5em minus 0.4em\relax Harlow, England: Addison-Wesley, 1999.
%\bibitem{Blitzer2006}
%John Blitzer, Ryan T. McDonald, \&Fernando Pereira. (2006).
%\newblock Domain Adaptation with Structural Correspondence Learning.
%\newblock \emph{Proceedings of the 2006 Conference on Empirical Methods in Natural Language Processing} (pp.120-128). Stroudsburg: ACL.
%

% biography section
%
% If you have an EPS/PDF photo (graphicx package needed) extra braces are
% needed around the contents of the optional argument to biography to prevent
% the LaTeX parser from getting confused when it sees the complicated
% \includegraphics command within an optional argument. (You could create
% your own custom macro containing the \includegraphics command to make things
% simpler here.)
%\begin{IEEEbiography}[{\includegraphics[width=1in,height=1.25in,clip,keepaspectratio]{mshell}}]{Michael Shell}
% or if you just want to reserve a space for a photo:

% You can push biographies down or up by placing
% a \vfill before or after them. The appropriate
% use of \vfill depends on what kind of text is
% on the last page and whether or not the columns
% are being equalized.

%\vfill

% Can be used to pull up biographies so that the bottom of the last one
% is flush with the other column.
%\enlargethispage{-5in}

% that's all folks

\end{sloppypar}
\end{document}